%% file: main.tex
\documentclass{article}

\usepackage[utf8]{inputenc}
\DeclareUnicodeCharacter{3002}{.}

\usepackage{microtype}
\usepackage{graphicx}
\usepackage{subcaption}
\usepackage{booktabs} 
\usepackage{xspace}
\usepackage{tcolorbox}
\usepackage{multirow}
\usepackage{multicol}
\usepackage{graphicx}
\usepackage{makecell} 

\usepackage{hyperref}

\usepackage[preprint]{icml2026}

\usepackage{amsmath}
\usepackage{amssymb}
\usepackage{mathtools}
\usepackage{amsthm}

\usepackage[capitalize,noabbrev]{cleveref}

\theoremstyle{plain}
\newtheorem{theorem}{Theorem}[section]
\newtheorem{proposition}[theorem]{Proposition}

\theoremstyle{definition}

\newtheorem{assumption}[theorem]{Assumption}
\theoremstyle{remark}

\newcommand{\methodname}{{\small \textsf{HyperLens}}\xspace}

\newcommand{\para}[1]{\noindent\textbf{#1}}

\usepackage[textsize=tiny]{todonotes}

\icmltitlerunning{HyperLens: Quantifying Cognitive Effort in LLMs with Fine-grained Confidence Trajectory}

\begin{document}

\twocolumn[
  \icmltitle{HyperLens: Quantifying Cognitive Effort in LLMs with Fine-grained Confidence Trajectory}



  \icmlsetsymbol{cor}{*}

  \begin{icmlauthorlist}
    \icmlauthor{Chengda Lu}{iiis,qizhi}
    \icmlauthor{Xiaoyu Fan}{iiis,qizhi,cor}
    \icmlauthor{Wei Xu}{iiis,qizhi,cor}
  \end{icmlauthorlist}

  \icmlaffiliation{iiis}{IIIS, Tsinghua University}
  \icmlaffiliation{qizhi}{Shanghai Qi Zhi Institute}
  
  \icmlcorrespondingauthor{Xiaoyu Fan}{fanxy98@mail.tsinghua.edu.cn}
  \icmlcorrespondingauthor{Wei Xu}{weixu@tsinghua.edu.cn}

  \icmlkeywords{Machine Learning, ICML}

  \vskip 0.3in
]



\printAffiliationsAndNotice{* Corresponding authors.}  

\begin{abstract}
    While Large Language Models (LLMs) achieve strong performance across diverse tasks, their inference dynamics remain poorly understood because of the limited resolution of existing analysis tools. In this work, we identify an intrinsic magnification mechanism in transformer architectures: deeper layers inherently magnify the small changes of layer-wise confidence, providing a fine-grained confidence trajectory. 
    Building on this insight, we introduce \methodname, a high-resolution probe designed to trace confidence trajectories and quantify the cognitive effort during inference. Across $8$ LLMs and $8$ datasets, \methodname reveals a consistent divergence in confidence trajectories that separates complex from simple tasks. We abstract this pattern into a quantitative cognitive effort metric. Our analysis reveals a fundamental principle: complex tasks consistently require higher cognitive effort. Finally, we provide a mechanistic diagnosis of a common side effect of standard Supervised Fine-Tuning (SFT): it can reduce cognitive effort and consequently degrade performance on in-domain tasks.
\end{abstract}

\input{src/intro}
\input{src/related_work}
\input{src/theory}
\input{src/method}
\input{src/evaluation}
\input{src/conclusion}







\bibliography{custom}
\bibliographystyle{icml2026}

\newpage
\appendix
\onecolumn
\input{src/appendix}

\end{document}

%% file: src/intro.tex
\section{Introduction}
\label{sec:intro}

While Large Language Models (LLMs) exhibit remarkable capabilities, understanding their \textit{inference dynamics}, i.e., how internal states evolve, is crucial for real-world robustness. A prevailing paradigm interpreting the dynamics involves monitoring the \textit{evolution of confidence} via layer-wise projections~\cite{nostalgebraist2020logitlens,schuster2022confident,wang2025logitlens4llms,belrose2023eliciting}. This signal is proven effective for various tasks like hallucination and jailbreak detection~\cite{chuang2023dola, lin2026alert}, and early-exit strategies~\cite{xin2020deebert,chen2023ee}.

\begin{figure*}[h]
    \centering
    \includegraphics[width=0.85\textwidth]{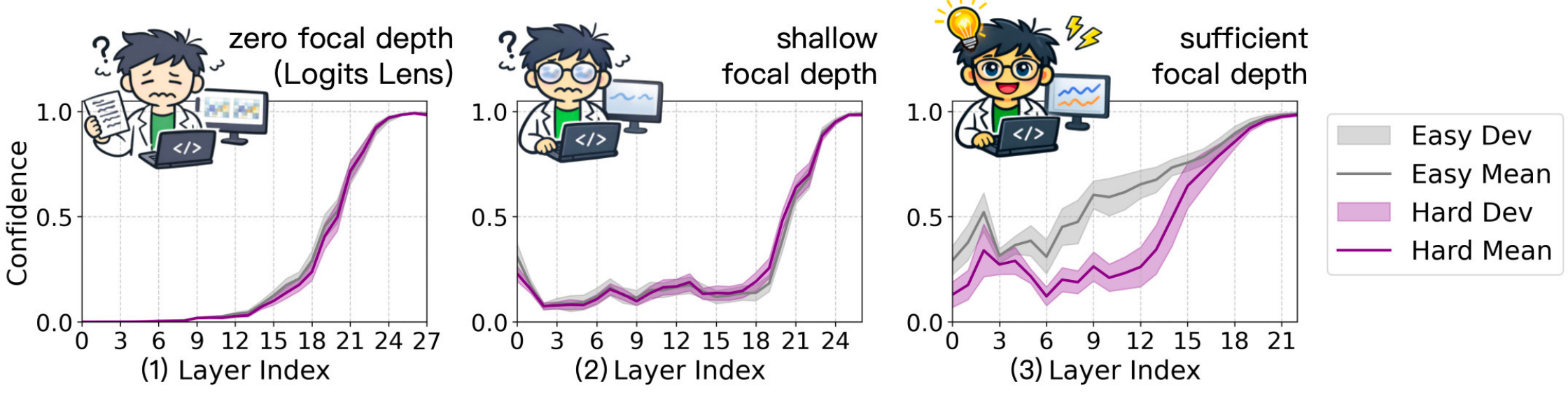}
    \caption{The confidence trajectory of Qwen2.5-7B with different focal depths on easy (CoNaLa~\cite{yin2018learning}) versus hard (APPS~\cite{hendrycks2021measuring_code}) coding tasks. The confidence of three figures is derived with (1) zero focal depth, which is equivalent to Logit Lens; (2) shallow focal depth with one single layer; and (3) sufficient focal depth with five layers.
        } 
    \vspace{-.1in}
    \label{fig:magnify_m}
\end{figure*}

Intuitively, the trajectory of confidence should reflect the model's \textit{cognitive effort}. Specifically, for trivial tasks (e.g., knowledge retrieval) where the answer is readily apparent, the internal confidence should converge swiftly, whereas complex tasks (e.g., hard math problems) necessitate a longer low-confidence ``struggle'' phase to reach the solution.
However, observations using current probing methods fail to validate this intuition, frequently producing indistinguishable trajectories regardless of task difficulty. 

We attribute this failure to the limited \textit{resolution} of existing probes, which limits the utility to only the most obvious anomalies (e.g., sudden spikes / collapses or stable changes).
Existing probes primarily rely on static projections~\cite{alain2016understanding, ghandeharioun2024patchscopes,pal2023future}, most notably the Logit Lens~\cite{alain2016understanding}, which decodes hidden states directly through the last unembedding layer.
These methods rely on a strong assumption that intermediate representations are linearly aligned with the output space. Unfortunately, because of the \emph{anisotropy}~\cite{ethayarajh2019contextual,timkey2021all} and \emph{representation drift}~\cite{geva2021transformer,voita2019analyzing}, these methods function as a \emph{myopic observer}: without proper focal correction, they fail to resolve latent signals effectively. 
To address this limitation, another line of work, like Tuned Lens~\cite{belrose2023eliciting} and others~\cite{cunningham2023sparse, rajamanoharan2024improving, ghandeharioun2024patchscopes} introduces an external \emph{learned adapter} to correct the projection misalignment. However, they inevitably act as a \emph{biased distortion filter}: the external training introduces systematic distortions to the original signal~\cite{hewitt2019designing,voita2020information}. 

Unlike the current myopic or biased tools, we identify an inherent self-magnifying mechanism within transformer layers (Section~\ref{sec:theory}). We prove that instead of directly decoding the hidden state, propagating it through a sequence of future layers before projection magnifies the derived confidence. We term the number of the future layers as \emph{focal depth}.

Based on this foundation, we propose a high-resolution probe, \methodname, which reveals a critical phenomenon aligning with our intuition: \emph{deepening the focal depth magnifies confidence, with the magnitude constrained by the intrinsic difficulty of the task}. 
As Figure~\ref{fig:magnify_m} illustrates, under zero focal depth (i.e., equivalent to Logit Lens), the confidence trajectories for easy and hard tasks are indistinguishable across all layers. Increasing the focal depth by 1 introduces a global magnification, while the trajectories remain entangled. 
Notably, when we increase the focal depth to a sufficient level, the confidence trajectories for easy and hard tasks are clearly separated. The easy task exhibits a more rapid increase, whereas the hard task remains at a low-confidence state for much longer.
This divergence indicates that sufficient magnification makes the model's internal cognitive effort observable.
To quantify this, we introduce the \emph{refinement area ($\Omega$)}, defined as the area \emph{above} the magnified confidence trajectory, serving as a precise measure of cognitive effort required to generate the final output. Intuitively, harder tasks require more cognitive effort to generate the final output, therefore have a higher $\Omega$.
For instance, the $\Omega$ of Qwen2.5-7B increases significantly from $1.78$ (Easy) to $3.05$ (Hard) in code tasks (Figure~\ref{fig:magnify_m}~(3)). 

Comprehensive evaluations across $8$ models and $8$ datasets (see Section~\ref{sec:empirical}) confirm that task difficulty consistently correlates with higher $\Omega$.
Beyond robust measurement, \methodname provides novel insights into the side effects of \emph{Supervised Fine-Tuning (SFT)} (Section~\ref{sec:SFT}). We uncover a ``blind confidence'' phenomenon: simple SFT can cause abnormal decreases in $\Omega$ on in-domain tasks, thereby harming the performance. This suggests that the model learns to shortcut the low-confidence ``struggle'' phase, jumping to obtaining the output without sufficient consideration. 

In summary, the contributions include: 

(1) We propose \methodname, a high-resolution probe that visualizes and quantifies the model's cognitive effort. Utilizing \methodname, we clearly separate the confidence trajectories of easy and hard tasks, revealing that complex tasks necessitate a longer low-confidence phase.

(2) We formally characterize the transformer's inherent self-magnifying mechanism, proving that increasing focal depth strictly magnifies confidence, thereby ensuring a high-resolution view.

(3) By applying \methodname, we uncover that simple SFT causes a ``blind confidence'' problem by cutting the cognitive effort when facing in-domain tasks, providing an interpretability explanation for performance degradation often observed in SFT models.

%% file: src/related_work.tex
\section{Related Works}\label{sec:related} 

\para{Inference dynamics interpretation} focuses on theoretically understanding the evolution of internal states. Intuitively, the inference includes two phases: (1) transform the input into a high-dimensional latent space; and (2) map the latent state to the vocabulary space to generate the next-token~\cite{olsson2022context, voita2019bottom,elhage2021mathematical}
Moving beyond this high-level decomposition, a growing body of research focuses on a more thorough analysis. For instance, \citet{shai2024transformers} prove that the residual stream effectively performs Bayesian updates across layers from the geometric perspective. \citet{tiberi2024dissecting} model the inference process as a kernel renormalization flow, formalizing the evolution of predictor statistics in a thermodynamic perspective.

\para{Probing methods.} To decode these internal dynamics, researchers propose a set of \emph{probing methods} to observe the internal state. According to the dependency on extra parameters, these probing methods can be classified into static and parametric probing. 
Static probing methods include the widely adopted Logit Lens~\cite{nostalgebraist2020logitlens} and its variants like \cite{ye2025transformers}, which uses cross-query semantic patching and a cosine-based representational lens to interpret the implicit reasoning phenomena. In this work, we focus on the static probing methods and mainly compare our method with the widely-adopted Logit Lens.

Parametric probing methods can be classified into two categories: one class, like Linear Probing~\cite{alain2016understanding} uses extra training to explore concept-specific features~\cite{cunningham2023sparse,rajamanoharan2024improving}. This class is typically task-driven, identifying specific features in the latent space that correspond to human-interpretable concepts.
Another class like Tuned Lens~\cite{belrose2023eliciting} and Patchscopes~\cite{ghandeharioun2024patchscopes}, on the other hand, utilizes extra models as an adaptor to correct the inherent flaws of static probings. However, these methods inevitably introduce systematic distortions as stated in~\cite{hewitt2019designing,voita2020information}.

\para{Applications.} Based on the above interpretation of inference dynamics, researchers propose diverse real-world applications that generally fall into two categories: (1) detecting abnormal behaviors and (2) improving the models' performance. 
First, by monitoring the internal states, researchers can effectively detect safety risks such as hallucinations and jailbreak attempts~\cite{lin2026alert, chia2025probing, borah2025alignment}. For instance, \citet{yao2024knowledge} propose a component-level decoding, i.e., knowledge circuits, and use it to analyze and interpret language model behaviors such as hallucinations and in-context learning.
Also, the insights obtained through interpretation help to optimize model efficiency and performance. For example, inference-Time Intervention (ITI)~\cite{li2023inference} identifies ``truthfulness'' directions in the activation space and shifts the inference dynamics along these vectors to enhance the generation of truthful answers. Similarly, \citet{chuang2023dola} leverage the contrast between early and late layer distributions to amplify factual signals and suppress hallucinations during decoding.

%% file: src/theory.tex
\section{Intrinsic Magnification Mechanism}
\label{sec:theory}

\subsection{Formulations}\label{subsec:formulation}

\para{Inference Dynamics.} Consider an LLM with $N$ transformer layers $\{f_1, f_2, \dots, f_N\}$, and a vocabulary space $\mathcal{V}$. The inference dynamics is defined as an autoregressive process that transforms the given input token into a probability distribution over the vocabulary. 
Let $X_{t} \in \mathbb{R}^d$ be the input embedding of the $t$th token. The internal state evolves layer-by-layer through a residual mapping $\mathcal{F}$ (containing Attention and FFN blocks). The internal state $H_{i,t} \in \mathbb{R}^d$ at layer $i$ is defined recursively:
\begin{equation}
    H_{i,t} = f_i(H_{i-1,t}) = H_{i-1,t} + \mathcal{F}_i(H_{i-1,t}), i \in \{1, \dots, N\},
\end{equation}

where $H_{0,t} = X_{t}$ is the initial input embedding. 

To generate the final output, the last internal state $H_{N,t}$ is projected back into the vocabulary space via a linear unembedding matrix $W_U \in \mathbb{R}^{|\mathcal{V}| \times d}$. The final probability distribution $p_t \in \mathcal{P}(|\mathcal{V}|) = \{x \in \mathbb{R}^{|\mathcal{V}|} \mid x \ge 0, \mathbf{1}^\top x = 1\}$ is obtained by:
\begin{equation}
    p_t = \text{Softmax}(W_U H_{N,t}).
\end{equation}

For simplicity, we omit the subscripts $t$ and $i$ when no confusion arises, referring $H_{i,t}$ as $H_i$ or $H$.

\para{Confidence Trajectory} is one of the most widely used signals to interpret the inference dynamics of LLMs. For a generated sequence of length $T$, we define the layer-wise confidence $\mathcal{C}_i$ as the expectation of the Top-K probability of the internal predicted token. Formally:
\begin{equation}\label{eq:conf_traj}
    \mathcal{C}_i = \frac{1}{T}\sum_{t=1}^T \text{Top-K}(p_{i,t}).
\end{equation}

$p_{i,t}$ is the probability of the $t$th token predicted by the $i$th layer, which can be obtained through a decoding function $\mathcal{D}: \mathbb{R}^d \to \mathcal{P}(\mathcal{V})$, i.e., $p_{i,t} = \mathcal{D}(H_{i,t})$. 
Existing methods instantiate $\mathcal{D}$ differently.
The static Logit Lens employs the last unembedding matrix with softmax, i.e., $\mathcal{D}(H) = \text{Softmax}(W_U H)$, whereas in other tunable methods like Tuned Lens, $\mathcal{D}$ is a learned affine transformation. The confidence trajectory is then the expected confidence of the predicted token across all the $N$ layers.

\para{Decoding function with varied focal depth $m$.} Unlike the prior static probing methods, we generalize the decoding function $\mathcal{D}$ from the last unembedding matrix to a composition of the final $m$ layers of the model (and the unembedding projection). Formally: 
\begin{equation}
    \mathcal{D}_m(H) = \text{Softmax}(D_m(H)),
\end{equation}

where $D_m$ is the composition of the final $m$ layers of the model and the unembedding projection. Formally:
\begin{equation}\label{eq:dm}
    D_m(H) = W_U (f_N \circ \cdots \circ f_{N-m+1}(H)).
\end{equation}

For simplicity, we define $D_m$ as the \emph{decoding function (DF)} with focal depth $m$ in the following analysis.

\para{Refinement layers. } Prior research suggests that the computation of the transformer exhibits two stages, separated by a layer $i_0$ (see Section~\ref{sec:related}). The initial layers ($i < i_0$) map the inputs to the latent space, and the subsequent layers ($i \geq i_0$) decode the latent space to the output. In this work, we term the layers in the latter stage as \emph{refinement layers}.

\subsection{Assumptions}

\para{Confidence in the Logit Space.} To facilitate convex analysis, we analyze the confidence in the logit space rather than the probability simplex. Let $S_t$ be the Top-K set. We define the \textit{Logit Margin} function $g_m: \mathbb{R}^d \to \mathbb{R}$ as:
\begin{equation}
    g_m(H) := \log \sum_{v \in S_t} e^{\mathcal{D}_m(H)_v} - \log \sum_{v \notin S_t} e^{\mathcal{D}_m(H)_v}.
\end{equation}
Since the Softmax function is monotonic w.r.t the margin, proving that the sequence-averaged margin $\bar{\mathcal{C}}^{(m)}_i = \frac{1}{T}\sum_{t} g_m(H_{i,t})$ increases is equivalent to proving that the observed confidence $\mathcal{C}_i$ increases. In our analysis, we use $\bar{\mathcal{C}}^{(m)}_i$ as the approximation of the confidence $\mathcal{C}_i$.

Before obtaining the theoretical results, we make the following assumptions:

\begin{assumption}[Smoothness of the Margin Function]\label{assump-lipschitz}
    We assume the margin function $g_m(H)$ has a $\beta$-Lipschitz continuous gradient over the domain of hidden states. Formally, for any two hidden states $H_i, H_k, k \geq i$, there exists a constant $\beta > 0$ such that:
    \begin{equation}
        \|\nabla g_m(H_i) - \nabla g_m(H_{k})\| \le \beta \|H_i - H_{k}\|.
    \end{equation}
\end{assumption}

The rationale behind this assumption is that the hidden states during the forward pass are normalized by LayerNorm (restricting the domain), and the Softmax-Log functions comprising $g_m$ are smooth $\mathcal{C}^\infty$ functions. Consequently, the Hessian of $g_m$ is bounded, preventing the gradient from changing arbitrarily fast.


\begin{proposition}[Quadratic Smoothness Bound]\label{prop-smooth-bound}
    Under Assumption~\ref{assump-lipschitz}, for any current hidden state $H$ and a residual update vector $\Delta$, the margin function satisfies:
    \begin{equation}
        g_m(H + \Delta) \ge g_m(H) + \langle \nabla g_m(H), \Delta \rangle - \frac{\beta}{2} \|\Delta\|^2.
    \end{equation}
\end{proposition}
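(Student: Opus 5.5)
This is the standard descent-lemma argument: integrate the gradient of $g_m$ along the segment from $H$ to $H+\Delta$ and control the deviation from the linear term using Assumption~\ref{assump-lipschitz}.

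\textbf{Step 1 (integral form).} Define $\phi(s) = g_m(H + s\Delta)$ for $s \in [0,1]$. Since $g_m$ is a composition of smooth maps (layers, unembedding, log-sum-exp), $\phi$ is continuously differentiable, with $\phi'(s) = \langle \nabla g_m(H+s\Delta), \Delta\rangle$. By the fundamental theorem of calculus,
\begin{equation*}
g_m(H+\Delta) - g_m(H) - \langle \nabla g_m(H), \Delta\rangle = \int_0^1 \langle \nabla g_m(H+s\Delta) - \nabla g_m(H), \Delta \rangle \, ds .
\end{equation*}

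\textbf{Step 2 (bounding the remainder).} By Cauchy--Schwarz, the integrand is at least $-\|\nabla g_m(H+s\Delta) - \nabla g_m(H)\|\,\|\Delta\|$. Applying Assumption~\ref{assump-lipschitz} to the pair $H$ and $H+s\Delta$, this is at least $-\beta s \|\Delta\|^2$. Integrating gives $\int_0^1 -\beta s\|\Delta\|^2\,ds = -\frac{\beta}{2}\|\Delta\|^2$. Rearranging yields the claimed inequality.

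\textbf{Main obstacle.} Step 2 applies the Lipschitz bound to every intermediate point $H + s\Delta$ on the segment. As literally stated, however, Assumption~\ref{assump-lipschitz} only covers pairs of actual hidden states $H_i, H_k$ with $k \ge i$. I would handle this by reading the assumption as holding on a convex domain $\mathcal{H}$ that contains the hidden states and is closed under the relevant segments, for instance a ball around the LayerNorm-bounded region, as the paper's rationale suggests. Step 2 then goes through verbatim.

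If one wants to avoid that reading, a fallback is available. Bounded Hessian on a convex set containing the segment gives $\phi''(s) \ge -\beta\|\Delta\|^2$. Taylor's theorem with Lagrange remainder then yields the same bound directly.
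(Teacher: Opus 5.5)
Your proof is correct and is essentially the paper's own argument: the fundamental theorem of calculus along the segment $H + t\Delta$, then Cauchy--Schwarz, then the Lipschitz-gradient bound, then $\int_0^1 t\,dt = \tfrac{1}{2}$. Your domain caveat is well taken, since the paper applies Assumption~\ref{assump-lipschitz} at the intermediate points $H + t\Delta$ without comment, even though the assumption as stated only covers pairs of actual hidden states $H_i, H_k$, so reading it on a convex set containing the segment is exactly the unstated requirement.
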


While Proposition~\ref{prop-smooth-bound} is a classic result (for proof, see  Appendix~\ref{subappen:proof_assumption}), it holds a specific meaning in our inference dynamics. Here, $\Delta$ represents the residual update provided by a residual function on \emph{any} prior hidden state $H_i$, i.e., $\Delta = \mathcal{F}_k(H_i), k \geq i$.

\begin{assumption}[Gradient-aligned Updates]\label{assump-drift} 
    For any layer index $k \geq i_0$ (where $i_0$ denotes the start of the refinement layers (see Section~\ref{subsec:formulation})), the residual update $\Delta$ aligns positively with the gradient of the margin function $g_m(H)$ in expectation. Specifically, there exists a constant $\mu > 0$ such that:

    \begin{equation} 
        \mathbb{E}\left[ \langle \nabla g_m(H), \Delta \rangle \mid H \right] \ge \mu. 
    \end{equation} 
    
\end{assumption} 

This assumption formalizes the intuition that well-trained Transformers function as implicit optimizers. In the late refinement stages ($k \ge i_0$), the residual function $\mathcal{F}_k$ provides updates that push the hidden state $H$ closer to the decision boundary of the target token.

\begin{assumption}[Boundedness]\label{assump-bound}
    The residual updates are bounded by a constant $R$, i.e., $\|\Delta\| \le R$. Furthermore, the fluctuations of the single-step margin gain around its mean are bounded by a constant $b$:

    \begin{equation} 
        |g_m(H_{i, t}) - \mathbb{E}_t\left[ g_m(H_{i, t})\right]| \le b. 
    \end{equation} 
    
\end{assumption}

Assumption~\ref{assump-bound} aligns with the use of normalization and residual connections, which constrain latent increments to prevent representation divergence. 

\subsection{Main Results}\label{subsec:main_res}

Based on the above assumptions, we obtain two fundamental properties of the confidence trajectory: the monotonic growth across layers (Result 1) and the magnification effect of larger focal depth $m$ (Result 2). For the formal proof, please refer to Appendix~\ref{subappen:proof_thm1} and Appendix~\ref{thm:head_lift}.

\para{Result 1: The confidence of the refinement layers increases monotonically.}
The first result quantifies the trajectory of the confidence (where focal depth $m$ is fixed). We show that in the refinement phase, the layer-wise confidence statistically increases layer by layer.

\begin{theorem}[Monotonicity of Confidence]
\label{thm:monotonicity}
    Under Assumptions~\ref{assump-lipschitz}-\ref{assump-bound}, for layers $i \geq i_0$ and sequence length $T$, the probability that the sequence-averaged confidence $\bar{\mathcal{C}}^{(m)}_i = \frac{1}{T}\sum_{t} g_m(H_{i,t})$ decreases in the next layer decays exponentially:
    \begin{equation}
        \mathbb{P}\left( \bar{\mathcal{C}}^{(m)}_{i+1} \le \bar{\mathcal{C}}^{(m)}_{i} \right) \le \exp\left( -\frac{T \gamma^2}{2b^2} \right),
    \end{equation}
    where $\gamma = \mu - \frac{\beta}{2}R^2$.
\end{theorem}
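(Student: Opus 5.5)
The plan is to write the layer-to-layer change of the sequence-averaged margin as an average of per-token gains, bound the mean of each gain from below by $\gamma$ using Proposition~\ref{prop-smooth-bound} together with Assumptions~\ref{assump-drift}--\ref{assump-bound}, and then apply a Hoeffding-type concentration bound over the $T$ tokens.

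For a fixed token $t$ and a layer $i\ge i_0$, set $\Delta_t = \mathcal{F}_{i+1}(H_{i,t})$, so that $H_{i+1,t} = H_{i,t}+\Delta_t$. Define the per-token gain
\begin{equation}
    Z_t := g_m(H_{i+1,t}) - g_m(H_{i,t}).
\end{equation}
Then $\bar{\mathcal{C}}^{(m)}_{i+1}-\bar{\mathcal{C}}^{(m)}_i = \frac1T\sum_t Z_t$.

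By Proposition~\ref{prop-smooth-bound},
\begin{equation}
    Z_t \ge \langle \nabla g_m(H_{i,t}),\Delta_t\rangle - \frac{\beta}{2}\|\Delta_t\|^2 .
\end{equation}
Take the conditional expectation given $H_{i,t}$. Assumption~\ref{assump-drift} bounds the inner-product term below by $\mu$, since $i+1\ge i_0$. Assumption~\ref{assump-bound} gives $\|\Delta_t\|\le R$. Together these yield
\begin{equation}
    \mathbb{E}[Z_t\mid H_{i,t}] \ge \mu-\tfrac{\beta}{2}R^2=\gamma .
\end{equation}
The tower property then gives $\mathbb{E}[Z_t]\ge\gamma$ for every $t$. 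I will take $\gamma>0$, which the statement implicitly requires: for $\gamma\le 0$ the bound is vacuous anyway.

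For the concentration step, observe that the event $\{\bar{\mathcal{C}}^{(m)}_{i+1}\le\bar{\mathcal{C}}^{(m)}_i\}$ is the event
\begin{equation}
    \frac1T\sum_t (Z_t-\mathbb{E}Z_t) \le -\frac1T\sum_t \mathbb{E}Z_t \le -\gamma .
\end{equation}
I would apply Hoeffding's inequality, or Azuma--Hoeffding if the tokens are only a martingale-difference sequence under the autoregressive filtration. This requires $|Z_t-\mathbb{E}Z_t|\le b$, a fluctuation bound on the single-step gain. Since the text describes $b$ as bounding the fluctuation of the \emph{single-step margin gain}, I would read Assumption~\ref{assump-bound} in that sense. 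With summands in an interval of half-width $b$, the one-sided Hoeffding bound gives
\begin{equation}
    \mathbb{P}\Big(\tfrac1T\textstyle\sum_t (Z_t-\mathbb{E}Z_t)\le-\gamma\Big)\le \exp\!\big(-T\gamma^2/(2b^2)\big),
\end{equation}
which is exactly the claimed rate.

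The main obstacle is justifying this concentration step, and there are two issues.

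First, the variables $Z_t$ are not independent across tokens, because of causal attention. I would handle this by ordering tokens by position and using Azuma's inequality. The conditional-mean bound from Assumption~\ref{assump-drift} is stated conditionally on $H$, which gives the supermartingale structure $\mathbb{E}[Z_t-\gamma\mid\mathcal{G}_{t-1}]\ge0$ provided the conditioning $\sigma$-algebra is compatible. Alternatively, I would simply posit that the tokens are exchangeable or independent samples, as the paper's statistical framing suggests.

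Second, Assumption~\ref{assump-bound} as literally written bounds $g_m$ rather than the gain. If only that literal form is available, the gain deviation is bounded by $2b$, which loses a constant in the exponent. So I will adopt the gain reading, matching the stated constant.
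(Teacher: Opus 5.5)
Your proposal is correct and follows the paper's proof essentially step for step. The paper also bounds the conditional mean of the per-token gain $g_m(H_{i+1,t})-g_m(H_{i,t})$ below by $\gamma$ via Proposition~\ref{prop-smooth-bound} and Assumptions~\ref{assump-drift}--\ref{assump-bound}, then applies Azuma--Hoeffding to the martingale of centered gains with increments bounded by $b$. It makes the same reinterpretation you do, explicitly positing $|Y_t|\le b$ for the centered single-step gain rather than using Assumption~\ref{assump-bound} literally, and it likewise glosses over the filtration-compatibility issue you flag.
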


Theorem~\ref{thm:monotonicity} illustrates that while individual tokens may exhibit noisy confidence fluctuations, the \textit{averaged} confidence over a sequence statistically rises in refinement layers. 

\para{Result 2: Self-magnifying effect of the focal depth $m$.}
We now analyze the impact of the focal depth $m$. We compare the confidence observed through the decoding function with focal depth $m+1$ $\mathcal{D}_{m+1}$ versus $\mathcal{D}_m$ with focal depth $m$ at the \textit{same} hidden state $H_i$.

\begin{theorem}[Magnification of Focal Depth]
\label{thm:head_lift}
    Let $D_{m+1}$ be a decoding function with one additional layer compared to $D_m$. Based on Assumption~\ref{assump-drift}, the confidence trajectory observed through $D_{m+1}$ statistically dominates $D_m$ with a high probability:
    \begin{equation}
         \mathbb{P}\left( \bar{\mathcal{C}}^{(m+1)}_{i} > \bar{\mathcal{C}}^{(m)}_{i} \right) \ge 1 - \exp\left( -\frac{T \gamma^2}{2b^2} \right).
    \end{equation}
\end{theorem}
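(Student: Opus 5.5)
The plan is to reduce Theorem~\ref{thm:head_lift} to the single-step argument behind Theorem~\ref{thm:monotonicity}. The key observation is structural. By \eqref{eq:dm}, $D_{m+1}(H) = W_U\bigl(f_N\circ\cdots\circ f_{N-m+1}(f_{N-m}(H))\bigr) = D_m(f_{N-m}(H))$. Hence $g_{m+1}(H) = g_m\bigl(H + \mathcal{F}_{N-m}(H)\bigr)$, with the Top-K set $S_t$ held fixed for each token.

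Comparing focal depth $m+1$ with focal depth $m$ at the same hidden state $H_{i,t}$ is therefore the same as comparing $g_m$ before and after one residual update. That update is $\Delta_t = \mathcal{F}_{N-m}(H_{i,t})$. As remarked after Proposition~\ref{prop-smooth-bound}, $\Delta$ may be the update of any later-layer residual function $\mathcal{F}_k$ applied to a prior state, so here we take $k = N-m$. We need $k \geq i_0$, which holds because the added layer lies in the final block; this is implicit in the statement.

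First, apply Proposition~\ref{prop-smooth-bound} token-wise:
\[
g_{m+1}(H_{i,t}) - g_m(H_{i,t}) \ge \langle \nabla g_m(H_{i,t}), \Delta_t\rangle - \tfrac{\beta}{2}\|\Delta_t\|^2 .
\]
Second, take conditional expectations. Assumption~\ref{assump-drift} bounds the inner-product term below by $\mu$, and Assumption~\ref{assump-bound} bounds the quadratic penalty by $\frac{\beta}{2}R^2$. The expected per-token gain $Z_t := g_{m+1}(H_{i,t}) - g_m(H_{i,t})$ is therefore at least $\gamma = \mu - \frac{\beta}{2}R^2$, which we assume is positive (otherwise the bound is vacuous).

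Third, average over $t$, so that $\bar{\mathcal{C}}^{(m+1)}_i - \bar{\mathcal{C}}^{(m)}_i = \frac1T\sum_t Z_t$ has mean at least $\gamma$. Then apply a Hoeffding/Azuma-type bound. Treat the $Z_t$ as independent, or as a bounded-difference martingale sequence adapted to the token filtration, with deviations from their conditional means bounded by $b$ through Assumption~\ref{assump-bound}. This gives
\[
\mathbb{P}\Bigl(\tfrac1T\textstyle\sum_t Z_t \le 0\Bigr) \le \mathbb{P}\Bigl(\tfrac1T\textstyle\sum_t (Z_t - \mathbb{E}Z_t) \le -\gamma\Bigr) \le \exp\!\left(-\frac{T\gamma^2}{2b^2}\right),
\]
and taking the complement yields the claim. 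This step is identical to the concentration step in the proof of Theorem~\ref{thm:monotonicity}, so I would reuse it verbatim and change only which increment is summed.

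The main obstacle is the concentration step, for two reasons. Assumption~\ref{assump-bound} literally bounds the fluctuation of $g_m(H_{i,t})$, not of the gain $Z_t$, and the tokens in an autoregressive sequence are not independent. I would first try to use the fluctuation bound for the gain in the same form as in Theorem~\ref{thm:monotonicity}, reading $b$ as the bound on single-step margin-gain fluctuations, as the prose of Assumption~\ref{assump-bound} suggests. I would then invoke Azuma--Hoeffding with the natural filtration over tokens, using that each conditional mean is at least $\gamma$ by Assumption~\ref{assump-drift}. If a cleaner constant is needed, the Hoeffding range for variables in $[\mathbb{E}Z_t - b, \mathbb{E}Z_t + b]$ gives $\exp(-T\gamma^2/(2b^2))$ exactly, matching the statement.
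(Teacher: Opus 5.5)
Your proposal is correct and follows the paper's proof essentially step for step. Both use the identity $D_{m+1} = D_m \circ f_{N-m}$ to recast the extra layer as a residual perturbation $\Delta = \mathcal{F}_{N-m}(H_{i,t})$ of $g_m$, then get a per-token drift of at least $\gamma$ from the quadratic smoothness bound with Assumptions~\ref{assump-drift}--\ref{assump-bound}, and finish with Azuma--Hoeffding using $|Y_t| \le b$. As in the paper, $N-m \ge i_0$ is really a proviso (it requires $m \le N - i_0$ and is not automatic from the added layer lying in the ``final block''), and your concentration step relies on the same tacit reading of $b$ as a bound on the gain fluctuations.
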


Theorem~\ref{thm:head_lift} proves that increasing $m$ introduces a monotonic upward shift of the confidence trajectory. Intuitively, $D_{m+1}$ incorporates an additional ``refining'' transformation towards the direction of generating the output token, thereby effectively increasing the confidence compared to $D_m$.



%% file: src/method.tex
\section{High-resolution Probing and Measurement}\label{sec:method}

Based on the theoretical foundations in Section~\ref{sec:theory}, we design \methodname, a high-resolution probe to observe the confidence trajectory in a fine-grained manner. Furthermore, we formalize a quantitative metric from the trajectory to measure the cognitive effort of LLMs.

\para{High-resolution probing with a proper focal depth.} Theorem~\ref{thm:monotonicity} and ~\ref{thm:head_lift} indicate that increasing focal depth $m$ magnifies the confidence in the refinement layers. Therefore, unlike Logit Lens that fixes $m = 0$, \methodname utilizes the decoding function $D_m$ (see Equation~\ref{eq:dm}) with focal depth $m \geq 1$ to observe the confidence. Specifically, for each hidden state $H_i$, we first propagate it through $m$ subsequent layers, i.e., $\{f_{N-m+1}, \dots, f_N\}$, and then project it to the logit space through the last unembedding layer to derive the magnified confidence $\mathcal{C}^{(m)}_{i}$. 
Note that a proper $m$ is crucial, because an excessively large $m$ would over-magnify all signals to near-certainty, making the trajectories once again indistinguishable. At a proper $m$, magnification is constrained by the tasks' inherent difficulty, making the trajectories for hard and easy tasks clearly separated. 

\para{Quantitative metric of cognitive effort.} Beyond visually observing the difference between easy and hard tasks, we formalize the cognitive effort into a quantitative metric. We define \emph{refinement area} ($\Omega$) as the area above the magnified confidence curve of the refinement layers with a fixed focal depth $m$, formally:

\begin{equation}\label{eq:ra}
    \Omega = \sum_{k \geq i_0} (1 - \mathcal{C}^{(m)}_k).
\end{equation}

Intuitively, $\Omega$ captures the cumulative uncertainty the model must resolve before converging on a final output, which is positively correlated with the cognitive effort: easy tasks exhibit a rapid confidence magnification, maintaining higher confidence across the refinement layers and yielding a lower $\Omega$; conversely, complex tasks necessitate a longer low-confidence struggle stage, resulting in a higher $\Omega$.

\para{Automatic computation of refinement area.} To facilitate the computation of $\Omega$, we need to determine the beginning point of the refinement layers, i.e., $i_0$, automatically. According to Theorem~\ref{thm:monotonicity}, which establishes that the confidence demonstrates a statistical monotonicity in the refinement layers, we formulate the automatic determination algorithm through a \emph{reverse-scanning} strategy.
Briefly, for a fixed model and focal depth $m$, we scan the confidence trajectory backward from the final layer, and identify $i_0$ as the first layer where the monotonic decreasing trend is violated. The detailed algorithm is in Appendix~\ref{appen:algorithm}.

%% file: src/evaluation.tex
\section{Evaluations}
\label{sec:empirical}


\subsection{Setup}
\label{subsec:exper_setup}
\para{Models.}  
We evaluate \methodname with varied focal depths $m$ on eight models with diverse architectures. When $m=0$, the probe is equivalent to Logit Lens~\cite{nostalgebraist2020logitlens}, the most widely used static probing method, which we use as the baseline.
Specifically, we adopt Llama3-8B-Instruct~\cite{dubey2024llama}[Llama3-8B], Deepseek-Math-7B-Instruct~\cite{shao2024deepseekmath}[Deepseek-7B], and multiple sizes of the Qwen family, including Qwen2.5-Instruct~\cite{hui2024qwen2} (0.5B, 3B, 7B, 32B) [Qwen2.5-0.5B/Qwen2.5-3B/Qwen2.5-7B/Qwen2.5-32B] and Qwen3~\cite{yang2025qwen3} (0.6B, 4B) [Qwen3-0.6B/Qwen3-4B].

\para{Datasets.}
To comprehensively evaluate the cognitive effort, we conduct evaluations across diverse domains.
For each domain, we pair a foundational dataset representing easy tasks with a challenging hard-level counterpart:
(1) Mathematics: MATH (Level 1-2)~\cite{hendrycks2021measuring} versus AIME (2023-2025)~\cite{WIKI-AIME};
(2) Coding: CoNaLa~\cite{yin2018learning} versus APPS~\cite{hendrycks2021measuring_code};
(3) Logic: RuleTaker~\cite{clark2020transformers} versus ZebraLogic~\cite{lin2025zebralogic}; and
(4) General QA: ARC-Easy~\cite{clark2018think} versus the expert-level GPQA~\cite{rein2024gpqa}.
All the results are obtained on $100$ random samples for efficiency and robustness except for AIME, where we utilize all the problems from the 2023-2025 contests.

\begin{figure*}[t]
    \centering
    \includegraphics[width=0.9\linewidth]{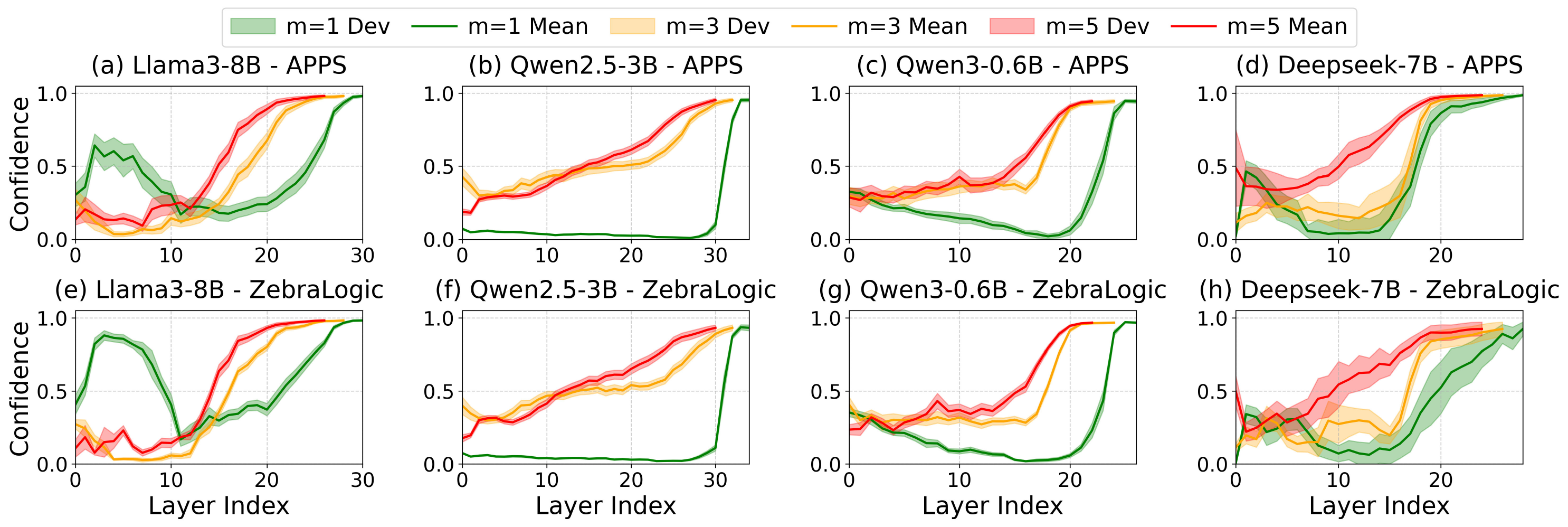}
    \caption{The confidence trajectory of Llama3-8B, Qwen2.5-3B, Qwen3-0.6B, Deepseek-7B on APPS~\cite{hendrycks2021measuring_code} and ZebraLogic~\cite{lin2025zebralogic} datasets with $m = 1,3,5$.
    }
    \vspace{-.15in}
    \label{fig:m_comparison}
\end{figure*}

\subsection{Empirical Validation}

\para{Validating Result1: the confidence trajectory increases monotonically in the refinement layers. } 
Figure~\ref{fig:m_comparison} plots the confidence trajectory (as defined in Equation~\ref{eq:conf_traj}, Section~\ref{subsec:formulation}) of four models with varied $m$ on two datasets. We leave the full results in Appendix~\ref{appen:cog_effort} for succinctness. We observe a robust monotonic growth in confidence throughout the deeper refinement layers across all cases. 
The universality of this trend across diverse architectures and domains empirically validates Theorem~\ref{thm:monotonicity}, demonstrating that the monotonically increasing trajectory is an intrinsic characteristic of the models' inference dynamics.

\para{Validating Result2: larger focal depth $m$ magnifies the observed confidence.}
From Figure~\ref{fig:m_comparison}, we can see that increasing $m$ strictly magnifies the confidence in the deeper refinement layers. Specifically, the trajectories exhibit a strict ordering in the deeper layers. The trajectory for $m=5$ (red) lies above that for $m=3$ (yellow), which in turn surpasses $m=1$ (green) across all the models and datasets. The strict ordering phenomenon validates Theorem~\ref{thm:head_lift}. By incorporating a few transformer layers into the decoding function, we can effectively magnify the confidence signals and therefore provide a fine-grained view.

\para{Further analysis. }Beyond the properties stated in Theorem~\ref{thm:monotonicity} and Theorem~\ref{thm:head_lift}, we also observe several interesting properties in Figure~\ref{fig:m_comparison}: 

\emph{(1) Model-specific sensitivity of the magnification mechanism.} The magnitude of magnification with larger $m$ varies significantly across models. For instance, Qwen2.5-3B exhibits a substantial gap between the trajectories of $m=1$ and $m=3 ~ \text{or}~ 5$, indicating a high sensitivity to small $m$. In contrast, Deepseek-7B shows a much narrower divergence of $m=1$ and $3$. The trajectory becomes significantly lifted until $m=5$, suggesting that the detailed magnification mechanism varies across models.

\emph{(2) Intrinsic beginning points of the refinement layers.} The beginning points of the refinement layers, i.e., the layer where the confidence begins to climb, appear to be an intrinsic characteristic of the models with varied focal depth $m$, which remains remarkably consistent across diverse datasets for any fixed model and $m$.
For instance, when $m = 1$, we can see that the beginning point is around $10$, $30$, $20$ and $10$ for the four  respective models across datasets. This consistency indicates that the location of refinement layers is an intrinsic property of the model, independent of the specific datasets being processed.

\begin{figure*}[t]
    \centering
    \includegraphics[width=0.9\linewidth]{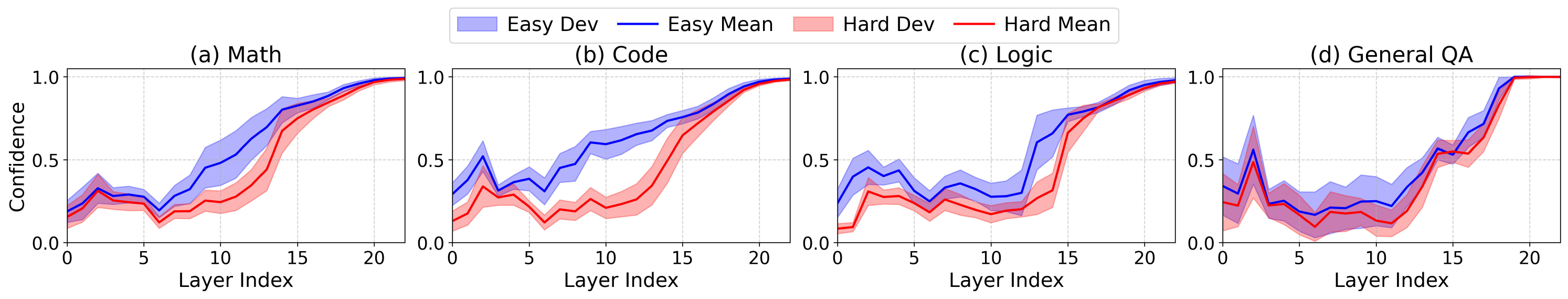}
    \vspace{-.1in}
    \caption{The confidence trajectory of Qwen2.5-7B across four domains comparing easy versus hard datasets with focal depth $m = 5$. 
    }
    \label{fig:cog_effort}
\end{figure*}

\begin{table*}[t]
\centering
\caption{Refinement area ($\Omega$) across models and data domains.}
\label{tab:refinement_results}
\vspace{-.1in}
\setlength{\tabcolsep}{2.5pt}
\resizebox{0.85\textwidth}{!}{
\begin{tabular}{l c c | cc | cc | cc | cc}
\toprule
\multirow{2}{*}{\textbf{Model}} & \multirow{2}{*}{\textbf{$m$}} & \multirow{2}{*}{\textbf{$i_0$}} &
\multicolumn{2}{c|}{\textbf{Math}} & \multicolumn{2}{c|}{\textbf{Code}} & 
\multicolumn{2}{c|}{\textbf{Logic}} & \multicolumn{2}{c}{\textbf{General QA}} \\
\cmidrule(lr){4-5} \cmidrule(lr){6-7} \cmidrule(lr){8-9} \cmidrule(lr){10-11}
 & & & \textbf{Easy} & \textbf{Hard} & \textbf{Easy} & \textbf{Hard} & \textbf{Easy} & \textbf{Hard} & \textbf{Easy} & \textbf{Hard} \\
\midrule
\midrule


Llama3-8B & 0 & 30 & $1.02_{\pm 0.02}$ & $1.04_{\pm 0.03}$ & $1.01_{\pm 0.01}$ & $1.01_{\pm 0.01}$ & $\textcolor{blue}{1.02}_{\pm 0.01}$ & $\textcolor{blue}{1.01}_{\pm 0.01}$ & $1.00_{\pm 0.01}$ & $1.02_{\pm 0.02}$ \\ 

Llama3-8B & 5 & 12 & $3.80_{\pm 0.46}$ & $4.18_{\pm 0.67}$ & $3.43_{\pm 0.38}$ & $3.95_{\pm 0.42}$ & $3.35_{\pm 0.39}$ & $3.93_{\pm 0.28}$ & $2.99_{\pm 0.43}$ & $3.54_{\pm 0.53}$ \\ 

\midrule

Deepseek-7B & 0 & 1 & $2.59_{\pm 0.19}$ & $2.67_{\pm 0.22}$ & $2.71_{\pm 0.20}$ & $2.90_{\pm 0.33}$ & $2.08_{\pm 0.26}$ & $2.46_{\pm 0.26}$ & $2.39_{\pm 0.33}$ & $2.61_{\pm 0.33}$ \\ 

Deepseek-7B & 1 & 14 & $3.19_{\pm 0.45}$ & $3.61_{\pm 0.78}$ & $4.34_{\pm 1.09}$ & $6.16_{\pm 0.84}$ & $6.22_{\pm 1.05}$ & $6.87_{\pm 1.41}$ & $7.23_{\pm 1.26}$ & $8.05_{\pm 2.36}$ \\ 

\midrule

Qwen2.5-0.5B & 0 & 22 & $1.01_{\pm 0.02}$ & $1.02_{\pm 0.02}$ & $1.04_{\pm 0.02}$ & $1.05_{\pm 0.02}$ & $1.06_{\pm 0.04}$ & $1.07_{\pm 0.03}$ & $1.11_{\pm 0.04}$ & $1.07_{\pm 0.05}$ \\ 

Qwen2.5-0.5B & 5 & 3 & $3.74_{\pm 0.48}$ & $4.04_{\pm 0.50}$ & $5.76_{\pm 0.39}$ & $5.78_{\pm 0.40}$ & $\textcolor{blue}{6.06}_{\pm 0.50}$ & $\textcolor{blue}{6.03}_{\pm 0.35}$ & $\textcolor{blue}{6.11}_{\pm 0.41}$ & $\textcolor{blue}{5.87}_{\pm 0.97}$ \\ 

\midrule

Qwen2.5-3B & 0 & 23 & $5.07_{\pm 0.35}$ & $5.61_{\pm 0.38}$ & $4.63_{\pm 0.20}$ & $5.09_{\pm 0.22}$ & $5.06_{\pm 0.22}$ & $5.14_{\pm 0.19}$ & $4.66_{\pm 0.34}$ & $5.73_{\pm 0.41}$ \\ 

Qwen2.5-3B & 3 & 22 & $1.97_{\pm 0.27}$ & $2.24_{\pm 0.30}$ & $2.41_{\pm 0.32}$ & $2.52_{\pm 0.23}$ & $2.19_{\pm 0.58}$ & $2.87_{\pm 0.28}$ & $2.62_{\pm 0.64}$ & $3.01_{\pm 0.57}$ \\ 

\midrule

Qwen2.5-7B & 0 & 13 & $6.57_{\pm 0.45}$ & $7.10_{\pm 0.52}$ & $6.74_{\pm 0.37}$ & $6.97_{\pm 0.34}$ & $7.33_{\pm 0.23}$ & $7.50_{\pm 0.24}$ & $7.52_{\pm 0.34}$ & $7.81_{\pm 0.39}$ \\ 

Qwen2.5-7B & 5 & 12 & $1.46_{\pm 0.40}$ & $2.38_{\pm 0.44}$ & $1.78_{\pm 0.32}$ & $3.05_{\pm 0.55}$ & $2.38_{\pm 0.52}$ & $3.38_{\pm 0.46}$ & $2.13_{\pm 0.32}$ & $2.63_{\pm 0.44}$ \\ 

\midrule

Qwen2.5-32B & 0 & 39 & $\textcolor{blue}{1.13}_{\pm 0.26}$ & $\textcolor{blue}{1.06}_{\pm 0.32}$ & $\textcolor{blue}{0.97}_{\pm 0.24}$ & $\textcolor{blue}{1.09}_{\pm 0.26}$ & $0.73_{\pm 0.21}$ & $1.09_{\pm 0.19}$ & $0.66_{\pm 0.38}$ & $1.37_{\pm 0.48}$ \\ 

Qwen2.5-32B & 5 & 42 & $3.76_{\pm 0.67}$ & $4.78_{\pm 0.43}$ & $3.36_{\pm 0.73}$ & $5.32_{\pm 0.52}$ & $2.65_{\pm 0.67}$ & $4.53_{\pm 0.61}$ & $2.55_{\pm 0.55}$ & $4.40_{\pm 0.83}$ \\ 

\midrule

Qwen3-0.6B & 0 & 16 & $3.84_{\pm 0.13}$ & $3.98_{\pm 0.13}$ & $3.70_{\pm 0.09}$ & $3.82_{\pm 0.11}$ & $3.76_{\pm 0.09}$ & $3.98_{\pm 0.11}$ & $3.59_{\pm 0.11}$ & $3.99_{\pm 0.12}$ \\ 

Qwen3-0.6B & 5 & 14 & $2.07_{\pm 0.21}$ & $2.33_{\pm 0.16}$ & $2.47_{\pm 0.21}$ & $2.70_{\pm 0.19}$ & $\textcolor{blue}{2.36}_{\pm 0.13}$ & $\textcolor{blue}{2.34}_{\pm 0.15}$ & $2.50_{\pm 0.23}$ & $2.70_{\pm 0.17}$ \\ 

\midrule

Qwen3-4B & 0 & 22 & $4.70_{\pm 0.29}$ & $5.00_{\pm 0.25}$ & $4.38_{\pm 0.27}$ & $4.84_{\pm 0.22}$ & $4.43_{\pm 0.20}$ & $5.09_{\pm 0.19}$ & $3.68_{\pm 0.21}$ & $4.68_{\pm 0.31}$ \\

Qwen3-4B & 5 & 21 & $1.87_{\pm 0.21}$ & $1.87_{\pm 0.23}$ & $2.29_{\pm 0.25}$ & $2.52_{\pm 0.21}$ & $\textcolor{blue}{2.25}_{\pm 0.31}$ & $\textcolor{blue}{2.03}_{\pm 0.17}$ & $1.86_{\pm 0.32}$ & $2.35_{\pm 0.24}$ \\

\bottomrule
\bottomrule
\end{tabular}%
}
\end{table*}

\subsection{Cognitive Effort Measurement}

In this section, we provide an empirical analysis of how task difficulty influences the model's cognitive effort.

\para{Visualization of cognitive effort difference.}
We visualize the confidence trajectory of the models dealing with easy and hard tasks with proper focal depths.
Figure~\ref{fig:cog_effort} shows the confidence trajectory of Qwen2.5-7B on easy and hard tasks with a focal depth $m=5$ across four domains. The results of other models are shown in Appendix~\ref{appen:cog_effort}.
We can observe that the confidence trajectory dealing with easy tasks (blue curves) remains higher than that of hard tasks (red curves) throughout the deeper refinement layers. 
Also, the gap between easy and hard trajectories is more significant in Math and Code domains. This is because these tasks are inherently more difficult and require higher logical efforts. In contrast, the gap is much less in the General QA domain, where the tasks predominantly focus on textual comprehension, showing a less severe easy / hard distinction.

\para{Quantitative measurement of cognitive effort difference.} To quantify the distinction, we report the refinement area $\Omega$ of all the tested models dealing with easy and hard tasks in Table~\ref{tab:refinement_results}. For each model, we report the $\Omega$ calculated with a fixed focal depth $m=0$ (equivalent to Logit Lens) and the most appropriate focal depth selected from $m = \{1, 3, 5\}$ that maximizes the difference between the easy and hard tasks.
We have the following observations from Table~\ref{tab:refinement_results}:

(1) \emph{The refinement area $\Omega$ of hard tasks is always larger than that of easy tasks.} Specifically, $\Omega$ of hard tasks is larger than that of easy tasks in almost all the $32 = 8 (\text{models}) \times 2 (\text{focal depth } m) \times 4 (\text{task domains})$ cases except for a few corner cases that we marked with blue colors.
However, the differences in these corner cases are almost negligible and happen mostly in the Logic and General QA domains, which is consistent with our visualization results stated in Figure~\ref{fig:cog_effort}, i.e., the inherent difficulty difference between easy and hard tasks is not significant in these two domains.
The results align with our intuition stated in Section~\ref{sec:intro}, i.e., hard tasks require more cognitive effort to solve.

(2) \emph{The difference between the refinement area $\Omega$ of easy and hard tasks is more significant with a focal depth $m \geq 1$.} We can see that the difference of $\Omega$ between easy/hard tasks is larger when $m \geq 1$ (second row per model) compared to $m=0$ baseline (first row per model). Specifically, the maximum difference expands from $1.1$ at $m=0$ to $2.0$ at $m \geq 1$. 
This contrast suggests that a non-zero focal depth is necessary to observe the effort required for varied tasks.

\begin{figure*}[t]
    \centering
    \includegraphics[width=0.48\linewidth]{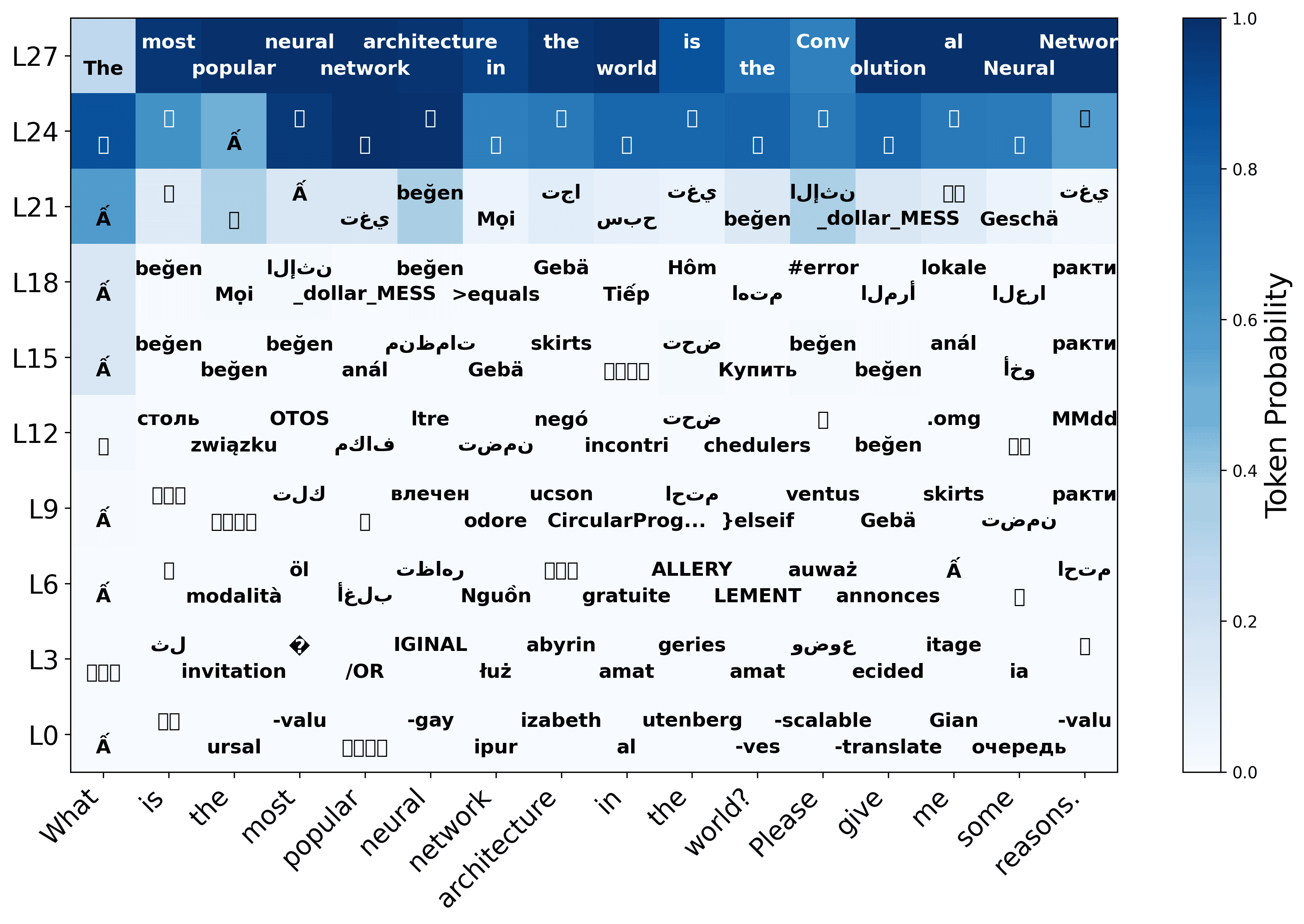}
    \includegraphics[width=0.48\linewidth]{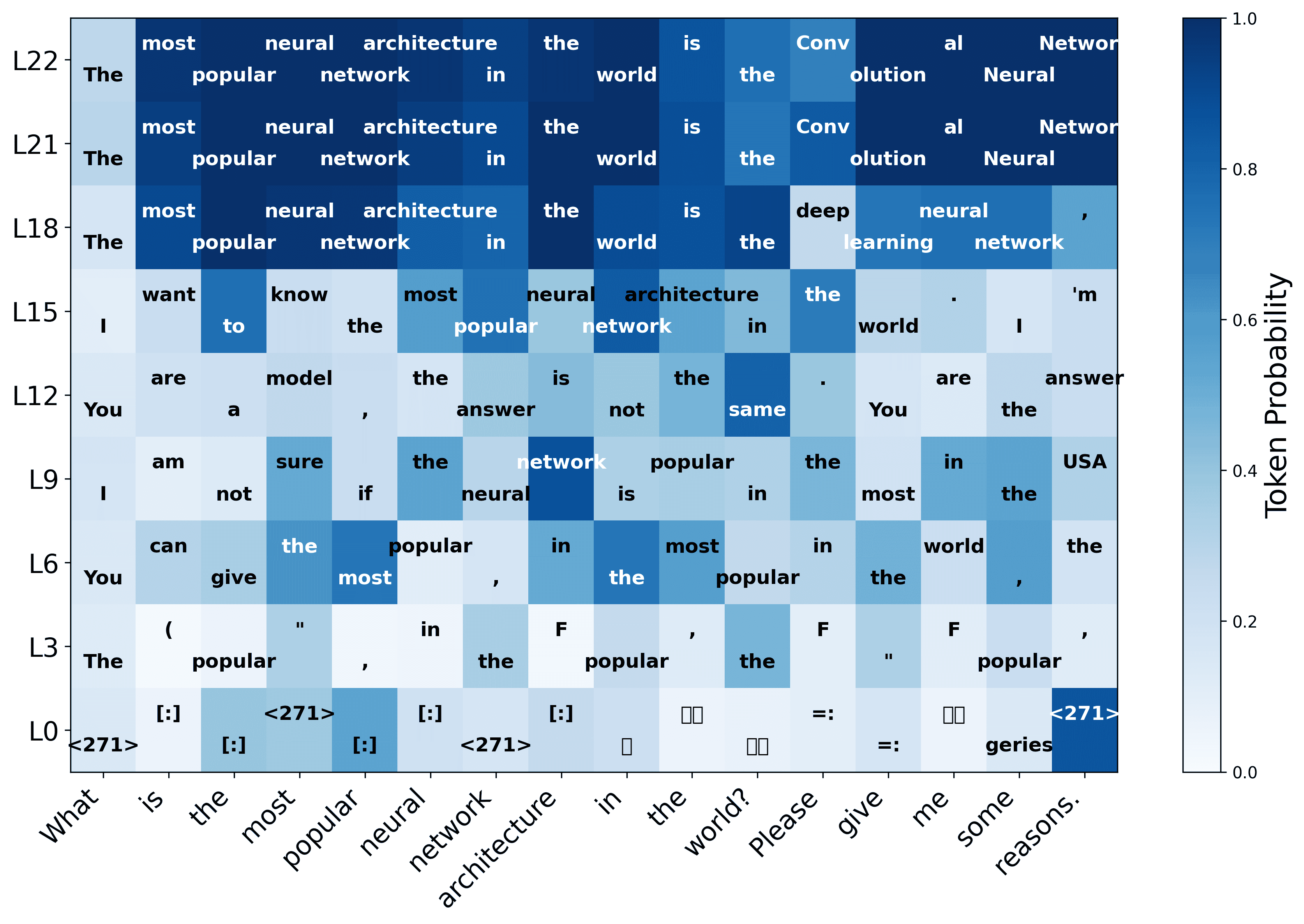}
    \vspace{-.1in}
    \caption{Semantic decoding of Qwen2.5-7B. The left and right are Logit Lens ($m = 0$) and {\footnotesize{\textsf{HyperLens}}} (with $m = 5$), respectively. 
    }
    \label{fig:semantic_decoding}
\end{figure*}

\begin{table*}[t]
\centering
\caption{Refinement Area ($\Omega$) with and without SFT}
\vspace{-.1in}
\label{tab:refinement_results_sft}
\setlength{\tabcolsep}{3pt}
\resizebox{0.78\textwidth}{!}{
\begin{tabular}{l | cc | cc | cc | cc | cc | cc | cc | cc |}
\toprule
\multirow{2}{*}{\textbf{Model}} &  
\multicolumn{2}{c|}{\textbf{MATH}} & \multicolumn{2}{c|}{\textbf{AIME}} & 
\multicolumn{2}{c|}{\textbf{CoNaLa}} & \multicolumn{2}{c|}{\textbf{APPS}} &  
\multicolumn{2}{c|}{\textbf{Ruletaker}} & \multicolumn{2}{c|}{\textbf{ZebraLogic}} & 
\multicolumn{2}{c|}{\textbf{ARC Easy}} & \multicolumn{2}{c}{\textbf{GPQA}} \\
\cmidrule(lr){2-3} \cmidrule(lr){4-5} \cmidrule(lr){6-7} \cmidrule(lr){8-9} \cmidrule(lr){10-11} \cmidrule(lr){12-13} \cmidrule(lr){14-15} \cmidrule(lr){16-17}
& $\Omega$ & $\hat{\Omega}$ & $\Omega$ & $\hat{\Omega}$ & $\Omega$ & $\hat{\Omega}$ & $\Omega$ & $\hat{\Omega}$ & $\Omega$ & $\hat{\Omega}$ & $\Omega$ & $\hat{\Omega}$ & $\Omega$ & $\hat{\Omega}$ & $\Omega$ & $\hat{\Omega}$  \\
\midrule
\midrule


Llama3-8B & 3.0 & 807.8 & 3.4 & 3409.5 & 2.7 & 649.3 & 3.2 & 888.5 & 3.1 & 578.7 & 2.5 & 1835.0 & 2.2 & 21.1 & 2.8 & 357.2 \\ 

Llama3-8B-SFT & 1.4 & 156.1 & 1.6 & 378.5 & 3.3 & 141.2 & 2.4 & 175.3 & 2.3 & 9.3 & 3.0 & 8.5 & 1.3 & 11.2 & 1.3 & 164.1 \\ 

\midrule

Qwen2.5-7B & 1.5 & 629.7 & 2.4 & 2363.9 & 1.8 & 558.5 & 3.1 & 1861.0 & 2.4 & 379.4 & 3.4 & 2439.0 & 2.1 & 336.5 & 2.63 & 1270.1 \\ 

Qwen2.5-7B-SFT & 1.7 & 227.2 & 2.7 & 1307.2 & 2.5 & 85.2 & 4.1 & 545.1 & 2.2 & 7.2 & 2.5 & 6.0 & 1.63 & 13.8 & 1.85 & 43.8 \\

\bottomrule
\bottomrule
\end{tabular}%
}
\vspace{-.1in}
\end{table*}

\subsection{Semantic Analysis}


Beyond analyzing the confidence trajectory, we also utilize the decoding function 
to visualize the semantic evolution of hidden states by mapping the decoded logits to the vocabulary space. Figure~\ref{fig:semantic_decoding} shows the semantics of Logit Lens (left, $m=0$) and \methodname (right, $m=5$) for Qwen2.5-7B, for other models and focal depths, see Appendix~\ref{appen:complete_semantic_decoding}. 

Consistent with prior observations~\cite{belrose2023eliciting,wang2025logitlens4llms}, Logit Lens fails to decode meaningful semantics on models like Qwen except for the very last layer. In Figure~\ref{fig:semantic_decoding}(left), we can only see meaningful semantics in the final layer. As a comparison, \methodname effectively decodes intelligible and highly suggestive semantics from the intermediate layers. Most surprisingly, we observe a pattern of ``evolution''.  For instance, in middle layers, we decode high-level and self-reflection tokens like \texttt{i am not sure} and \texttt{i want to know}. These semantics suggest that before converging to the definitive answer, the model undergoes a thinking phase. Our results indicate that unlike Tuned Lens~\cite{belrose2023eliciting} that requires an extra learned adapter to correct the semantic alignment, the model's own future layers inherently act as a natural and zero-shot adapter to provide effective semantic signals.

\section{Diagnosing Blind Confidence of SFT}
\label{sec:SFT}

\begin{table}[t]
\centering
\caption{Accuracy with and without SFT.}
\label{tab:accuracy_sft}
\vspace{-.1in}
\setlength{\tabcolsep}{3pt}
\resizebox{0.9\linewidth}{!}{
\begin{tabular}{l | cc | cc | cc | cc}
\toprule
\multirow{2}{*}{\textbf{Model}} & \multicolumn{2}{c|}{\textbf{Math}} & \multicolumn{2}{c|}{\textbf{Code}} & 
\multicolumn{2}{c|}{\textbf{Logic}} & \multicolumn{2}{c}{\textbf{General}} \\
\cmidrule(lr){2-3} \cmidrule(lr){4-5} \cmidrule(lr){6-7} \cmidrule(lr){8-9}
 & \textbf{Easy} & \textbf{Hard} & \textbf{Easy} & \textbf{Hard} & \textbf{Easy} & \textbf{Hard} & \textbf{Easy} & \textbf{Hard} \\
\midrule
\midrule


Llama3-8B & 52\% & 4\% & 91\% & 70\% & 57\% & 42\% & 93\% & 37\% \\ 

Llama3-8B-SFT & 24\% & 3\% & 73\% & 59\% & 54\% & 18\% & 82\% & 25\% \\ 

\midrule

Qwen2.5-7B & 69\% & 9\% & 87\% & 71\% & 71\% & 39\% & 94\% & 28\% \\ 

Qwen2.5-7B-SFT & 55\% & 4\% & 81\% & 63\% & 44\% & 20\% & 96\% & 27\% \\ 

\bottomrule
\bottomrule
\end{tabular}%
}
\vspace{-.2in}
\end{table}

In this section, we investigate the impact of SFT on the model's performance using \methodname.

\para{Setup.} We employ two base models, Llama3-8B and Qwen2.5-7B, and four datasets, GSM8K~\cite{cobbe2021training}, MBPP~\cite{austin2021program}, babi QA~\cite{weston2015towards}, and OpenBookQA~\cite{mihaylov2018can} for the four domains, Math, Code, Logic, and General QA, respectively, to conduct SFT using full parameter fine-tuning of Llama-Factory~\cite{zheng2024llamafactory}. The difficulty of the four datasets is between the easy and hard tasks stated in Section~\ref{sec:empirical}. We then analyze the impact using the original easy and hard task datasets on models without and with SFT, which is denoted with suffix ``-SFT''.

\para{Performance degradation.} Table~\ref{tab:accuracy_sft} shows the accuracy results on the original easy/hard task datasets with and without SFT. 
We observe a significant drop in accuracy across almost all the $16 = 2 (\text{models}) \times 2 (\text{SFT epochs}) \times 4 (\text{domains})$ cases, except for one corner case, i.e., Qwen-2.5-7B on the easy task of General QA. The results suggest that simple SFT is prone to degrading the model performance. 

\para{Mechanistic interpretation via \methodname.}
To interpret the internal mechanism, we use \methodname to visualize the confidence trajectory. Figure~\ref{fig:ordinary_sft} shows the trajectory of Llama3-8B with and without SFT on AIME. For full results, see Appendix~\ref{appen:SFT}. We observe that the confidence trajectory becomes higher after SFT, suggesting that simple SFT downgrades the cognitive effort by letting the model skip the necessary low-confidence struggle phase. We term this phenomenon as ``blind confidence''.


Quantitatively, Table~\ref{tab:refinement_results_sft} shows the refinement area ($\Omega$). We can see that while $\Omega$ decreases in most cases, it paradoxically increases in Math and Code domains for Qwen-2.5-7B. From detailed analysis of the model's outputs (see Appendix~\ref{appen:eg_sft}), we find that Qwen2.5-7B on these domains tends to output significantly shorter yet high-quality core tokens, like formulas and code snippets, without any helpful tokens like explanations and step-by-step considerations directly. 
Because $\Omega$ is defined as the average effort per token (see Equation~\ref{eq:conf_traj}), we also calculate the overall refinement area $\hat{\Omega}$ that aggregates $\Omega$ of the entire output sequence\footnote{{\footnotesize$\hat{\Omega} = \sum_{t} \Omega_{\text{each token}}, \forall t \in [1, T]$ and $\Omega_{\text{each token}}$ is calculated by measuring the confidence of each layer using a single output token.}} to capture the entire effort. We can see that $\hat{\Omega}$ sharply decreases across all cases after SFT. This indicates SFT reduces the model's total effort by lowering the per-token cognitive effort or omitting the helpful tokens, ultimately leading to the observed performance degradation.

\begin{figure}[t]
    \centering
    \includegraphics[width=0.75\linewidth]{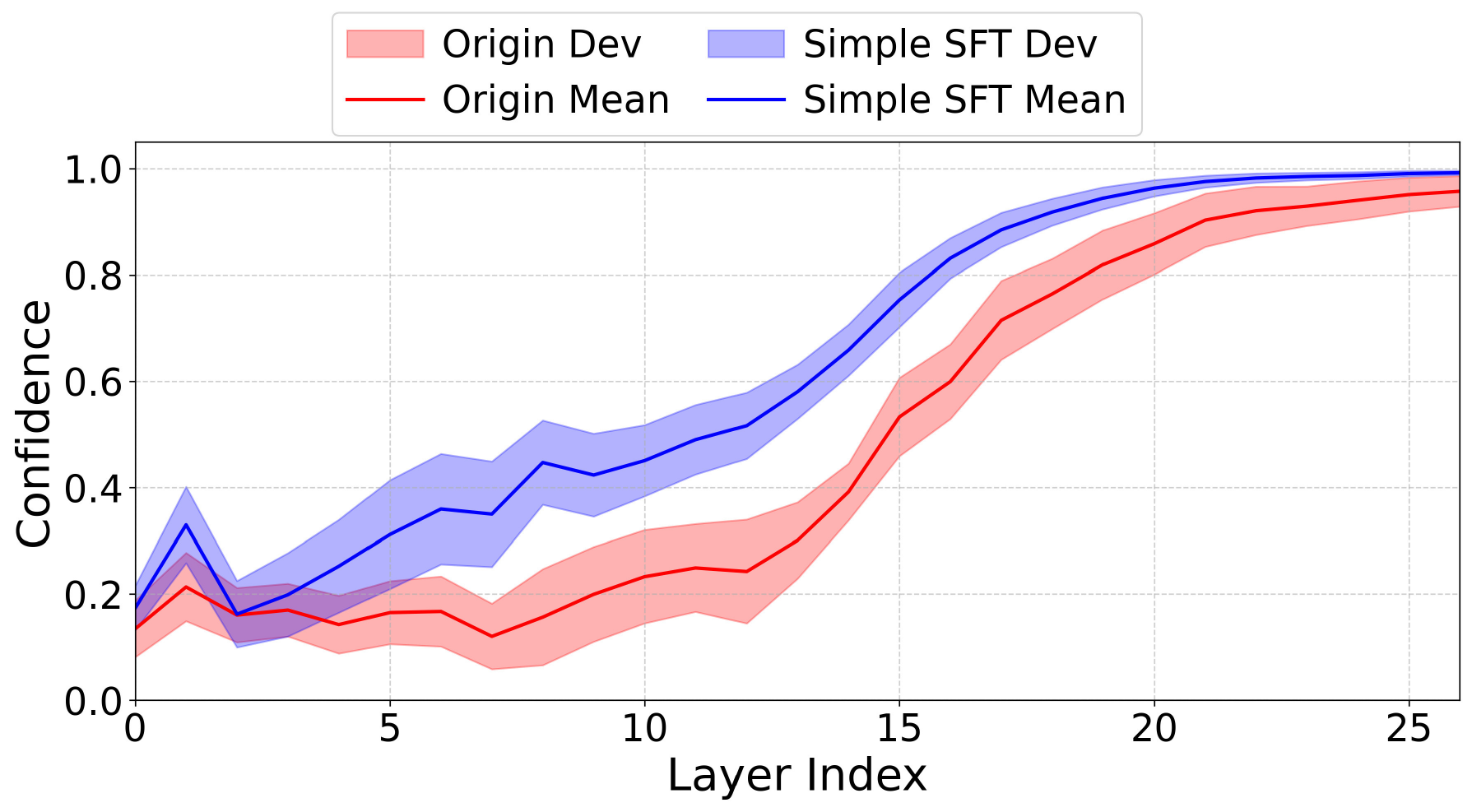}
    \vspace{-.1in}
    \caption{The confidence trajectory of Llama3-8B with and without SFT on AIME.
    }
    \label{fig:ordinary_sft}
    \vspace{-.2in}
\end{figure}





%% file: src/conclusion.tex
\section{Conclusion}
\label{sec:conclusion}

By leveraging the transformer’s inherent self-magnifying mechanism, we develop \methodname to precisely quantify a model's cognitive effort, overcoming the limitations of current probes.
The effectiveness of \methodname actually provides a novel insight for the interpretability community, i.e., the most effective probes through which to view a model's internal mysteries may be the model itself. Our findings suggest that leveraging the model's own features, such as the future layers, can resolve highly meaningful signals that are otherwise lost in the high-dimensional latent space. 
For future work, we will explore the utility of \methodname in other applications, including performance improvement, harmful behavior detection, and the interpretability of abnormal phenomena. More importantly, we plan to uncover additional intrinsic properties within the transformer architecture that can be leveraged for self-interpretation. By pursuing these paths, we aim to provide fresh insights and inspire future breakthroughs in AI interpretability.

\clearpage

\section*{Impact Statement}

This paper presents work whose goal is to enhance the transparency and interpretability of Large Language Models (LLMs). By introducing \methodname to quantify the internal cognitive effort of models, our work contributes to the development of more reliable and robust AI systems.

%% file: src/appendix.tex
\section{Proofs of results}
\label{appen:proof}

\subsection{Proof of Assumption~\ref{assump-lipschitz}}
\label{subappen:proof_assumption}

\begin{proof}
    From the fundamental theorem of calculus, we can express the difference as an integral along the path $H(t) = H + t\Delta$ for $t \in [0,1]$:
    \begin{equation}
        g_m(H + \Delta) - g_m(H) = \int_{0}^{1} \langle \nabla g_m(H + t\Delta), \Delta \rangle \, dt.
    \end{equation}
    Subtracting the first-order approximation $\langle \nabla g_m(H), \Delta \rangle$ from both sides, we analyze the remainder:
    \begin{align}
        &\left| g_m(H + \Delta) - g_m(H) - \langle \nabla g_m(H), \Delta \rangle \right| \nonumber \\
        &= \left| \int_{0}^{1} \langle \nabla g_m(H + t\Delta) - \nabla g_m(H), \Delta \rangle \, dt \right| \nonumber \\
        &\le \int_{0}^{1} \| \nabla g_m(H + t\Delta) - \nabla g_m(H) \| \cdot \|\Delta\| \, dt \quad (\text{Cauchy-Schwarz}) \nonumber \\
        &\le \int_{0}^{1} \beta \| t\Delta \| \cdot \|\Delta\| \, dt \quad (\text{by Assumption~\ref{assump-lipschitz}}) \nonumber \\
        &= \beta \|\Delta\|^2 \int_{0}^{1} t \, dt = \frac{\beta}{2} \|\Delta\|^2.
    \end{align}
    Removing the absolute value gives the lower bound: $g_m(H+\Delta) - g_m(H) \ge \langle \nabla g_m(H), \Delta \rangle - \frac{\beta}{2}\|\Delta\|^2$.
\end{proof}

\subsection{Proof of Theorem \ref{thm:monotonicity} (Monotonicity of Confidence)}
\label{subappen:proof_thm1}

To make the notation simpler and easier to follow, we assign a specific subscript to the residual update in this and the next subsection. It represents the update contributed by the $k$-th residual block on the hidden state at layer $i$:
$\Delta_{i,k}(H_{i, t}) = \mathcal{F}_k(H_{i, t}), ~k > i$.

In this section, we provide the derivation for the probabilistic bound on the confidence growth across layers. We establish that with sufficient alignment drift $\mu$ and sequence length $T$, the average margin $\bar{\mathcal{C}}^{(m)}_i$ increases monotonically with high probability.

\subsubsection{Step 1: Local Gain Analysis}
Consider the transition from layer $i$ to $i+1$. For a specific token at position $t$, the hidden state updates as $H_{i+1, t} = H_{i, t} + \Delta_{i+1, t}$. We define the single-step margin gain random variable $X_{i,t}$ as:
\begin{equation}
    X_{i,t} := g_m(H_{i+1, t}) - g_m(H_{i, t}).
\end{equation}
Invoking the \textbf{Quadratic Smoothness Bound} (Proposition~\ref{prop-smooth-bound}), which relates the function change to the gradient and the norm of the update, we have:
\begin{equation}
    X_{i,t} \ge \langle \nabla g_m(H_{i, t}), \Delta_{i+1, t} \rangle - \frac{\beta}{2} \|\Delta_{i+1, t}\|^2.
\end{equation}
Taking the conditional expectation with respect to the current state (filtration $\mathcal{F}_{t}$), and applying Assumption~\ref{assump-drift} (Positive Alignment Drift) and Assumption~\ref{assump-bound} (Boundedness $\|\Delta\| \le R$):
\begin{equation}
\begin{aligned}
    \mathbb{E}[X_{i,t} \mid \mathcal{F}_{t}] &\ge \mathbb{E}\left[\langle \nabla g_m(H_{i, t}), \Delta_{i+1, t} \rangle\right] - \frac{\beta}{2} \mathbb{E}\left[\|\Delta_{i+1, t}\|^2\right] \\
    &\ge \mu - \frac{\beta}{2} R^2.
\end{aligned}
\end{equation}
We define the \textbf{effective drift} $\gamma := \mu - \frac{\beta}{2} R^2$. Assuming the alignment component dominates the curvature penalty, we have $\gamma > 0$. Thus, the expected gain for any single token is strictly positive:
\begin{equation}
    \mathbb{E}[X_{i,t}] \ge \gamma.
\end{equation}

\subsubsection{Step 2: Sequence Aggregation and Martingale Construction}
We aim to bound the behavior of the average margin over the prompt of length $T$. The change in the average margin is:
\begin{equation}
    \Delta \bar{\mathcal{C}} = \bar{\mathcal{C}}^{(m)}_{i+1} - \bar{\mathcal{C}}^{(m)}_{i} = \frac{1}{T} \sum_{t=1}^T X_{i,t}.
\end{equation}
Let $S_T = \sum_{t=1}^T X_{i,t}$ be the total gain. We construct a Martingale Difference Sequence (MDS) to analyze the fluctuations of $S_T$ around its expectation. Let $Y_t$ be the centered variable:
\begin{equation}
    Y_t = X_{i,t} - \mathbb{E}[X_{i,t} \mid \mathcal{F}_{t-1}].
\end{equation}
The sequence $\{Z_k\}_{k=1}^T$ defined by $Z_k = \sum_{t=1}^k Y_t$ is a martingale. According to Assumption~\ref{assump-bound}, the fluctuations of the margin are bounded by $b$. Specifically, we assume the centered random variable is bounded such that $|Y_t| \le b$ almost surely.

\subsubsection{Step 3: Concentration via Azuma-Hoeffding}
We are interested in the failure probability, defined as the event where the average margin does not increase (i.e., $\Delta \bar{\mathcal{C}} \le 0$). This is equivalent to sum $S_T \le 0$.

Decomposing the sum:
\begin{equation}
    S_T = \sum_{t=1}^T \mathbb{E}[X_{i,t}] + \sum_{t=1}^T Y_t \ge T\gamma + Z_T.
\end{equation}
Thus, the event $S_T \le 0$ implies $T\gamma + Z_T \le 0$, or $Z_T \le -T\gamma$. We apply the \textbf{Azuma-Hoeffding Inequality} for bounded martingale differences. For any $\epsilon > 0$:
\begin{equation}
    \mathbb{P}(Z_T \le -\epsilon) \le \exp\left( -\frac{\epsilon^2}{2 \sum_{t=1}^T c_t^2} \right),
\end{equation}
where $c_t$ is the bound on $Y_t$. Since $|Y_t| \le b$, the range of $Y_t$ is at most $2b$, relying on the standard form for bounded random variables centered at 0, strictly $|Y_t| \le b$ implies the step squared in the denominator is bounded.
Substituting $\epsilon = T\gamma$:
\begin{equation}
\begin{aligned}
    \mathbb{P}\left(\Delta \bar{\mathcal{C}} \le 0\right) = \mathbb{P}(S_T \le 0) &\le \mathbb{P}(Z_T \le -T\gamma) \\
    &\le \exp\left( -\frac{(T\gamma)^2}{2 \sum_{t=1}^T b^2} \right) \\
    &= \exp\left( -\frac{T^2 \gamma^2}{2 T b^2} \right) \\
    &= \exp\left( -\frac{T \gamma^2}{2b^2} \right).
\end{aligned}
\end{equation}
This concludes the proof. The average confidence increases monotonically with a probability that approaches 1 exponentially fast as the sequence length $T$ or the effective alignment strength $\gamma$ increases.
\qed

\subsection{Proof of Theorem~\ref{thm:head_lift} (Magnification of Focal Depth)}
\label{subappen:proof_thm_head_lift}

To prove Theorem~\ref{thm:head_lift}, we treat the increase in decoding head depth effectively as a ``virtual'' forward pass through an additional Transformer layer which is similar to \ref{subappen:proof_thm1}.

\subsubsection{Step 1: Local Function Analysis}
Let $D_m(\cdot)$ denote the function represented by the decoding head consisting of the final $m$ layers of the model plus the unembedding projection. Let $\mathcal{F}_{k}(\cdot)$ denote the transformation of the $k$-th Transformer block (where $k = N-m$ is the specific layer added when moving from depth $m$ to $m+1$). 

By definition, applying the deeper head $D_{m+1}$ to a hidden state $H_{i,t}$ is mathematically equivalent to passing $H_{i,t}$ through the block $\mathcal{F}_{N-m}$ first, and then applying the shallower head $D_m$ to the result. Formally:
\begin{equation}
    D_{m+1}(H_{i,t}) \equiv D_m\left( \mathcal{F}_{N-m}(H_{i,t}) \right).
\end{equation}
Consequently, the logit margin calculated by the deeper head, denoted as $\mathcal{C}^{(m+1)}_{i,t}$, can be expressed in terms of the margin function $g_m$ evaluated at a perturbed state:
\begin{equation}
    \mathcal{C}^{(m+1)}_{i,t} = g_m\left( \mathcal{F}_{N-m}(H_{i,t}) \right) = g_m(H_{i,t} + \Delta_{head}),
\end{equation}
where $\Delta_{head} = \mathcal{F}_{N-m}(H_{i,t}) - H_{i,t}$ represents the residual update contributed by the specific layer $N-m$.

\subsubsection{Step 2: Sequence Aggregation and Martingale Construction}
We are interested in the probability that the average margin increases, i.e., $\frac{1}{T}\sum_t (\mathcal{C}^{(m+1)}_{i,t} - \mathcal{C}^{(m)}_{i,t}) > 0$. Let $X_t = \mathcal{C}^{(m+1)}_{i,t} - \mathcal{C}^{(m)}_{i,t}$ be the instantaneous gain at token $t$.

Using the Quadratic Smoothness Bound (Proposition~\ref{prop-smooth-bound}), we expand the gain:
\begin{equation}
    X_t = g_m(H_{i,t} + \Delta_{head}) - g_m(H_{i,t}) \ge \langle \nabla g_m(H_{i,t}), \Delta_{head} \rangle - \frac{\beta}{2}\|\Delta_{head}\|^2.
\end{equation}

By Assumption~\ref{assump-bound}, the residual norm is bounded by $R$. By Assumption~\ref{assump-drift}, provided the layer index $N-m$ falls within the refinement phase ($N-m \ge i_0$), the alignment condition holds: $\mathbb{E}[\langle \nabla g_m, \Delta_{head} \rangle] \ge \mu$. Incorporating these into the expectation of $X_t$:
\begin{align}
    \mathbb{E}[X_t \mid \mathcal{F}_{t-1}] &\ge \mathbb{E}\left[ \langle \nabla g_m(H_{i,t}), \Delta_{head} \rangle \right] - \frac{\beta}{2}R^2 \\
    &\ge \mu - \frac{\beta}{2}R^2 := \gamma.
\end{align}
Here, $\gamma$ represents the strictly positive effective drift (under the condition that alignment strength $\mu$ dominates the curvature penalty).

\subsubsection{Step 3: Concentration via Azuma-Hoeffding}
We now bound the probability that the empirical average $\bar{X} = \frac{1}{T}\sum_{t=1}^T X_t$ is non-positive. Let $S_T = \sum_{t=1}^T X_t$. We construct a martingale difference sequence to control the deviation from the expected growth.

Since the single-step fluctuations are bounded by $b$ (Assumption~\ref{assump-bound}), we apply the Azuma-Hoeffding inequality. We analyze the event where the sum is negative:
\begin{equation}
    \mathbb{P}(S_T \le 0) = \mathbb{P}(S_T - \mathbb{E}[S_T] \le -\mathbb{E}[S_T]).
\end{equation}
Since $\mathbb{E}[S_T] \ge T\gamma$, this implies:
\begin{equation}
    \mathbb{P}(S_T \le 0) \le \mathbb{P}(S_T - \mathbb{E}[S_T] \le -T\gamma).
\end{equation}
Applying the inequality for bounded random variables:
\begin{equation}
    \mathbb{P}\left( \bar{\mathcal{C}}^{(m+1)}_i - \bar{\mathcal{C}}^{(m)}_i \le 0 \right) \le \exp\left( -\frac{(T\gamma)^2}{2 \sum_{t=1}^T b^2} \right) = \exp\left( -\frac{T^2 \gamma^2}{2 T b^2} \right) = \exp\left( -\frac{T \gamma^2}{2 b^2} \right).
\end{equation}
Conversely, the probability that the deeper head yields a strictly higher confidence is:
\begin{equation}
    \mathbb{P}\left( \bar{\mathcal{C}}^{(m+1)}_i > \bar{\mathcal{C}}^{(m)}_i \right) \ge 1 - \exp\left( -\frac{T \gamma^2}{2 b^2} \right).
\end{equation}
This completes the proof. \hfill $\square$

\section{Algorithms to compute refinement index automatically}
\label{appen:algorithm}

\subsection{Algorithm to compute re}
\label{subappen:algo-re}

\begin{algorithm}[H]
  \caption{Calculation of Refinement End ($re$)}
  \label{alg:calc_re}
  \begin{algorithmic}
    \STATE {\bfseries Input:} Data sequence $\mathcal{C} = \{\mathcal{C}_0, \mathcal{C}_1, \dots, \mathcal{C}_k\}$, scalar $threshold$
    \STATE {\bfseries Output:} Index $re$
    
    \STATE Initialize $re = 0$  \COMMENT{Find the minimum index $i$ such that $\forall j \geq i, \mathcal{C}_j \geq \mathcal{C}_k - threshold$}
    \FOR{$i = k-1$ {\bfseries down to} $0$}
        \IF{$\mathcal{C}_i < \mathcal{C}_k - threshold$}
            \STATE $re = i + 1$
            \STATE {\bfseries break}
        \ENDIF
    \ENDFOR
    
    \STATE \textbf{return} $re$
  \end{algorithmic}
\end{algorithm}
\vspace{-.1in}

In the refinement layers, we observe that almost all the curve increases firstly, then maintains in a relatively high value. We call the middle variable Refinement End (re). To robustly find the index, we set a threshold value. Specifically in the experiment, we set $threshold = 0.07$。

\subsection{Algorithm to compute $i_0$}
\label{subappen:algo-i_0}

\begin{algorithm}[H]
  \caption{Calculation of $i_0$}
  \label{alg:calc_i_0}
  \begin{algorithmic}
    \STATE {\bfseries Input:} Data sequence $\mathcal{C}$, index $re$, scalar $threshold$
    \STATE {\bfseries Output:} Index $i_0$
    
    \STATE Initialize $rmin = re$ \COMMENT{Step 1: Find $rmin$ satisfying the convexity condition}
    \FOR{$i = re - 1$ {\bfseries down to} $0$}
        \IF{$i > 0$}
            \STATE $val = (\mathcal{C}_i + \mathcal{C}_{i-1}) / 2$
        \ELSE
            \STATE $val = \mathcal{C}_0$ \COMMENT{Using $\mathcal{C}_0 \leq \mathcal{C}_1$ boundary check logic}
        \ENDIF
        
        \IF{$val \leq \mathcal{C}_{i+1}$}
            \STATE $rmin = i$
        \ELSE
            \STATE {\bfseries break}
        \ENDIF
    \ENDFOR

    \STATE Initialize $i_0 = rmin$ \COMMENT{Step 2: Find $i_0$ based on $rmin$ and threshold}
    \FOR{$j = rmin$ {\bfseries to} $re - 1$}
        \IF{$j \geq rmin + threshold$} 
            \STATE $i_0 = j - 1$
            \STATE {\bfseries break}
        \ENDIF
    \ENDFOR

    \STATE \textbf{return} $i_0$
  \end{algorithmic}
\end{algorithm}
\vspace{-.1in}

Since the refinement layers are the layers that decode the latent space to the output vocabulary. By our observation, we find that in these layers, almost all the curves increase firstly, then maintains in a relatively high value for a while. So to find the $i_0$, we need to define the meaning of increase. To make it robust, we let the definition of increase to be $\frac{\mathcal{C}_{i-1} + \mathcal{C}_i}{2} \leq \mathcal{C}_{i+1}$. Using the definition of increase and the middle variable re, we can find another middle variable rmin which means the start index of robust increase. Similarly to calculate the middle variable re, we set a threshold to find $i_0$. Here we also set $threshold = 0.07$ in the experiments.

\subsection{Algorithm to compute Refinement Area ($\Omega$)}
\label{subappen:algo-ra}

\begin{algorithm}[H]
  \caption{Calculation of Refinement Area ($\Omega$)}
  \label{alg:calc_ra}
  \begin{algorithmic}
    \STATE {\bfseries Input:} Data sequence $\mathcal{C} = \{\mathcal{C}_0, \mathcal{C}_1, \dots, \mathcal{C}_k\}$, index $i_0$
    \STATE {\bfseries Output:} Value $\Omega$
    
    \STATE Initialize $\Omega = 0$ \COMMENT{Calculate Refinement Area $\Omega$ according to $i_0$}
    \FOR{$i = i_0$ {\bfseries to} $k$}
        \STATE $\Omega = \Omega + (1 - \mathcal{C}_i)$
    \ENDFOR
    
    \STATE \textbf{return} $\Omega$
  \end{algorithmic}
\end{algorithm}
\vspace{-.1in}

When we find the start point of refinement layers $i_0$, according to the definition in the main text, $\Omega = \sum_{i \ge i_0} (1 - \mathcal{C}_i)$ (area above the confidence curve).

\section{Full Results of Varied Focal Depth m}
\label{appen:varid_m}

In this section, we present the comprehensive experimental results regarding the impact of focal depth $m$ on confidence trajectories. Figures~\ref{fig:varied_m_math-level12} through~\ref{fig:varied_m_gpqa} display the confidence trajectories for four models (Llama3-8B, Qwen2.5-3B, Qwen3-0.6B, and Deepseek-7B) across eight diverse datasets: MATH (Level 1-2), AIME, CoNaLa, APPS, RuleTaker, ZebraLogic, ARC-Easy, and GPQA.

These extensive results universally corroborate the conclusions drawn in the main text:
\textbf{Universal Magnification:} The strict ordering of trajectories ($m=5 > m=3 > m=1$) holds across almost all diverse tasks, confirming that larger focal depths consistently magnify confidence signals.
\textbf{Robust Model Characteristics:} Both the model-specific sensitivity to $m$ and the intrinsic starting locations of the refinement layers almost remain consistent regardless of the input dataset.

\begin{figure*}[!htbp]
    \centering
    \includegraphics[width=\textwidth]{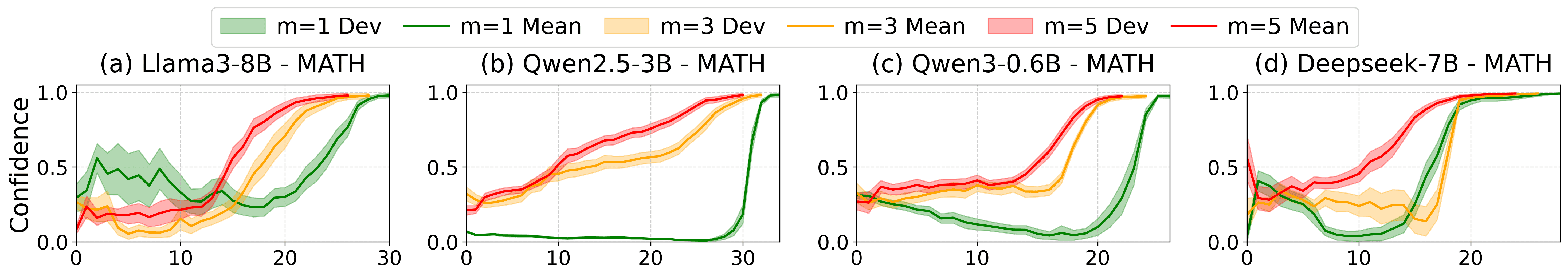}
    \caption{The confidence trajectory of Llama3-8B, Qwen2.5-3B, Qwen3-0.6B, Deepseek-7B on dataset Math (Level 1-2).
    }
    \label{fig:varied_m_math-level12}
    \vspace{-.1in}
\end{figure*}

\begin{figure*}[!htbp]
    \centering
    \includegraphics[width=\textwidth]{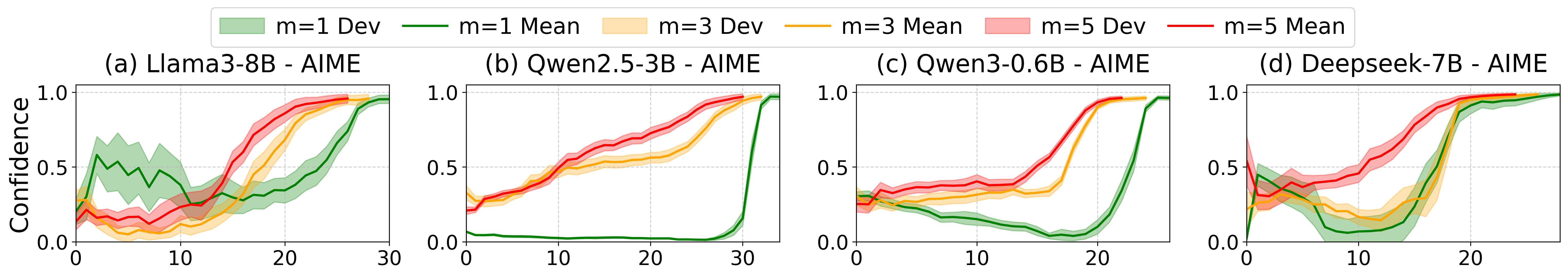}
    \caption{The confidence trajectory of Llama3-8B, Qwen2.5-3B, Qwen3-0.6B, Deepseek-7B on dataset AIME.
    }
    \label{fig:varied_m_aime}
    \vspace{-.1in}
\end{figure*}

\begin{figure*}[!htbp]
    \centering
    \includegraphics[width=\textwidth]{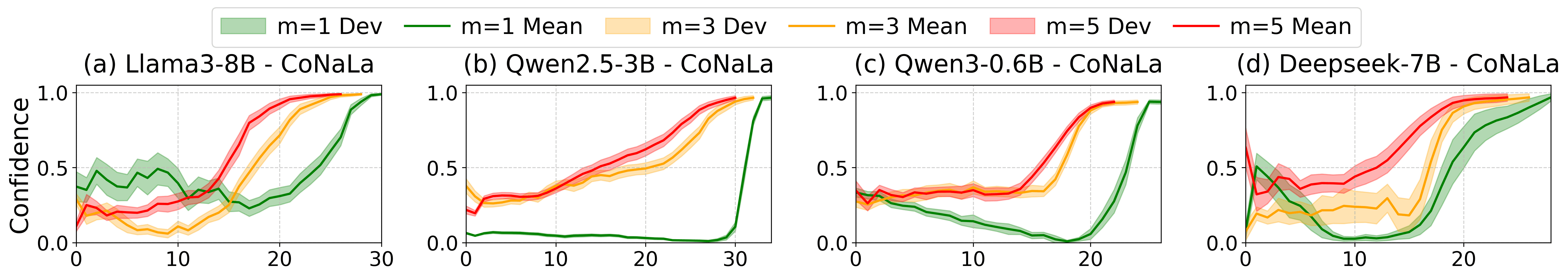}
    \caption{The confidence trajectory of Llama3-8B, Qwen2.5-3B, Qwen3-0.6B, Deepseek-7B on dataset CoNaLa.
    }
    \label{fig:varied_m_conala}
    \vspace{-.1in}
\end{figure*}

\clearpage

\begin{figure*}[!htbp]
    \centering
    \includegraphics[width=\textwidth]{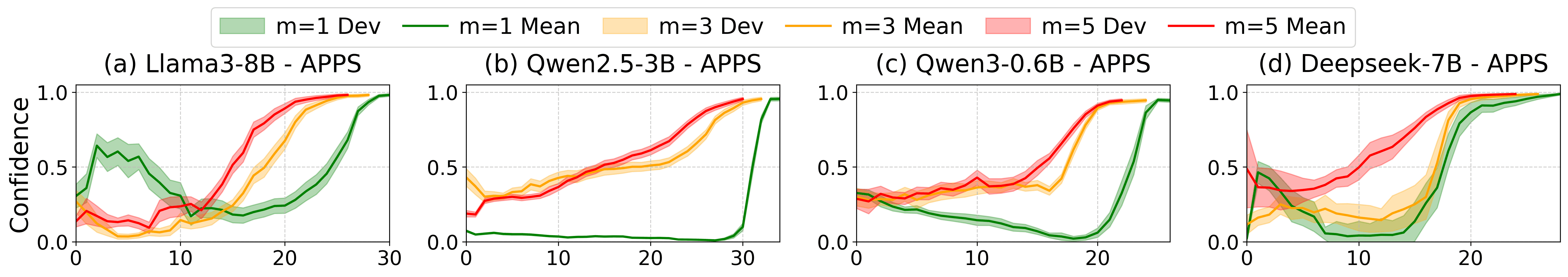}
    \caption{The confidence trajectory of Llama3-8B, Qwen2.5-3B, Qwen3-0.6B, Deepseek-7B on dataset APPS.
    }
    \label{fig:varied_m_apps}
\end{figure*}

\begin{figure*}[!htbp]
    \centering
    \includegraphics[width=\textwidth]{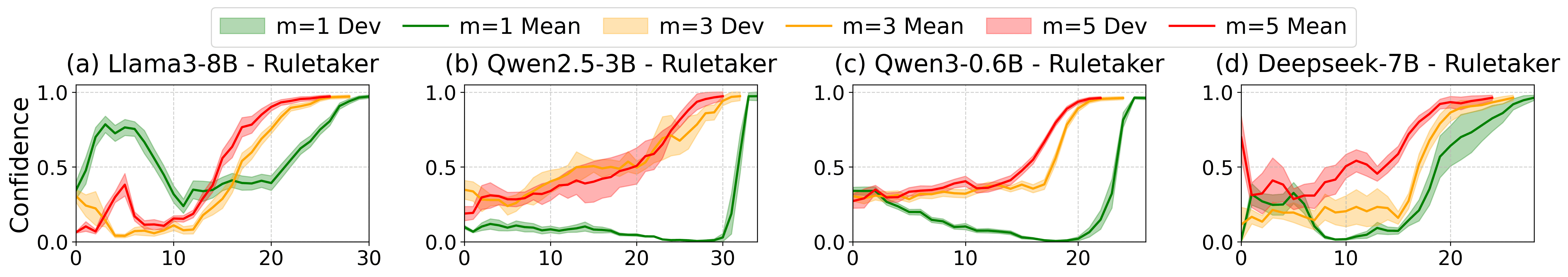}
    \caption{The confidence trajectory of Llama3-8B, Qwen2.5-3B, Qwen3-0.6B, Deepseek-7B on dataset Ruletaker.
    }
    \label{fig:varied_m_ruletaker}
\end{figure*}

\begin{figure*}[!htbp]
    \centering
    \includegraphics[width=\textwidth]{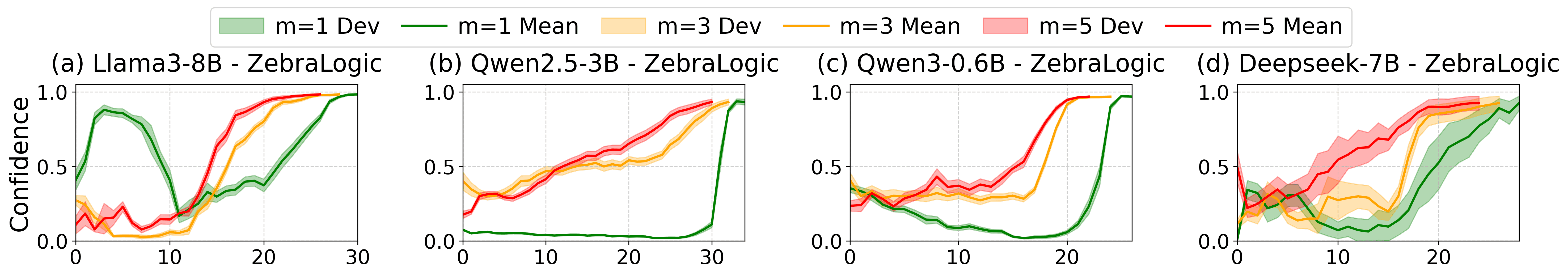}
    \caption{The confidence trajectory of Llama3-8B, Qwen2.5-3B, Qwen3-0.6B, Deepseek-7B on dataset ZebraLogic.
    }
    \label{fig:varied_m_logicbench}
\end{figure*}

\begin{figure*}[!htbp]
    \centering
    \includegraphics[width=\textwidth]{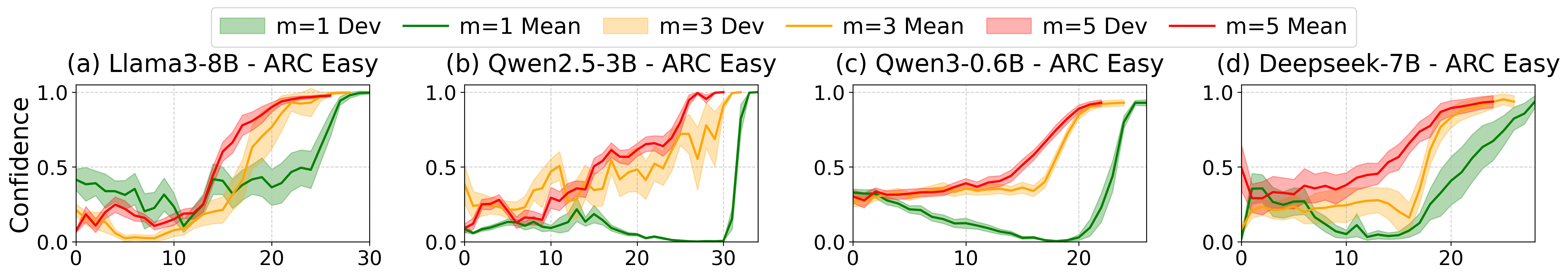}
    \caption{The confidence trajectory of Llama3-8B, Qwen2.5-3B, Qwen3-0.6B, Deepseek-7B on dataset ARC Easy.
    }
    \label{fig:varied_m_arc-easy}
\end{figure*}

\begin{figure*}[!htbp]
    \centering
    \includegraphics[width=\textwidth]{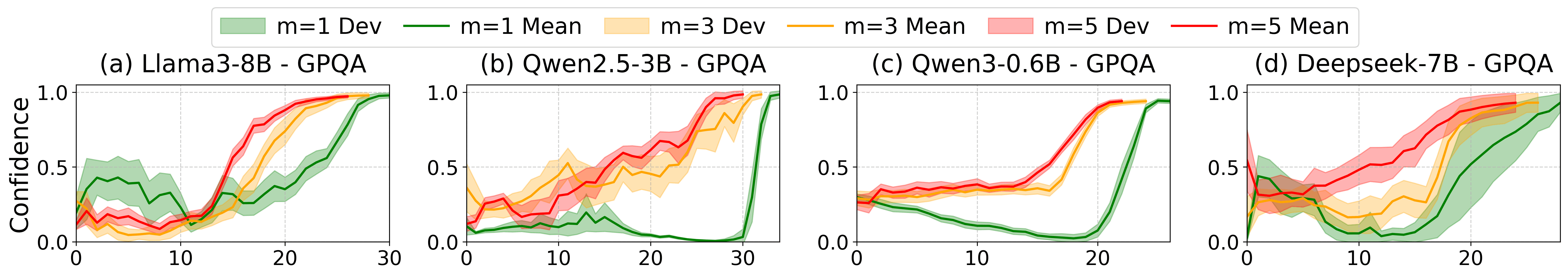}
    \caption{The confidence trajectory of Llama3-8B, Qwen2.5-3B, Qwen3-0.6B, Deepseek-7B on dataset GPQA.
    }
    \label{fig:varied_m_gpqa}
\end{figure*}

\clearpage

\section{Full Results of Cognitive Effort Difference}
\label{appen:cog_effort}

In the main text, we demonstrated the cognitive effort measurement using Qwen2.5-7B as a representative example. In this section, we provide the complete visualization results across a diverse set of LLMs with varying parameter sizes (0.5B to 32B) and architectures (Llama-3, Deepseek, Qwen2.5, and Qwen-3). We compare the confidence trajectories under standard decoding (focal depth $m=0$) and our look-ahead approach (e.g., $m=5$).

\subsection{Impact of Focal Depth on Trajectory Visibility}
Figures~\ref{fig:cognitive_llama3-8B_m0} through~\ref{fig:cognitive_qwen3-4b-m5} illustrate the confidence trajectories for different models. A distinct pattern emerges when comparing the focal depths:

\para{Standard Decoding ($m=0$):} As seen in Figure~\ref{fig:cognitive_llama3-8B_m0} (Llama3-8B) and Figure~\ref{fig:semantic_decoding_Qwen2.5-32B_m01} (Qwen2.5-32B), the confidence trajectories for the standard next-token prediction often exhibit a trivial step-function behavior or rapid saturation. In these cases, the confidence remains near zero or shoots to one immediately, masking the internal reasoning process. This confirms that standard output probabilities are insufficient for fine-grained cognitive effort measurement.

\para{Look-ahead Refinement ($m \ge 1$):} When a focal depth is applied (e.g., $m=5$ in Figure~\ref{fig:cognitive_llama3-8B_m5} and Figure~\ref{fig:cognitive_qwen2.5-32b-m5}), the trajectories reveal a dynamic evolution of confidence across layers. This validates our hypothesis that a look-ahead window is necessary to observe the gradual resolution of uncertainty, distinct from immediate token recognition.

\subsection{Consistency of Easy vs. Hard Distinction}
Across all evaluated models, the distinction between ``Easy'' (blue curves) and ``Hard'' (red curves) tasks remains consistent with the observations in the main text:

\para{Layer-wise Separation:} In models equipped with sufficient focal depth, the confidence for easy tasks consistently rises earlier and remains higher than that of hard tasks throughout the intermediate and later layers. For example, in Qwen2.5-7B with $m=5$ (Figure~\ref{fig:cognitive_qwen2.5-7b-m5}) and Qwen2.5-32B with $m=5$ (Figure~\ref{fig:cognitive_qwen2.5-32b-m5}), the blue region is clearly situated above the red region.
    
\para{Domain Sensitivity:} The gap between easy and hard trajectories remains most significant in the \textbf{Math} and \textbf{Code} domains across all model families. This supports the intuition that these reasoning-intensive tasks induce greater ``cognitive load,'' requiring more layers to bridge the confidence gap for harder instances. Conversely, the \textbf{General QA} domain frequently shows overlapping trajectories (e.g., Figure~\ref{fig:cognitive_ds_m5} (d)), reaffirming that knowledge retrieval tasks exhibit less layer-wise reasoning differentiation than logical tasks.

\subsection{Model Scale and Stability}
We also observe the effect of model scale on cognitive effort visualization. Larger models, such as Qwen2.5-32B (Figure~\ref{fig:cognitive_qwen2.5-32b-m5}), exhibit smoother trajectories with tighter confidence intervals. In contrast, smaller models like Qwen2.5-0.5B (Figure~\ref{fig:cognitive_qwen2.5-0.5b-m5}) and Qwen3-0.6B (Figure~\ref{fig:cognitive_qwen3-0.6b-m5}) display higher variance (indicated by wider shaded regions), reflecting their relative instability and higher uncertainty when dealing with complex reasoning tasks.

Collectively, these results demonstrate the universality of our cognitive effort measurement method across different model structures and scales.

\begin{figure*}[h]
    \centering
    \includegraphics[width=\textwidth]{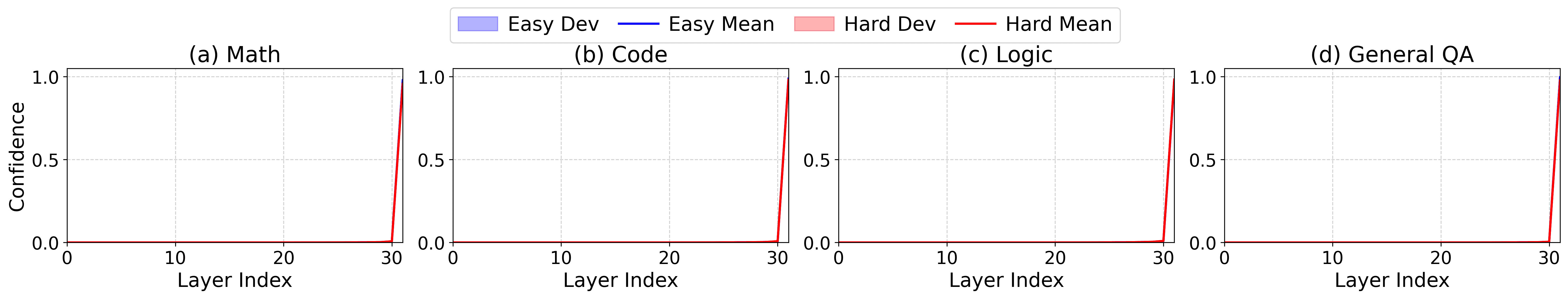}
    \caption{Cognitive Effort Difference of Llama3-8B with focal depth m = 0 for 4 domains.
    }
    \label{fig:cognitive_llama3-8B_m0}
\end{figure*}

\begin{figure*}[h]
    \centering
    \includegraphics[width=\textwidth]{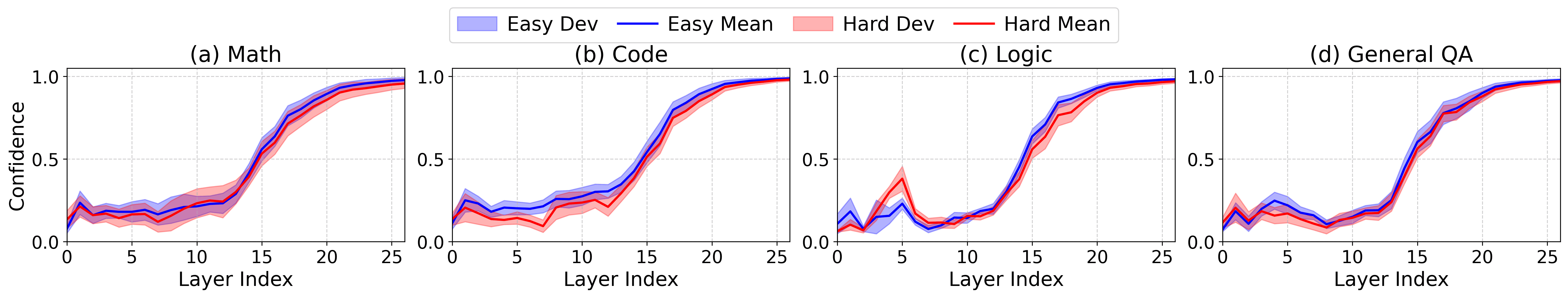}
    \caption{Cognitive Effort Difference of Llama3-8B with focal depth m = 5 for 4 domains.
    }
    \label{fig:cognitive_llama3-8B_m5}
\end{figure*}

\begin{figure*}[h]
    \centering
    \includegraphics[width=\textwidth]{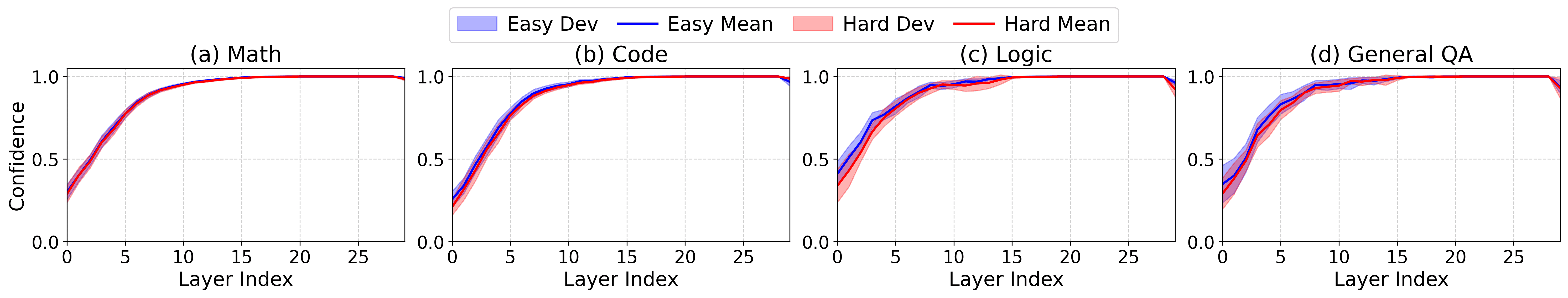}
    \caption{Cognitive Effort Difference of Deepseek-7B with focal depth m = 0 for 4 domains.
    }
    \label{fig:cognitive_ds_m0}
\end{figure*}

\begin{figure*}[h]
    \centering
    \includegraphics[width=\textwidth]{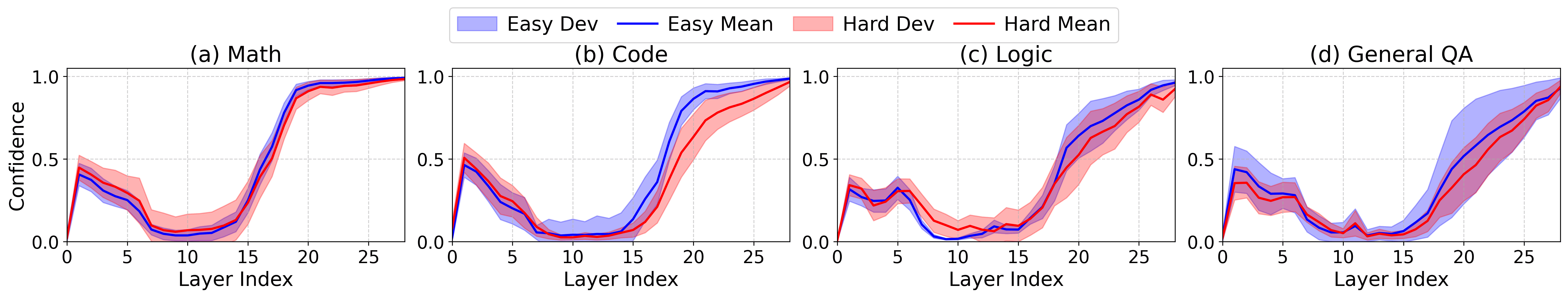}
    \caption{Cognitive Effort Difference of Deepseek-7B with focal depth m = 1 for 4 domains.
    }
    \label{fig:cognitive_ds_m5}
\end{figure*}

\begin{figure*}[h]
    \centering
    \includegraphics[width=\textwidth]{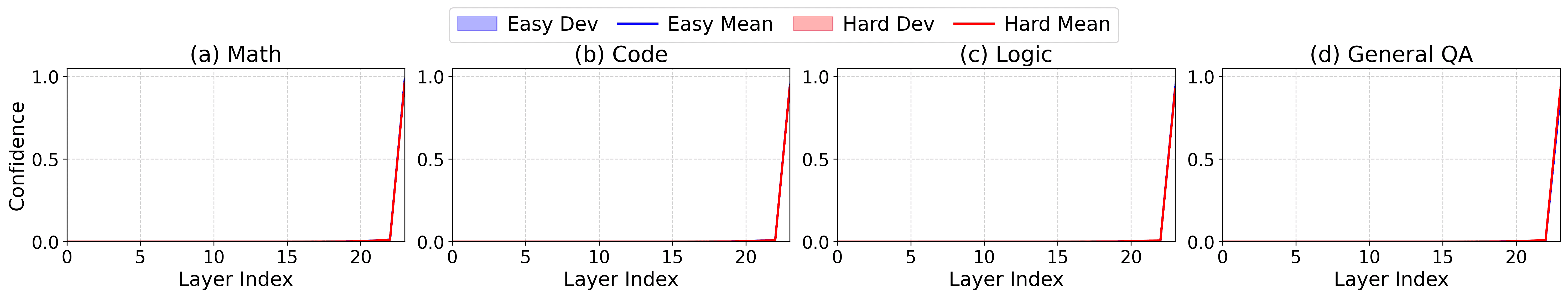}
    \caption{Cognitive Effort Difference of Qwen2.5-0.5B-Instruct with focal depth m = 0 for 4 domains.
    }
    \label{fig:cognitive_qwen2.5-0.5b-m0}
\end{figure*}

\begin{figure*}[h]
    \centering
    \includegraphics[width=\textwidth]{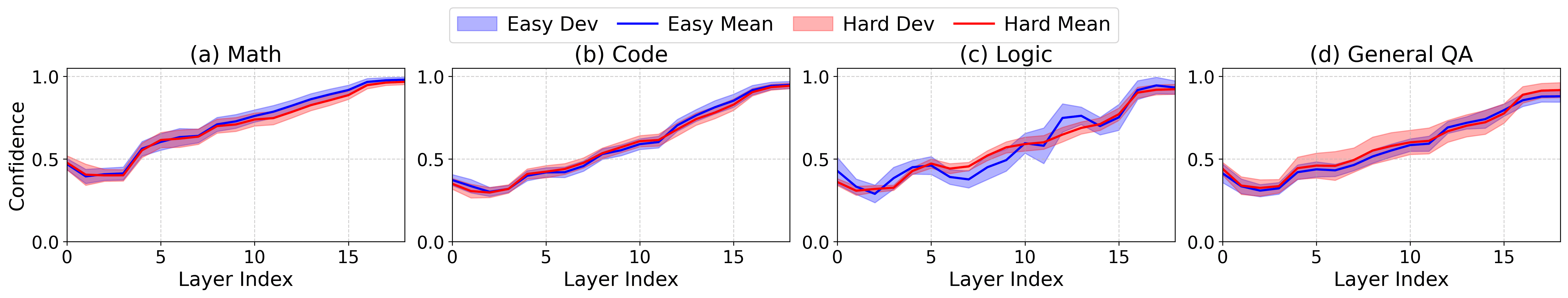}
    \caption{Cognitive Effort Difference of Qwen2.5-0.5B-Instruct with focal depth m = 5 for 4 domains.
    }
    \label{fig:cognitive_qwen2.5-0.5b-m5}
\end{figure*}

\begin{figure*}[h]
    \centering
    \includegraphics[width=\textwidth]{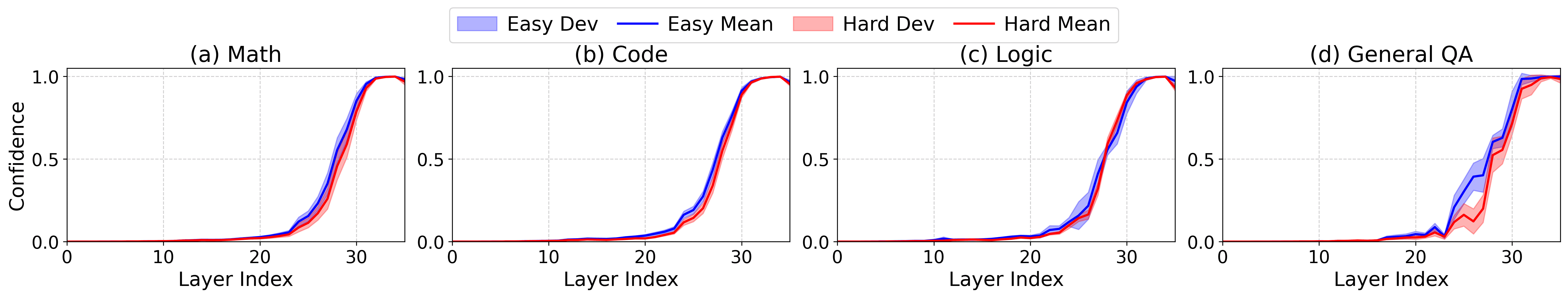}
    \caption{Cognitive Effort Difference of Qwen2.5-3B-Instruct with focal depth m = 0 for 4 domains.
    }
    \label{fig:cognitive_qwen2.5-3b-m0}
\end{figure*}

\begin{figure*}[h]
    \centering
    \includegraphics[width=\textwidth]{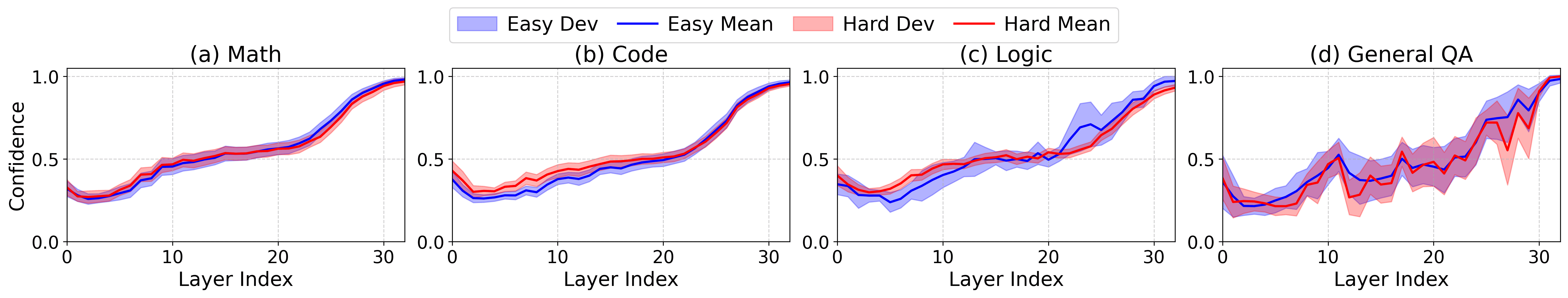}
    \caption{Cognitive Effort Difference of Qwen2.5-3B-Instruct with focal depth m = 3 for 4 domains.
    }
    \label{fig:cognitive_qwen2.5-3b-m3}
\end{figure*}

\begin{figure*}[h]
    \centering
    \includegraphics[width=\textwidth]{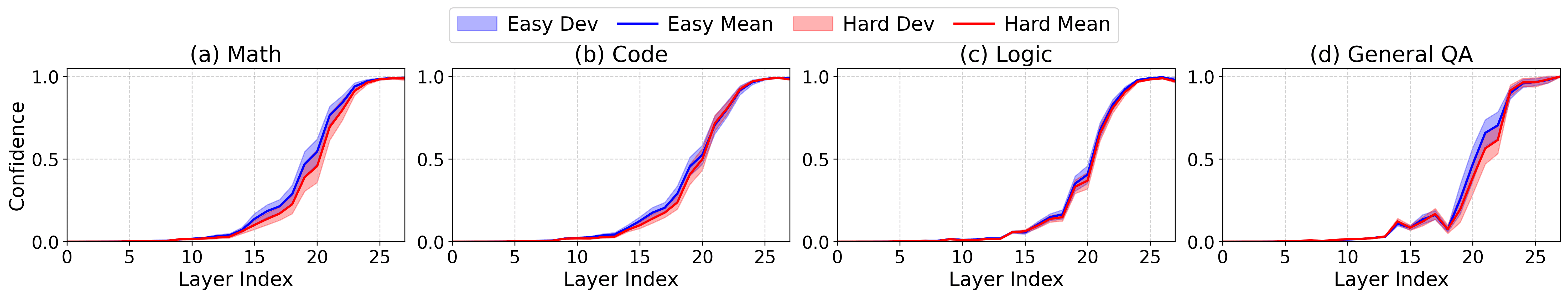}
    \caption{Cognitive Effort Difference of Qwen2.5-7B-Instruct with focal depth m = 0 for 4 domains.
    }
    \label{fig:cognitive_qwen2.5-7b-m0}
\end{figure*}

\begin{figure*}[h]
    \centering
    \includegraphics[width=\textwidth]{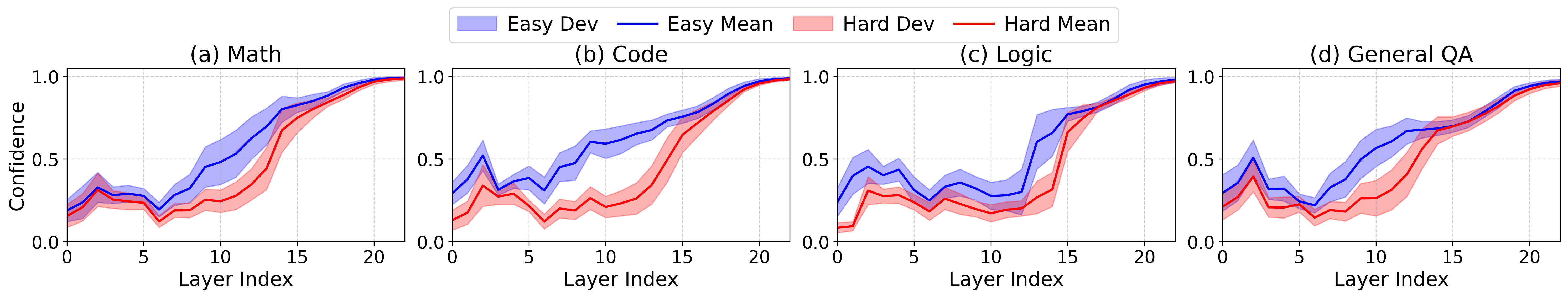}
    \caption{Cognitive Effort Difference of Qwen2.5-7B-Instruct with focal depth m = 5 for 4 domains.
    }
    \label{fig:cognitive_qwen2.5-7b-m5}
\end{figure*}

\begin{figure*}[h]
    \centering
    \includegraphics[width=\textwidth]{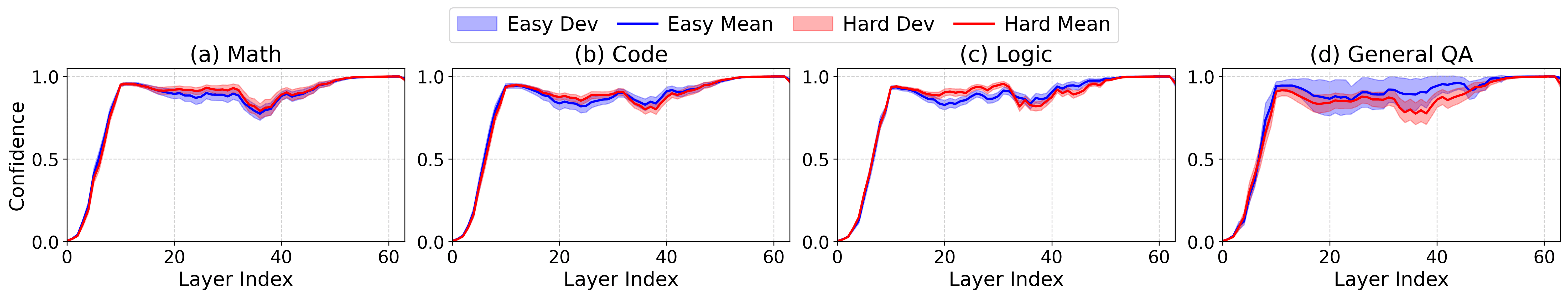}
    \caption{Cognitive Effort Difference of Qwen2.5-32B-Instruct with focal depth m = 0 for 4 domains.
    }
    \label{fig:cognitive_qwen2.5-32b-m0}
\end{figure*}

\begin{figure*}[h]
    \centering
    \includegraphics[width=\textwidth]{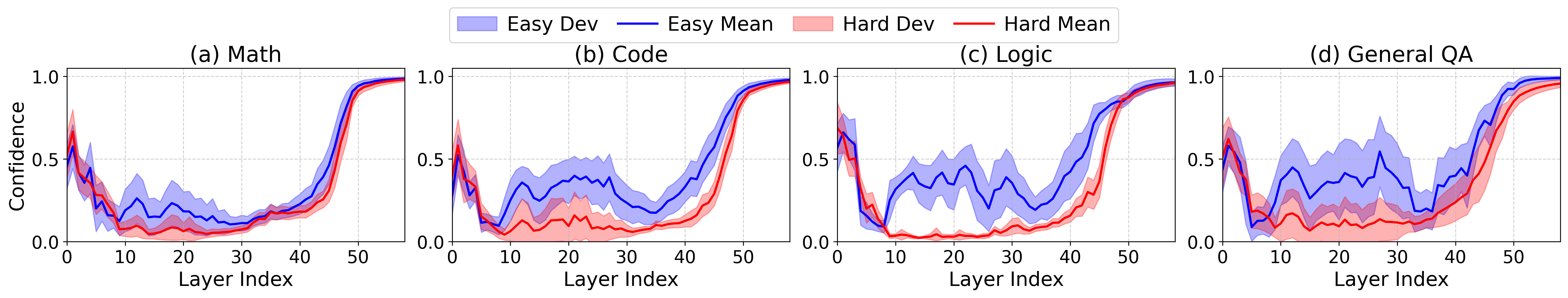}
    \caption{Cognitive Effort Difference of Qwen2.5-32B-Instruct with focal depth m = 5 for 4 domains.
    }
    \label{fig:cognitive_qwen2.5-32b-m5}
\end{figure*}

\begin{figure*}[h]
    \centering
    \includegraphics[width=\textwidth]{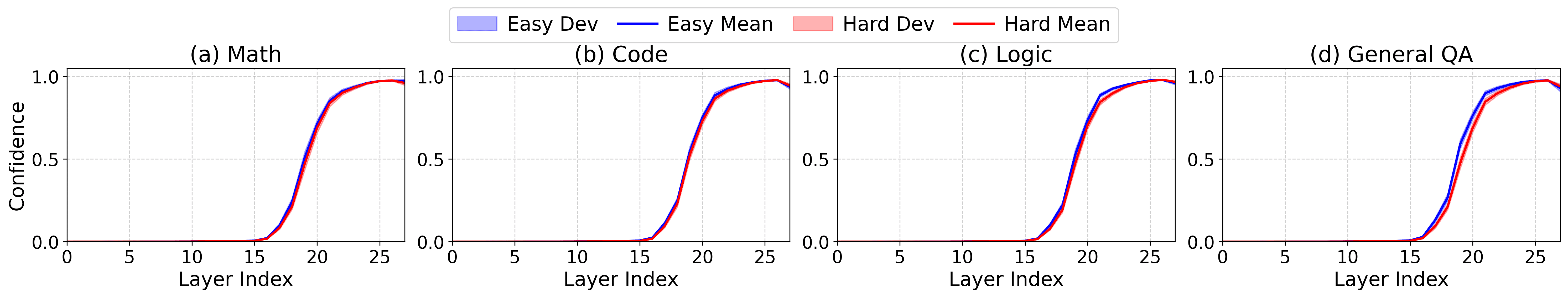}
    \caption{Cognitive Effort Difference of Qwen3-0.6B-Instruct with focal depth m = 0 for 4 domains.
    }
    \label{fig:cognitive_qwen3-0.6b-m0}
\end{figure*}

\begin{figure*}[h]
    \centering
    \includegraphics[width=\textwidth]{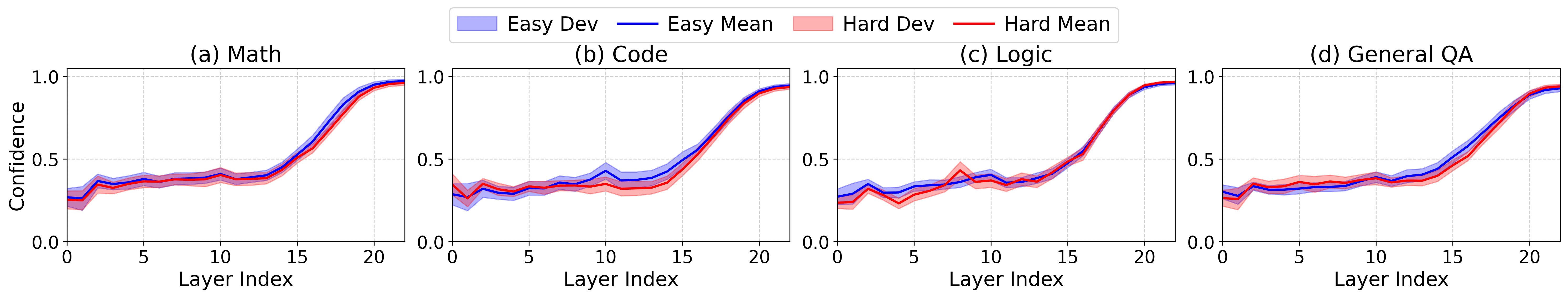}
    \caption{Cognitive Effort Difference of Qwen3-0.6B-Instruct with focal depth m = 5 for 4 domains.
    }
    \label{fig:cognitive_qwen3-0.6b-m5}
\end{figure*}

\begin{figure*}[h]
    \centering
    \includegraphics[width=\textwidth]{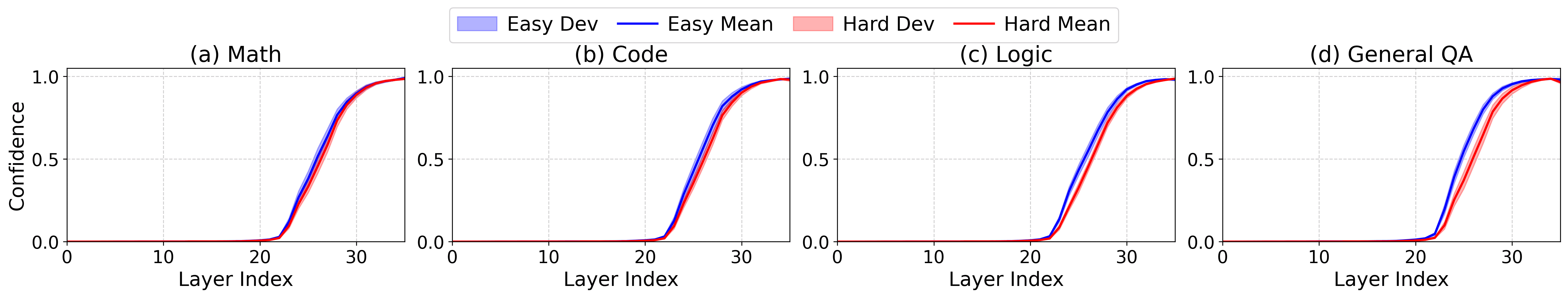}
    \caption{Cognitive Effort Difference of Qwen3-4B-Instruct with focal depth m = 0 for 4 domains.
    }
    \label{fig:cognitive_qwen3-4b-m0}
\end{figure*}

\begin{figure*}[h]
    \centering
    \includegraphics[width=\textwidth]{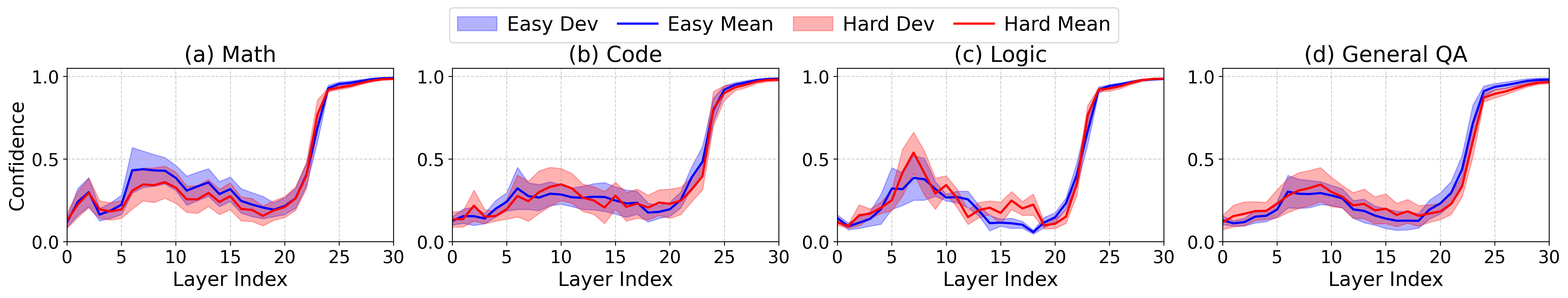}
    \caption{Cognitive Effort Difference of Qwen3-4B-Instruct with focal depth m = 5 for 4 domains.
    }
    \label{fig:cognitive_qwen3-4b-m5}
\end{figure*}

\clearpage

\section{Full Results of Semantic Decoding}
\label{appen:complete_semantic_decoding}

The following figures are the complete truncated results of semantic decoding among the eight models with input: What is the most popular neural network architecture in the world? Please give me some reasons.

This evolutionary trajectory is distinctly observable across different model architectures. For example, in Llama-3-8B (Firgure~\ref{fig:semantic_decoding_llama_m01} and Figure~\ref{fig:semantic_decoding_llama_m35}), while the baseline Logit Lens remains noisy in intermediate layers (L15-L20), our method clearly reveals the retrieval of domain-specific concepts such as ``CNN'', ``RNN'', and ``LSTM'' long before the final layer settles on the output. Similarly, in Qwen2.5-7B (Figure~\ref{fig:semantic_decoding_Qwen2.5-7B_m35}), when processing the query for reasons, the intermediate layers (L12-L18) explicitly decode intent-driven tokens like ``I'', ``want'', and ``know'', which serve as a hidden ``scratchpad'' for the model to formulate its response. This visualization provides qualitative evidence that the model narrows down the search space from abstract intents to concrete entities in a coarse to fine manner.

\begin{figure*}[h]
    \centering
    \includegraphics[width=0.45\linewidth]{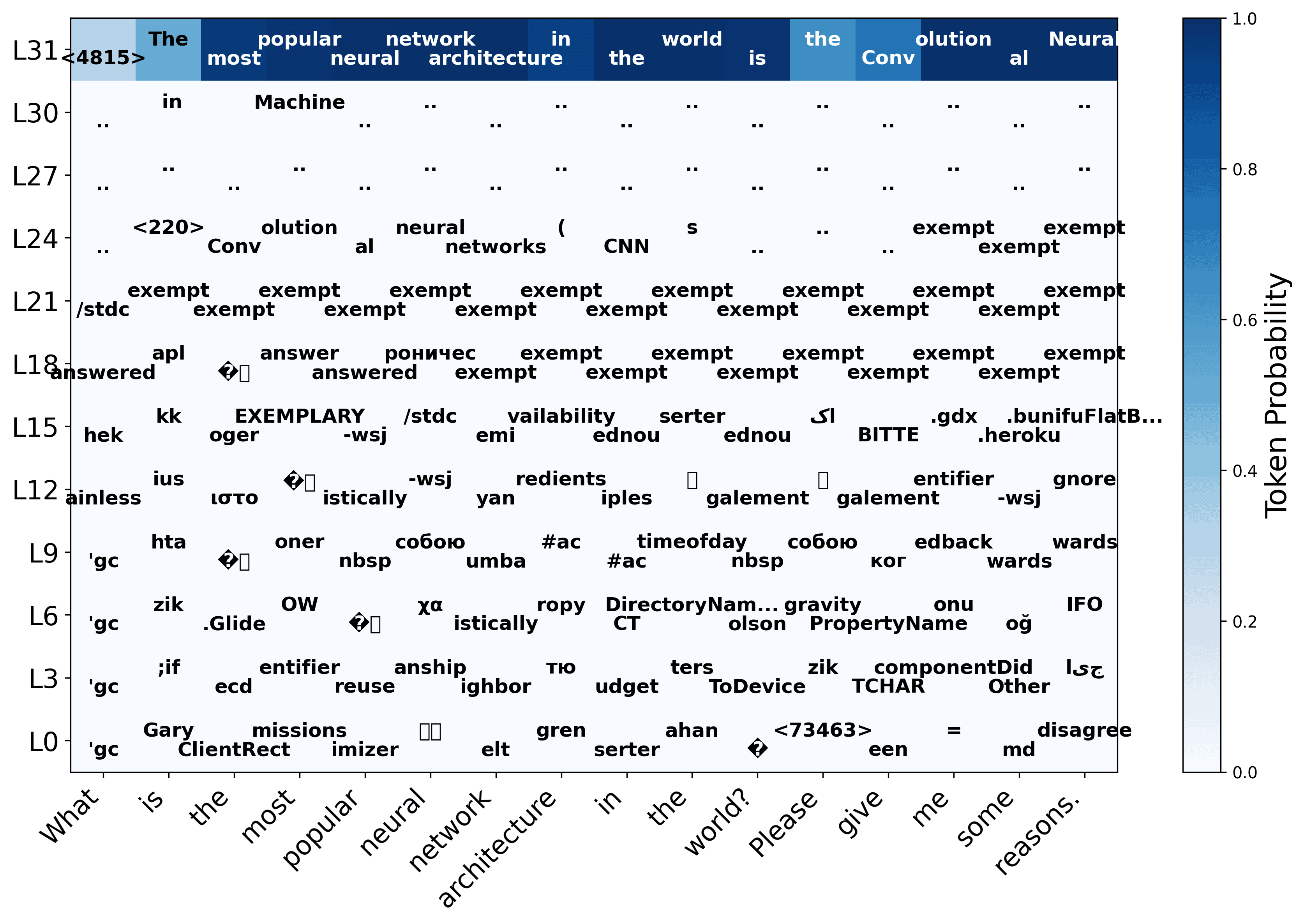}
    \includegraphics[width=0.45\linewidth]{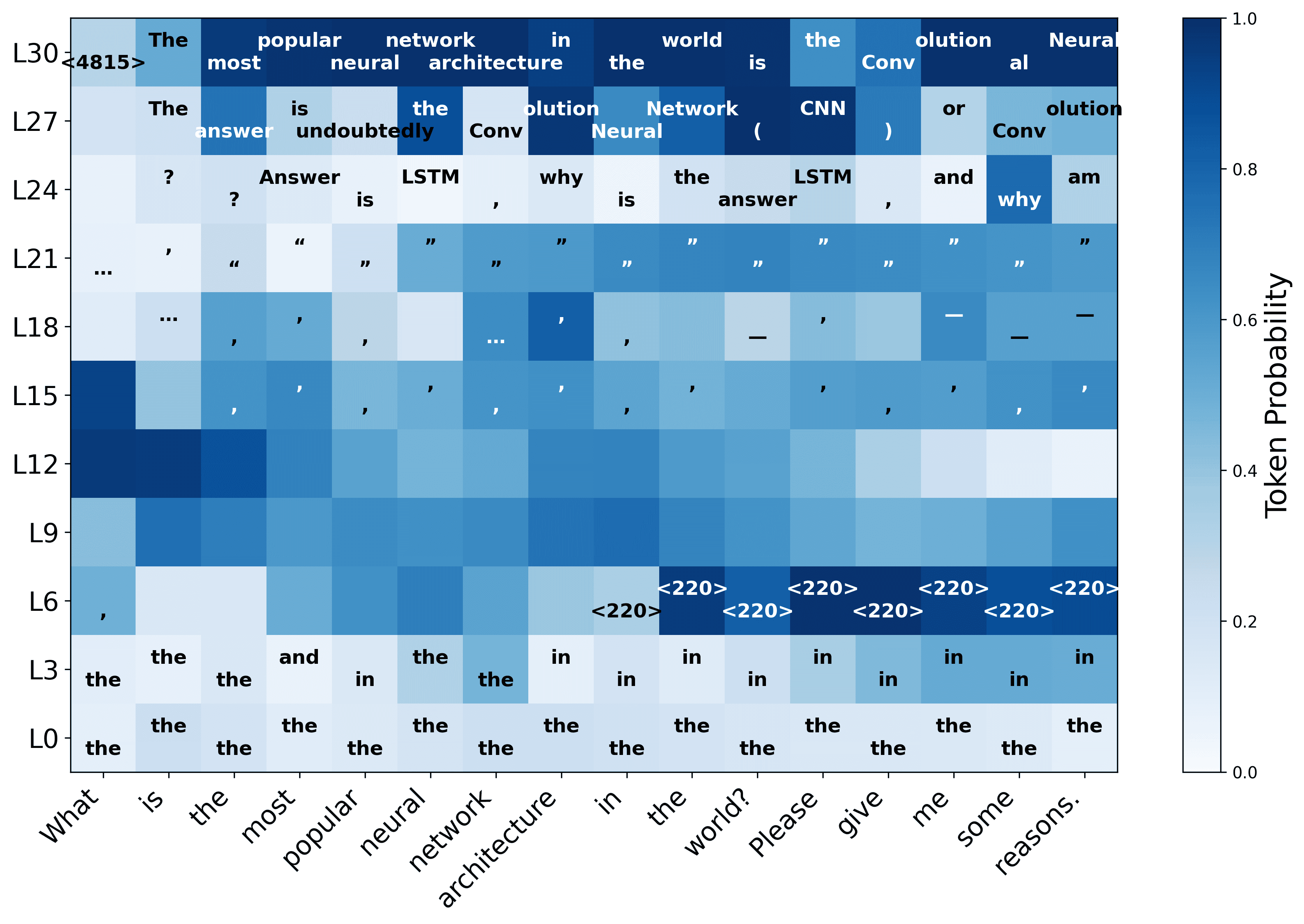}
    \caption{Semantic decoding of Llama3-8B ($m=0$ vs. $m=1$).}
    \label{fig:semantic_decoding_llama_m01}
\end{figure*}

\begin{figure*}[h]
    \centering
    \includegraphics[width=0.45\linewidth]{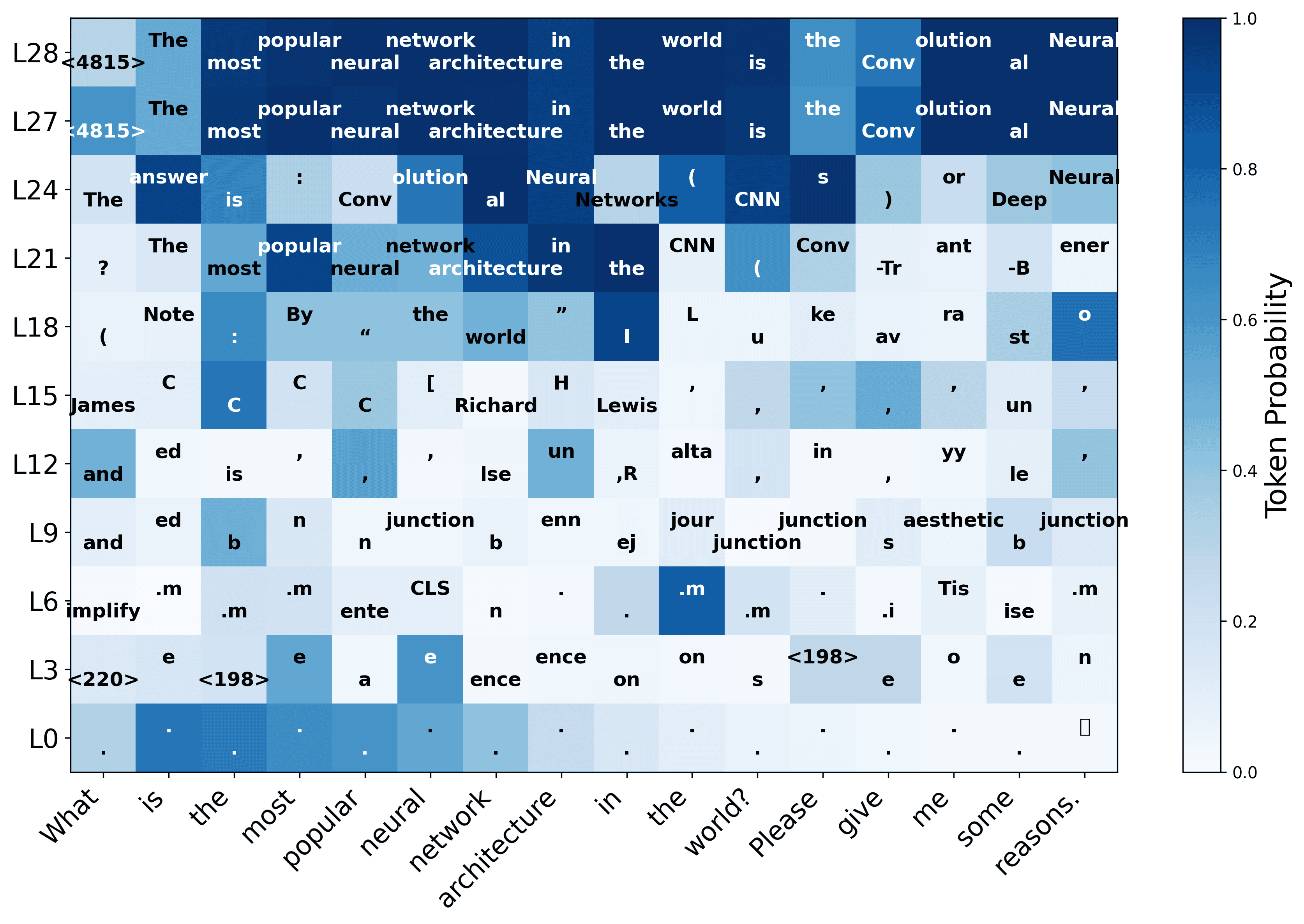}
    \includegraphics[width=0.45\linewidth]{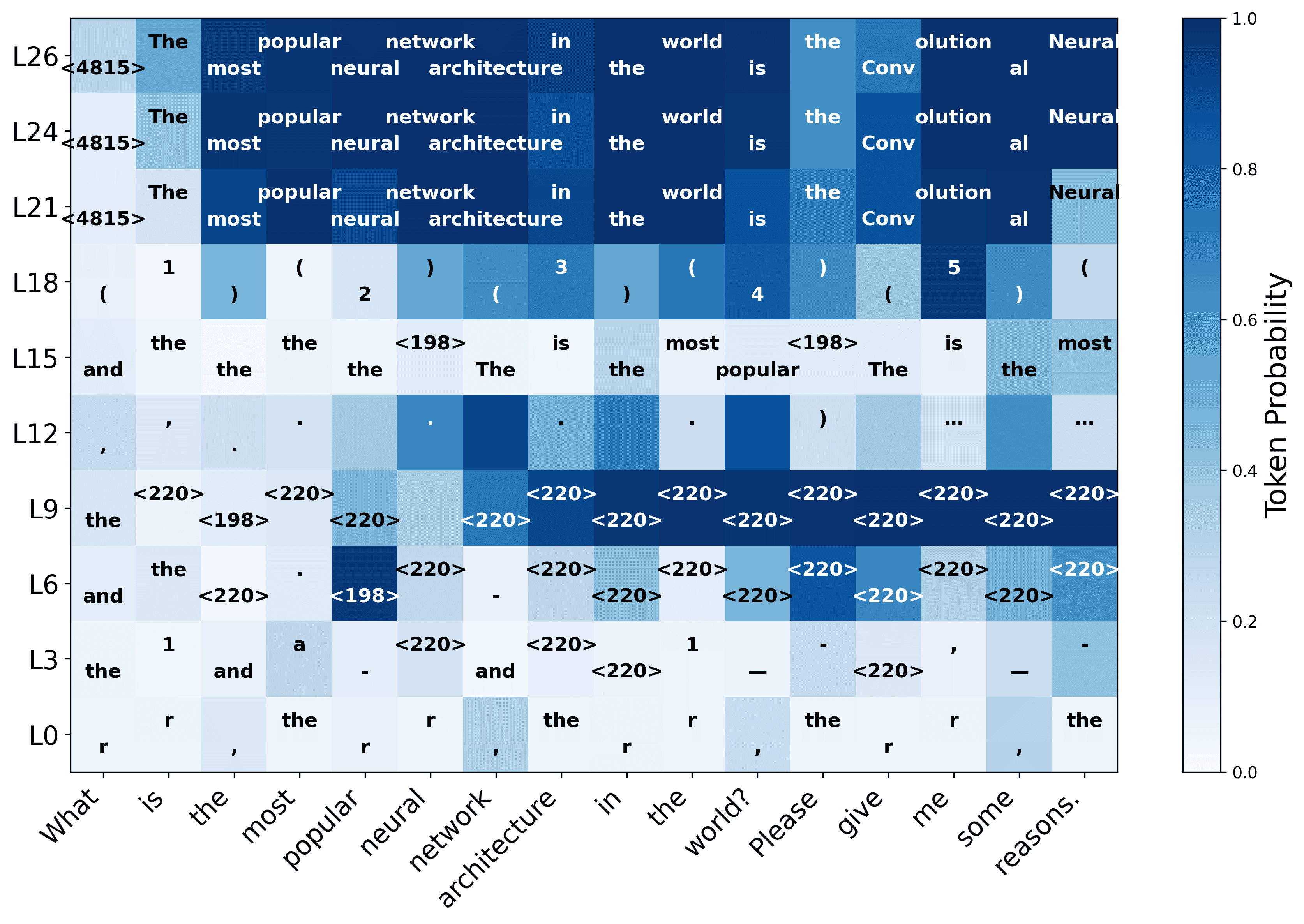}
    \caption{Semantic decoding of Llama3-8B ($m=3$ vs. $m=5$).}
    \label{fig:semantic_decoding_llama_m35}
\end{figure*}

\begin{figure*}[t]
    \centering
    \includegraphics[width=0.45\linewidth]{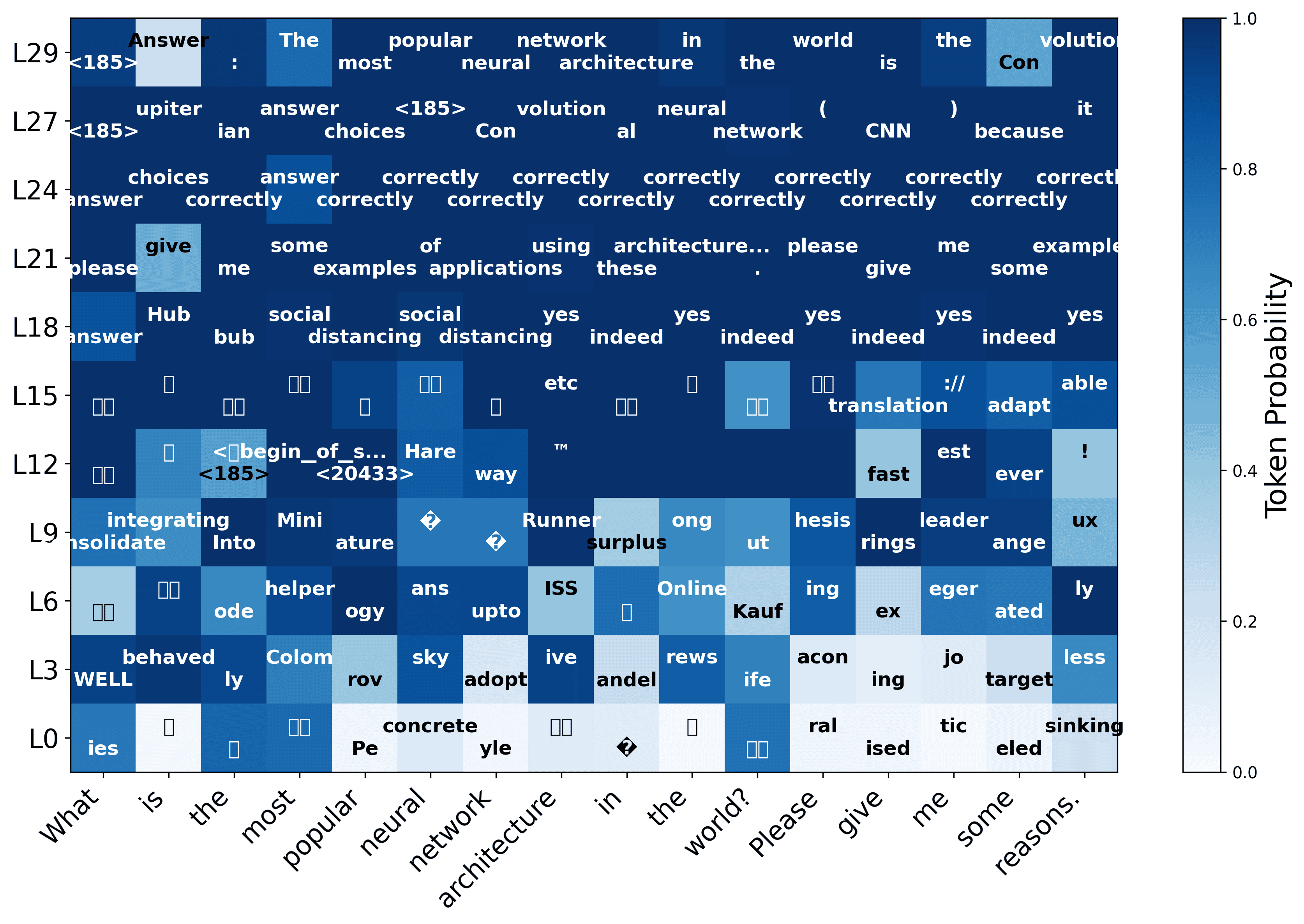}
    \includegraphics[width=0.45\linewidth]{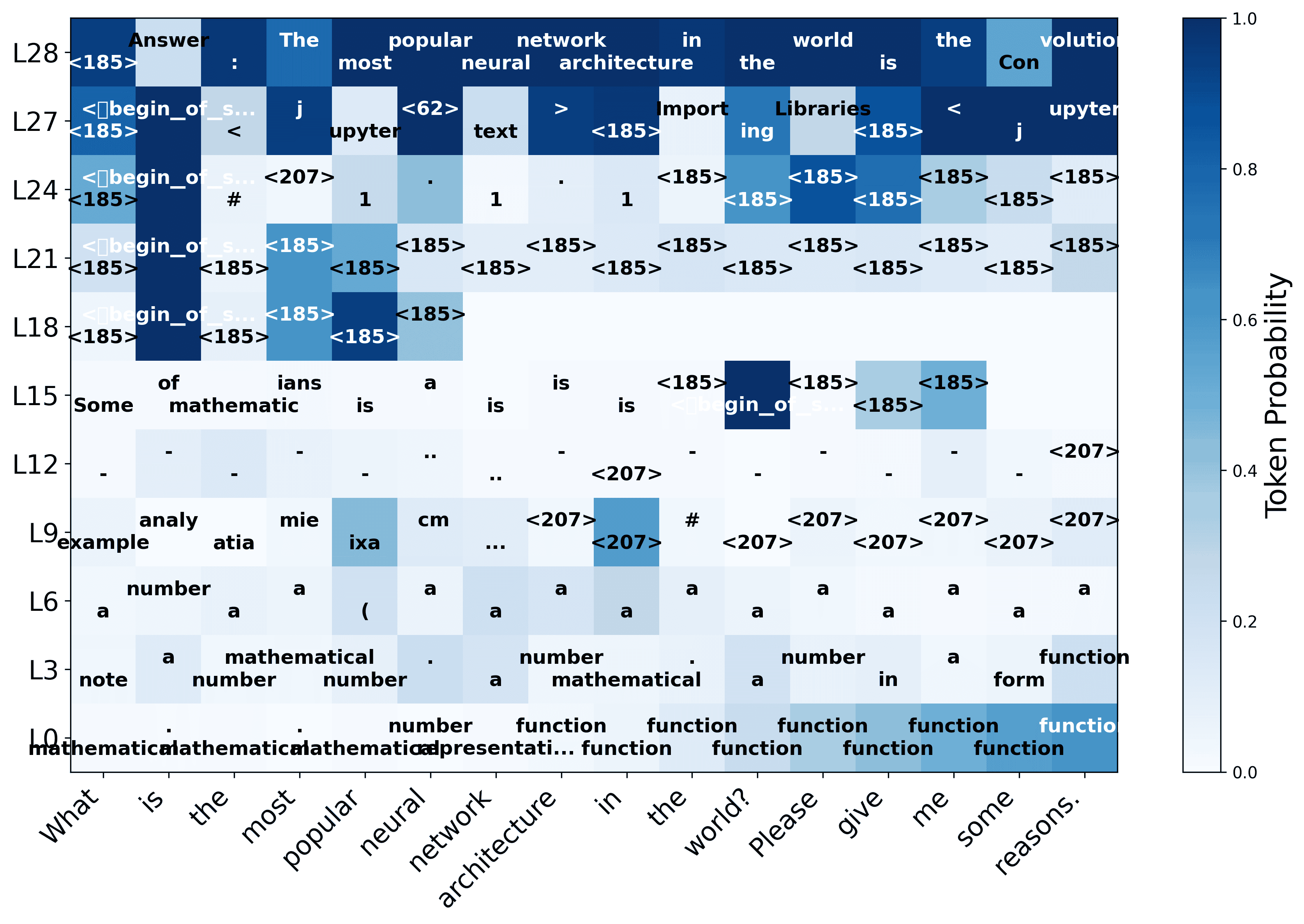}
    \caption{Semantic decoding of Deepseek-7B ($m=0$ vs. $m=1$).  
    }
    \label{fig:semantic_decoding_ds_m01}
\end{figure*}

\begin{figure*}[t]
    \centering
    \includegraphics[width=0.45\linewidth]{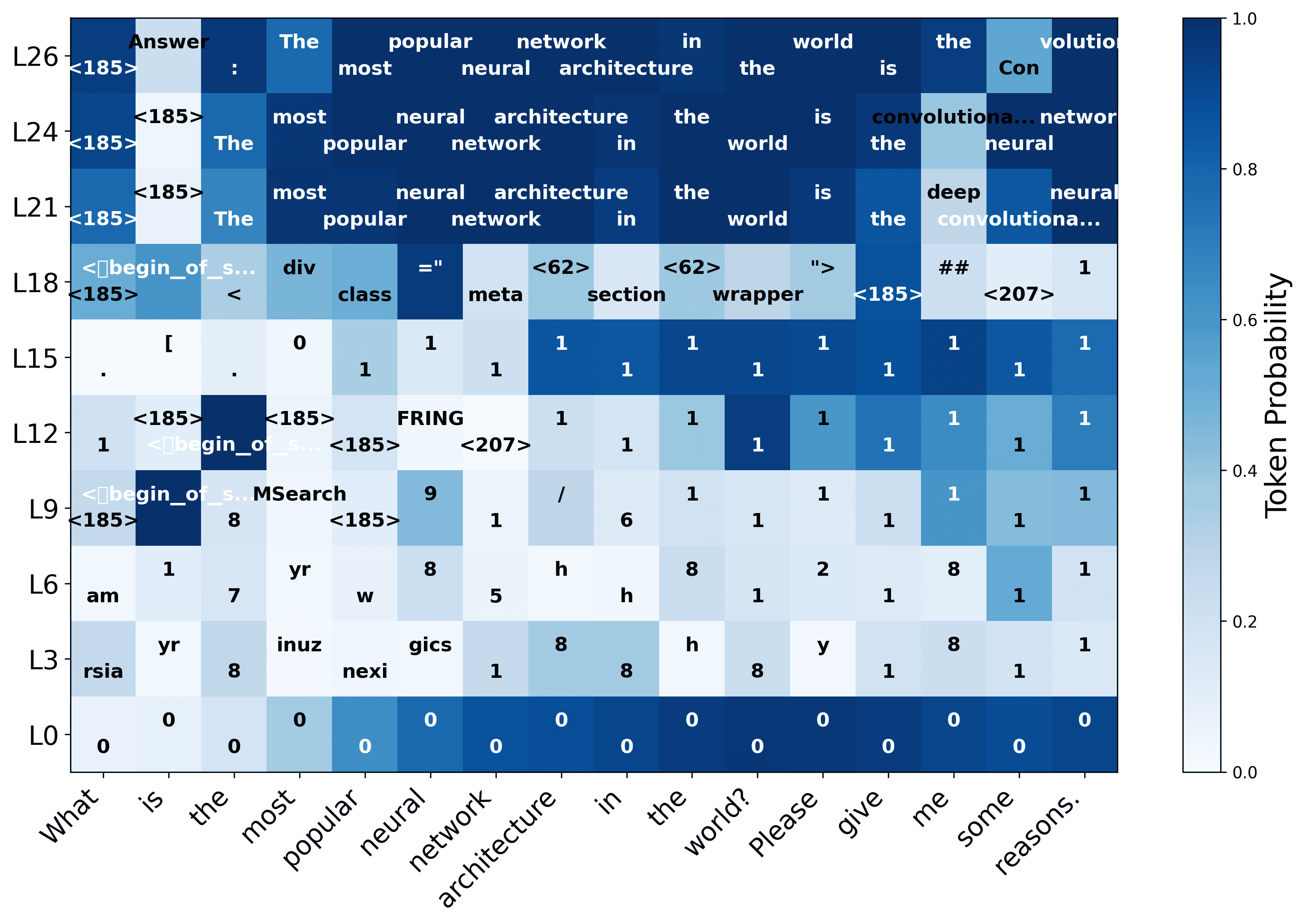}
    \includegraphics[width=0.45\linewidth]{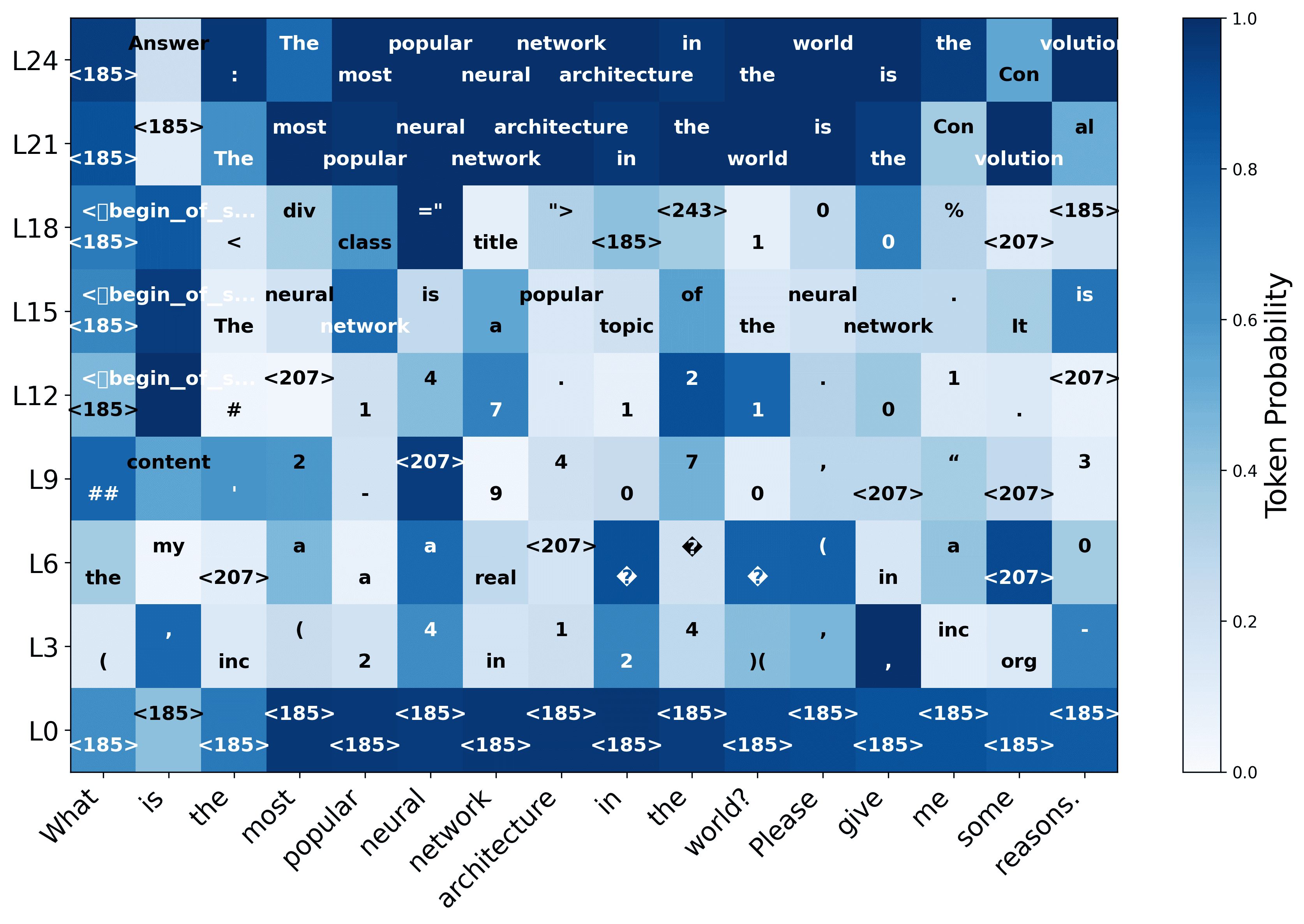}
    \caption{Semantic decoding of Deepseek-7B ($m=3$ vs. $m=5$). 
    }
    \label{fig:semantic_decoding_ds_m35}
\end{figure*}

\begin{figure*}[t]
    \centering
    \includegraphics[width=0.45\linewidth]{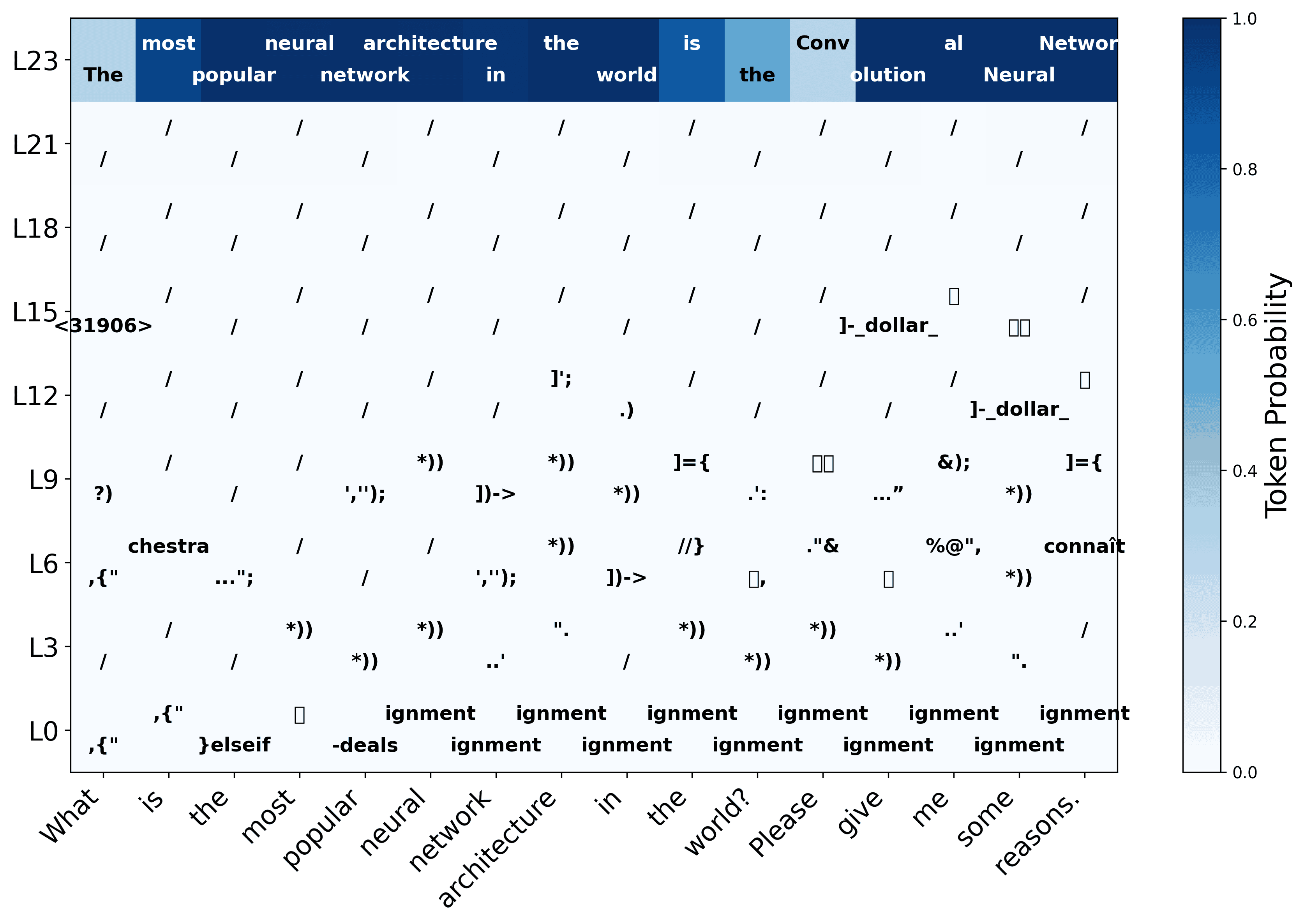}
    \includegraphics[width=0.45\linewidth]{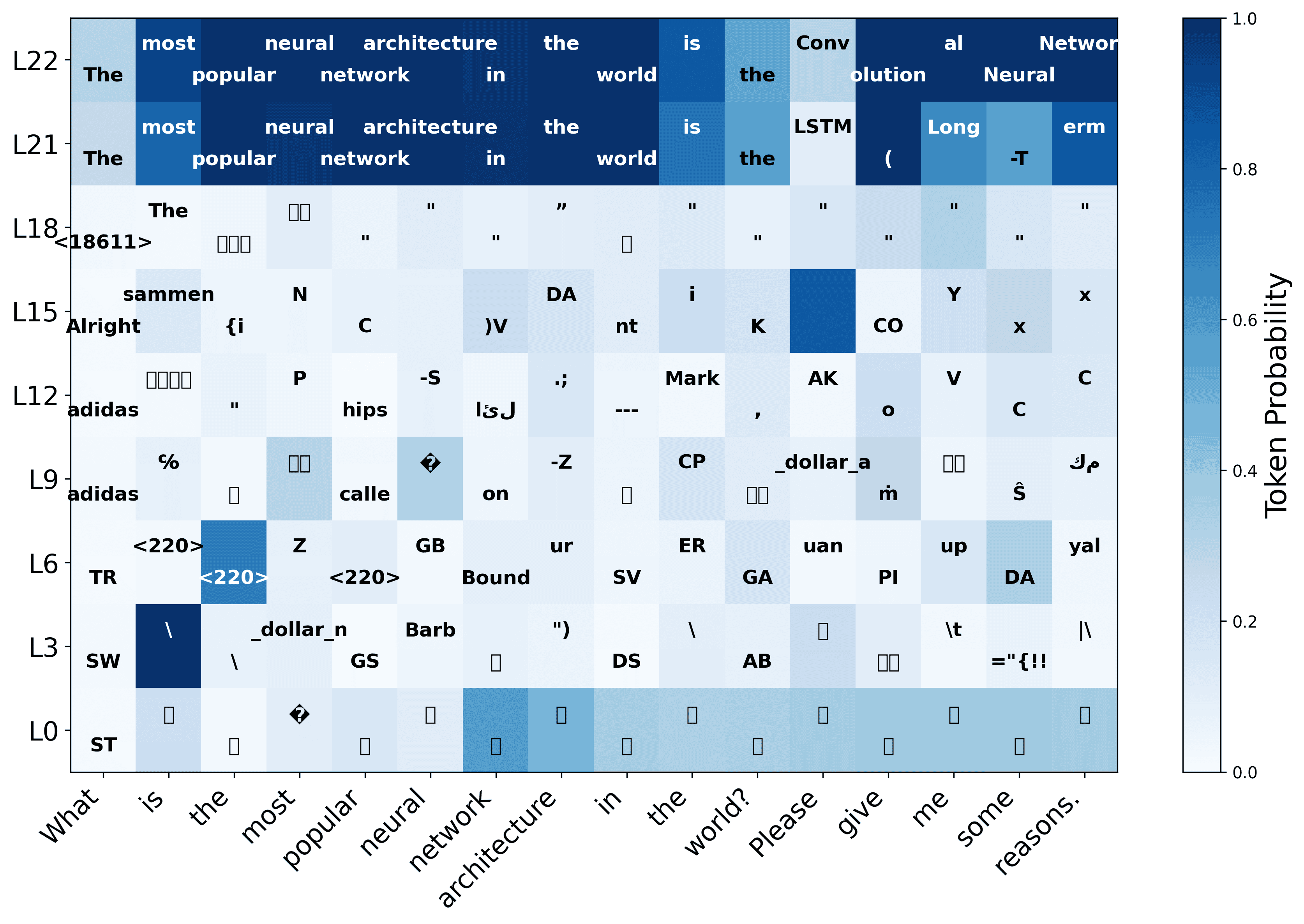}
    \caption{Semantic decoding of Qwen2.5-0.5B ($m=0$ vs. $m=1$). 
    }
    \label{fig:semantic_decoding_Qwen2.5-0.5B_m01}
\end{figure*}

\begin{figure*}[t]
    \centering
    \includegraphics[width=0.45\linewidth]{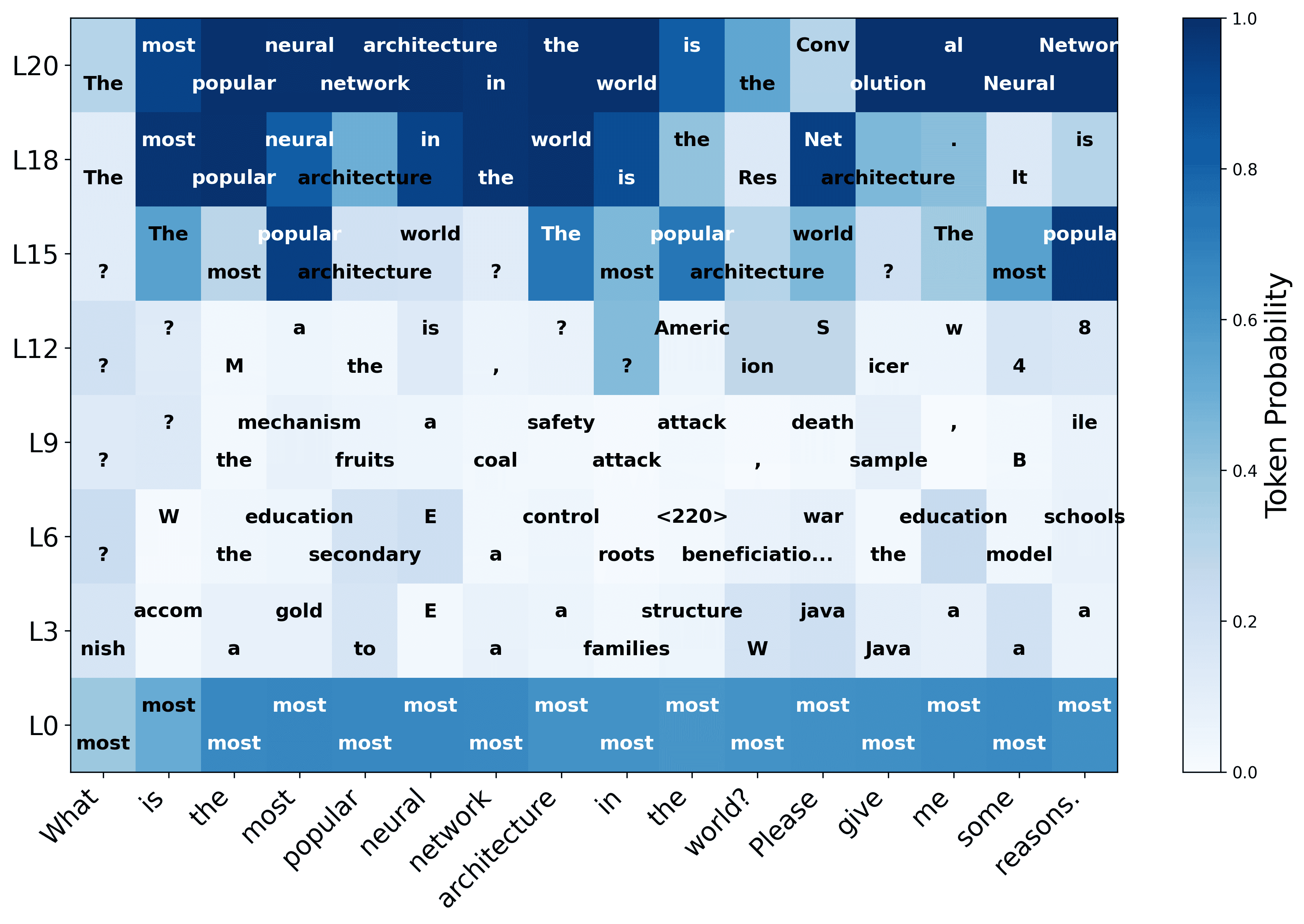}
    \includegraphics[width=0.45\linewidth]{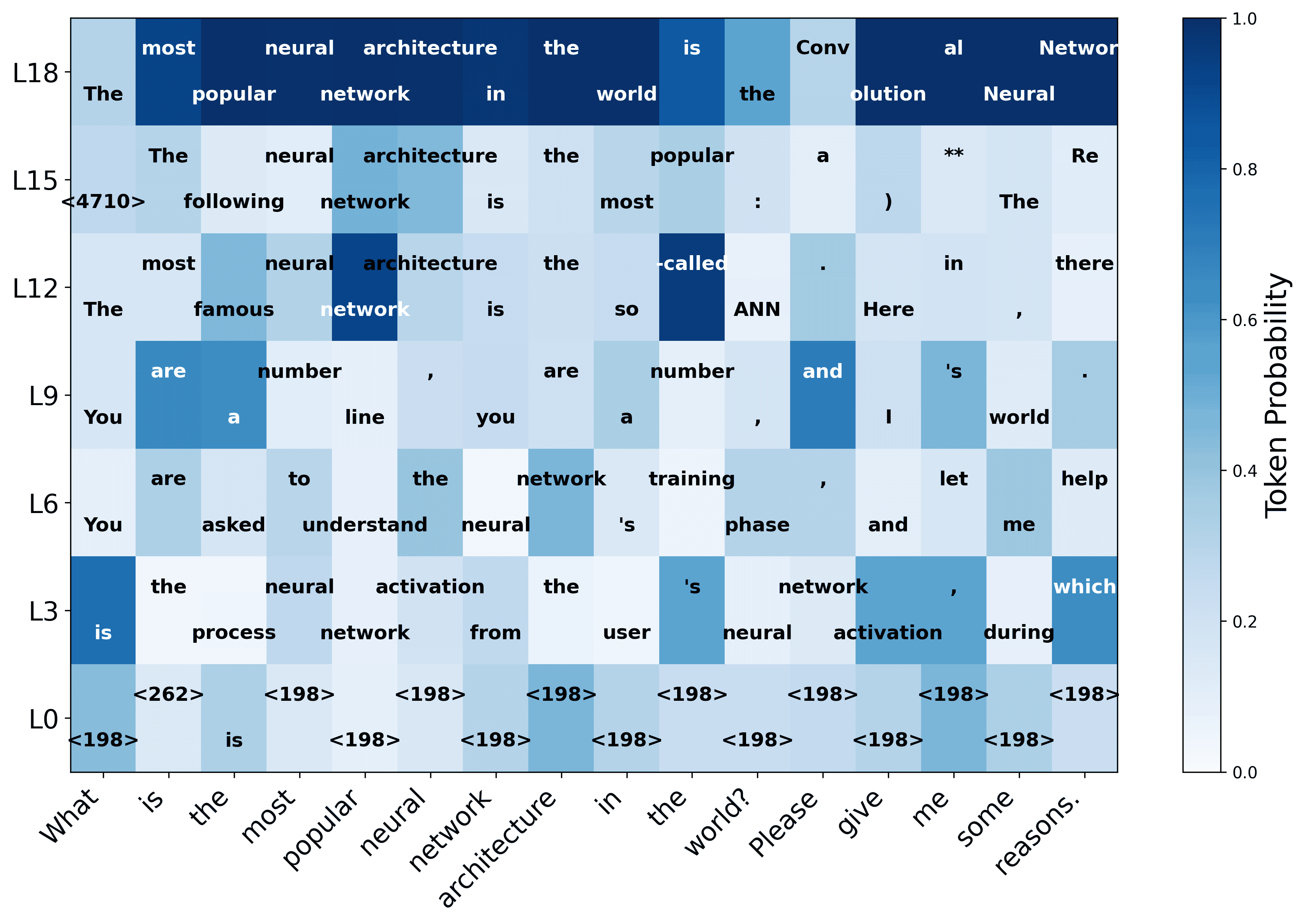}
    \caption{Semantic decoding of Qwen2.5-0.5B ($m=3$ vs. $m=5$). 
    }
    \label{fig:semantic_decoding_Qwen2.5-0.5B_m35}
\end{figure*}

\begin{figure*}[t]
    \centering
    \includegraphics[width=0.45\linewidth]{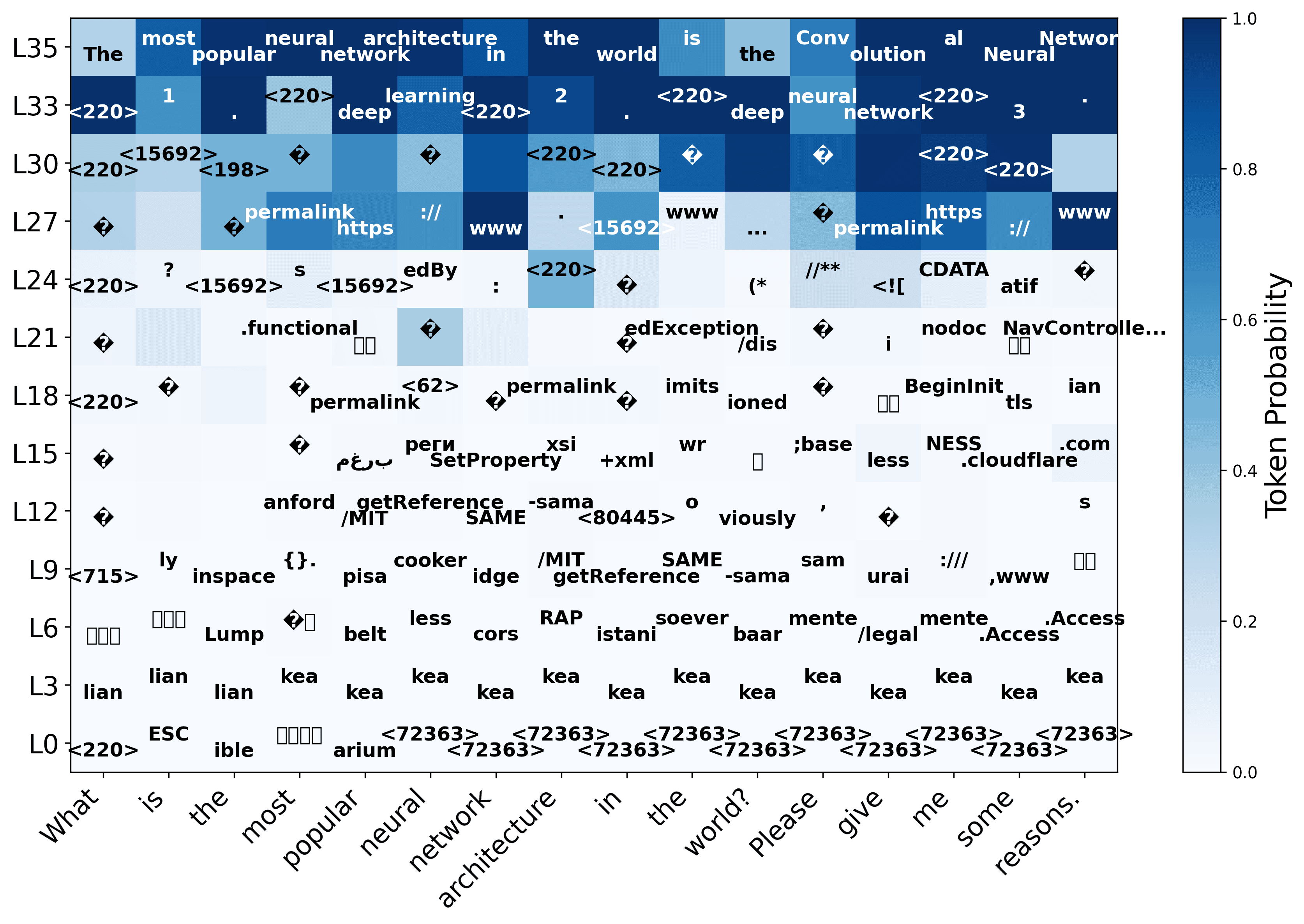}
    \includegraphics[width=0.45\linewidth]{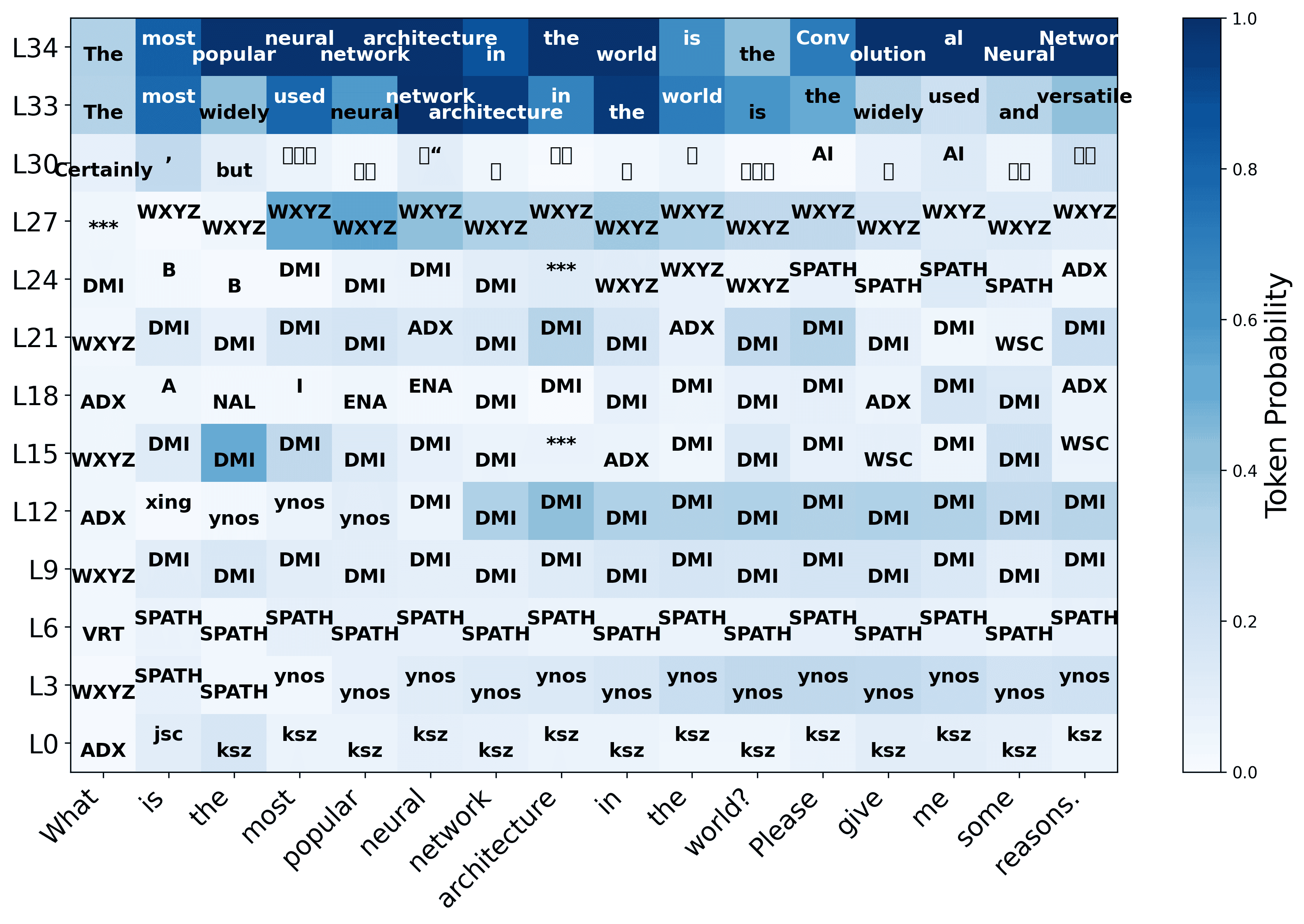}
    \caption{Semantic decoding of Qwen2.5-3B ($m=0$ vs. $m=1$). 
    }
    \label{fig:semantic_decoding_Qwen2.5-3B_m01}
\end{figure*}

\begin{figure*}[t]
    \centering
    \includegraphics[width=0.45\linewidth]{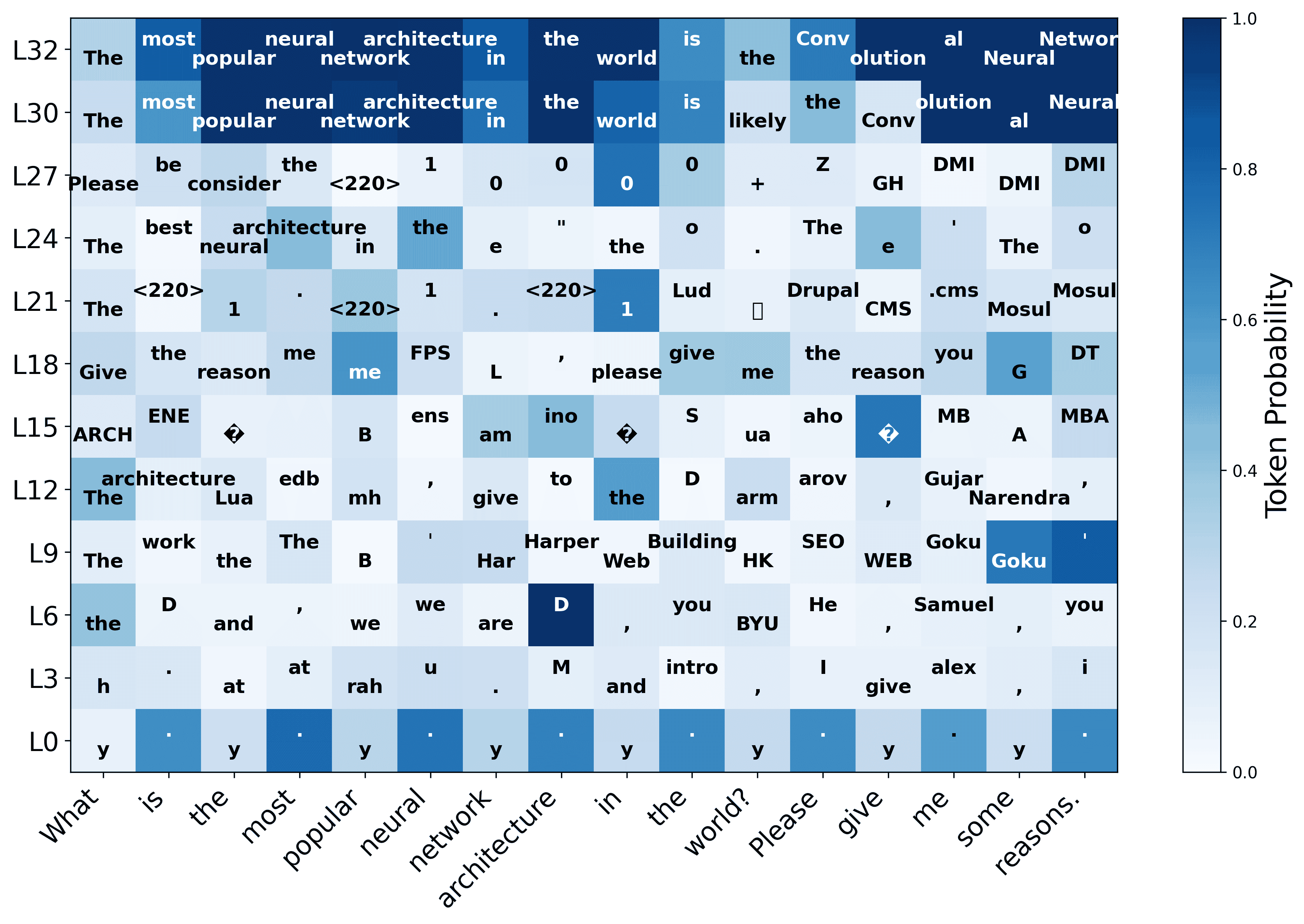}
    \includegraphics[width=0.45\linewidth]{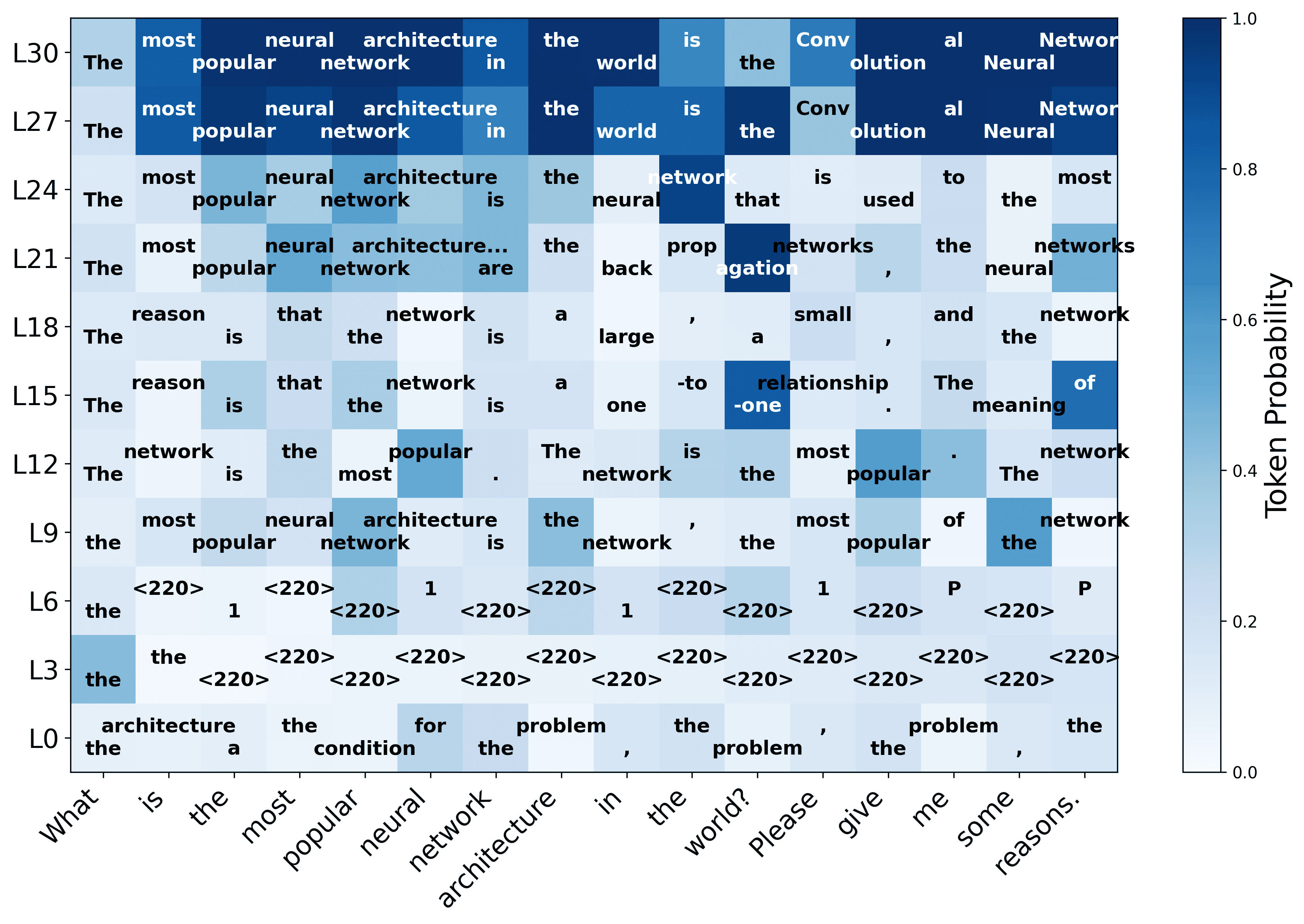}
    \caption{Semantic decoding of Qwen2.5-3B ($m=3$ vs. $m=5$).  
    }
    \label{fig:semantic_decoding_Qwen2.5-3B_m35}
\end{figure*}

\begin{figure*}[t]
    \centering
    \includegraphics[width=0.45\linewidth]{src/figs/shortcut_figs/Qwen2.5-7B-Instruct/m0.png}
    \includegraphics[width=0.45\linewidth]{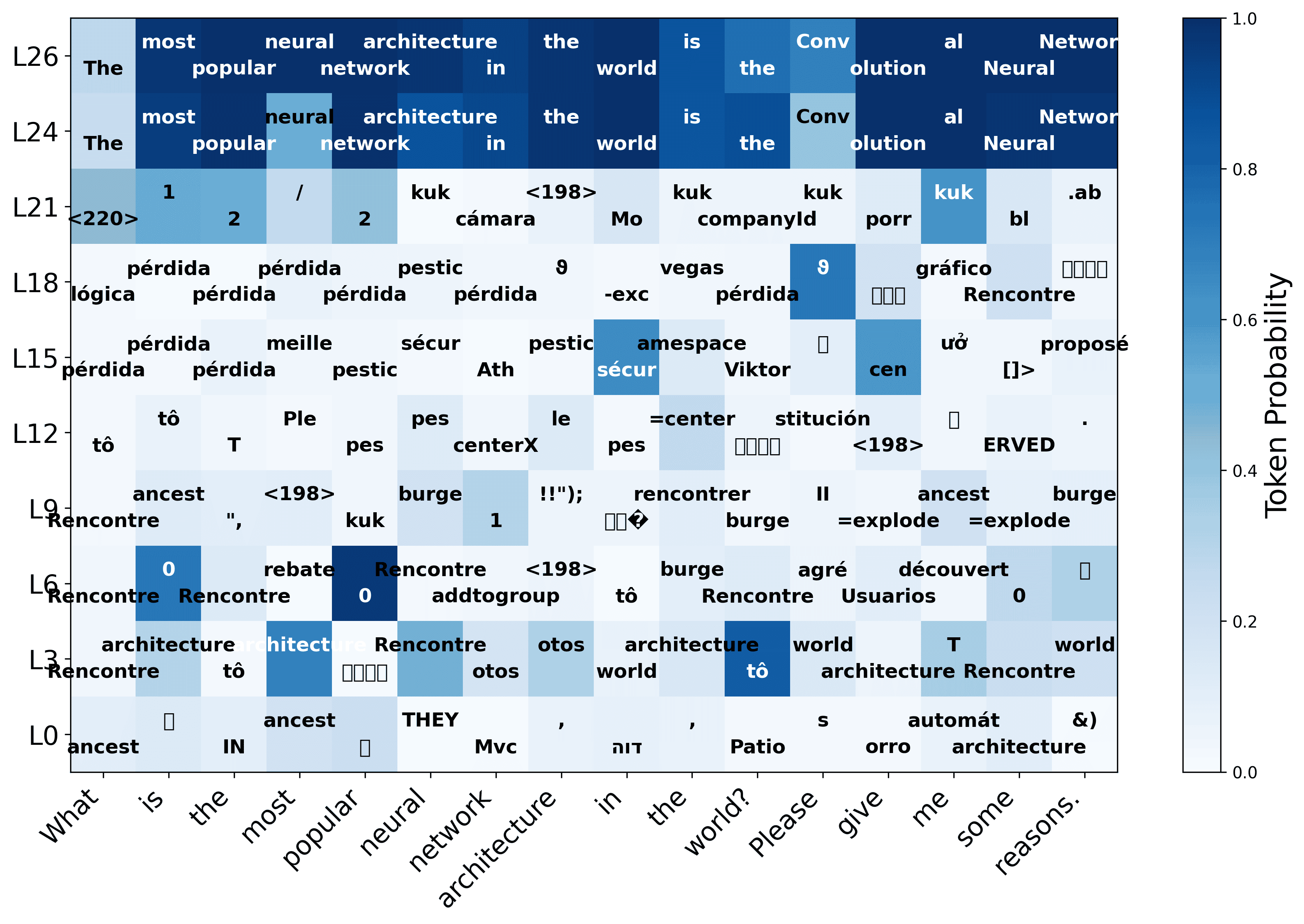}
    \caption{Semantic decoding of Qwen2.5-7B ($m=0$ vs. $m=1$). 
    }
    \label{fig:semantic_decoding_Qwen2.5-7B_m01}
\end{figure*}

\begin{figure*}[t]
    \centering
    \includegraphics[width=0.45\linewidth]{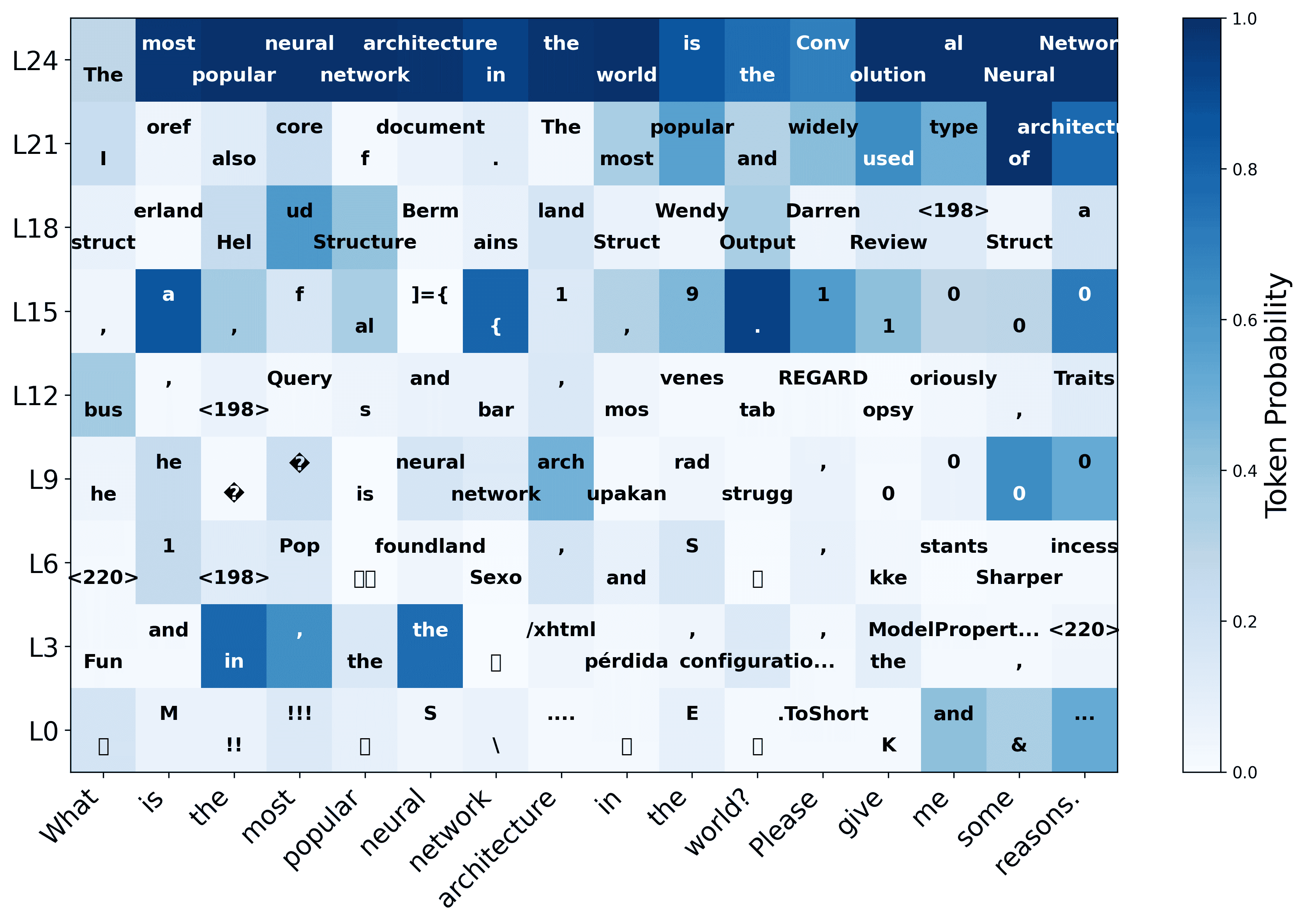}
    \includegraphics[width=0.45\linewidth]{src/figs/shortcut_figs/Qwen2.5-7B-Instruct/m5.png}
    \caption{Semantic decoding of Qwen2.5-7B ($m=3$ vs. $m=5$). 
    }
    \label{fig:semantic_decoding_Qwen2.5-7B_m35}
\end{figure*}

\begin{figure*}[t]
    \centering
    \includegraphics[width=0.45\linewidth]{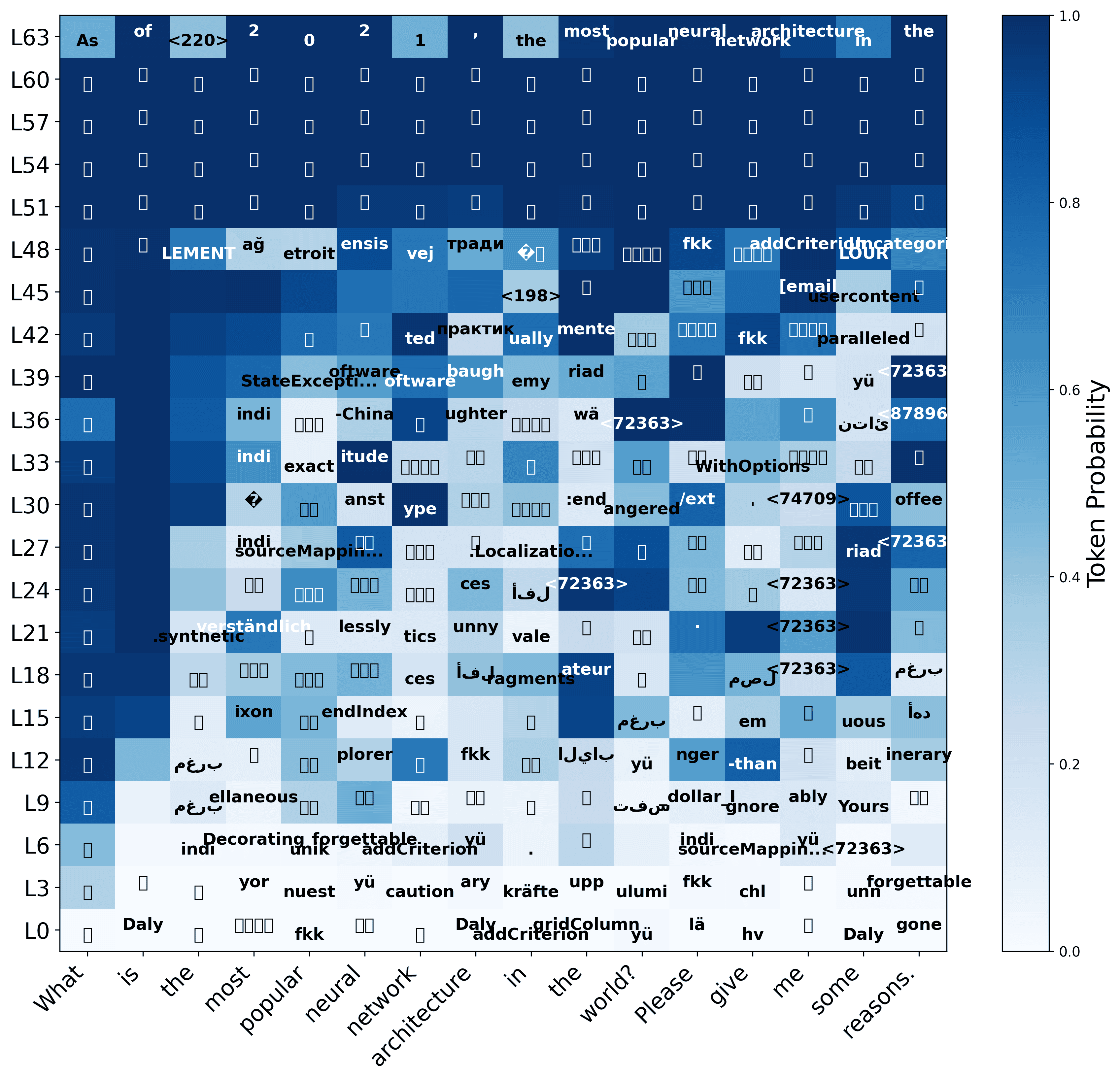}
    \includegraphics[width=0.45\linewidth]{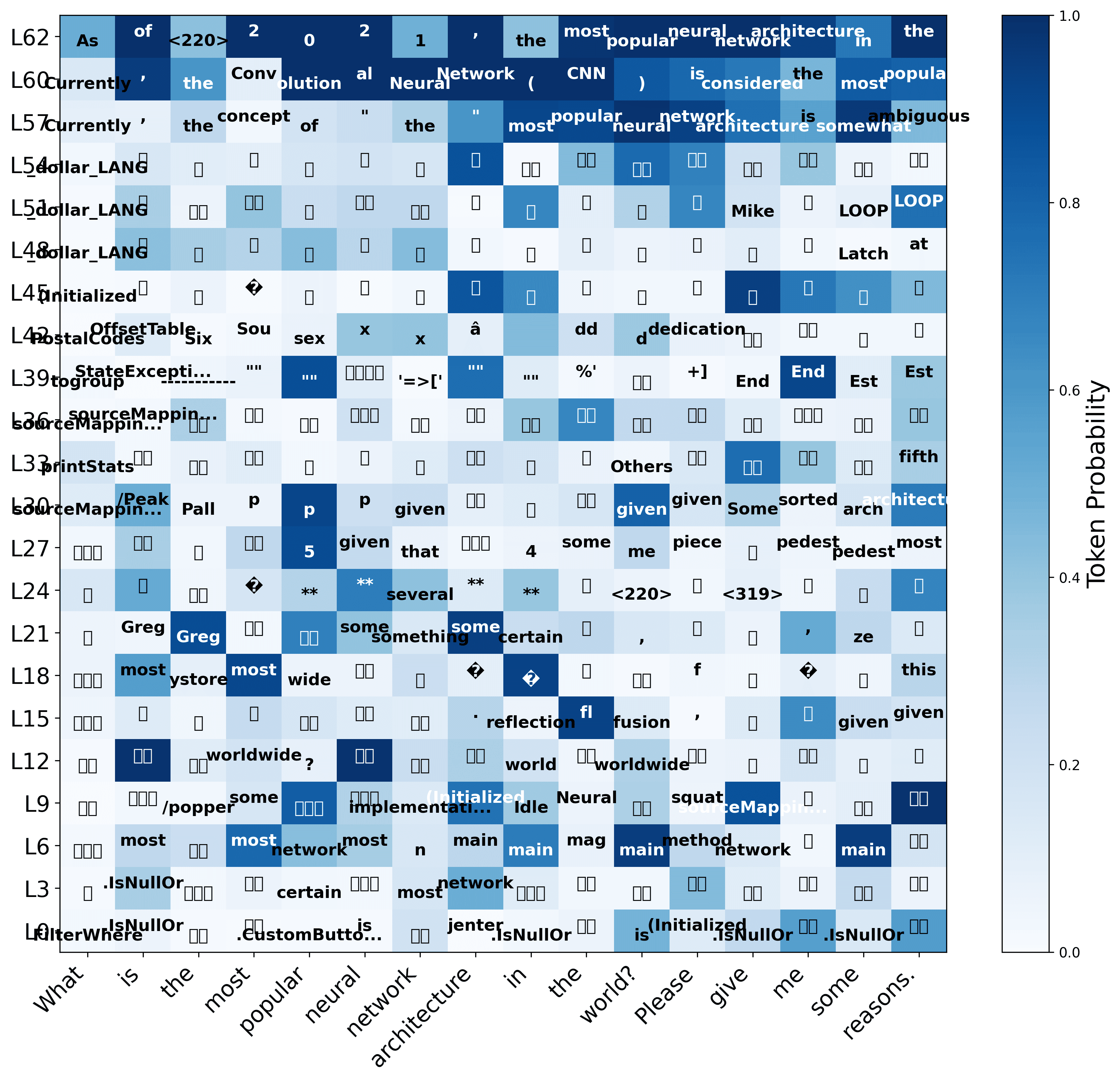}
    \caption{Semantic decoding of Qwen2.5-32B ($m=0$ vs. $m=1$). 
    }
    \label{fig:semantic_decoding_Qwen2.5-32B_m01}
\end{figure*}

\begin{figure*}[t]
    \centering
    \includegraphics[width=0.45\linewidth]{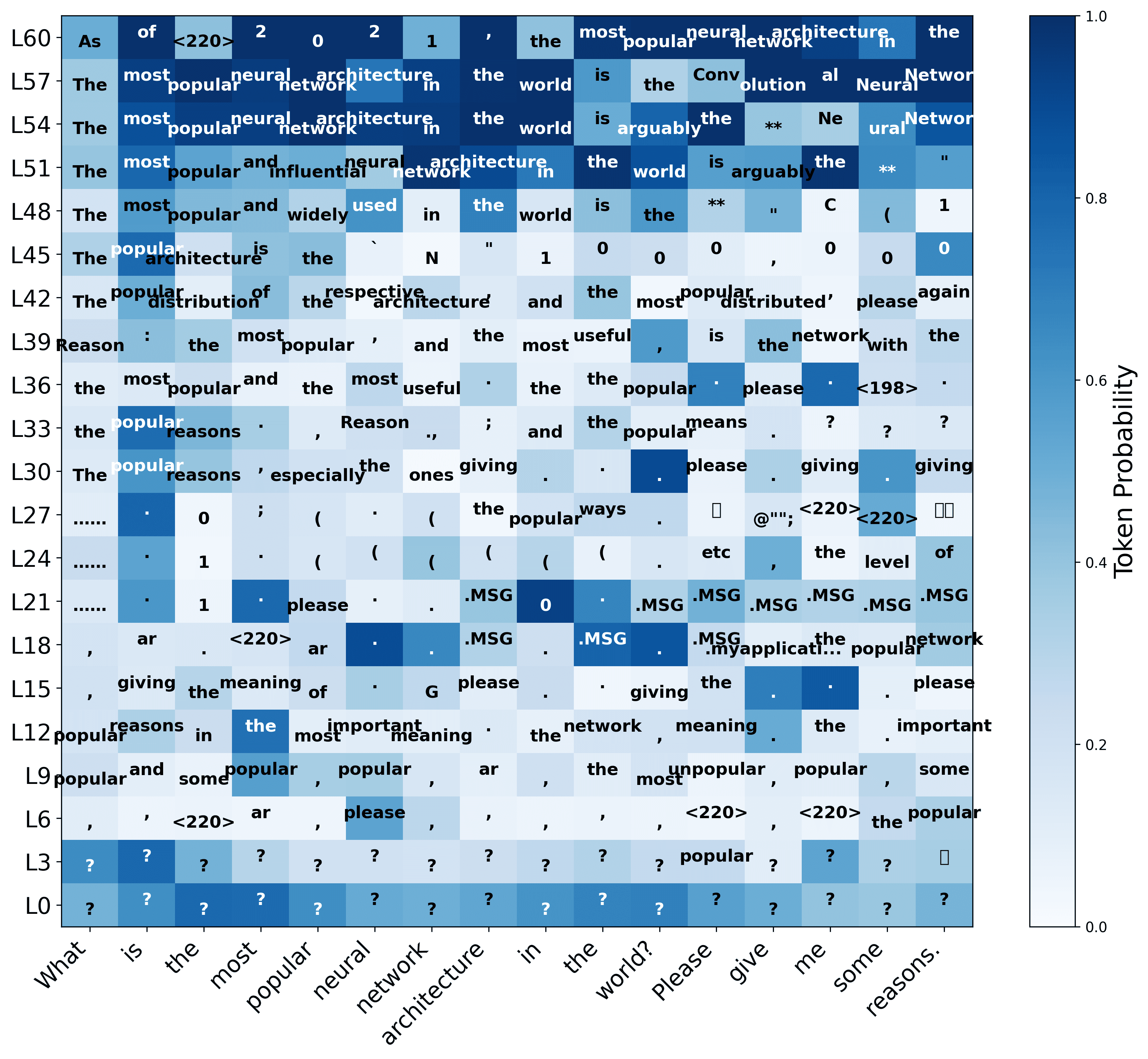}
    \includegraphics[width=0.45\linewidth]{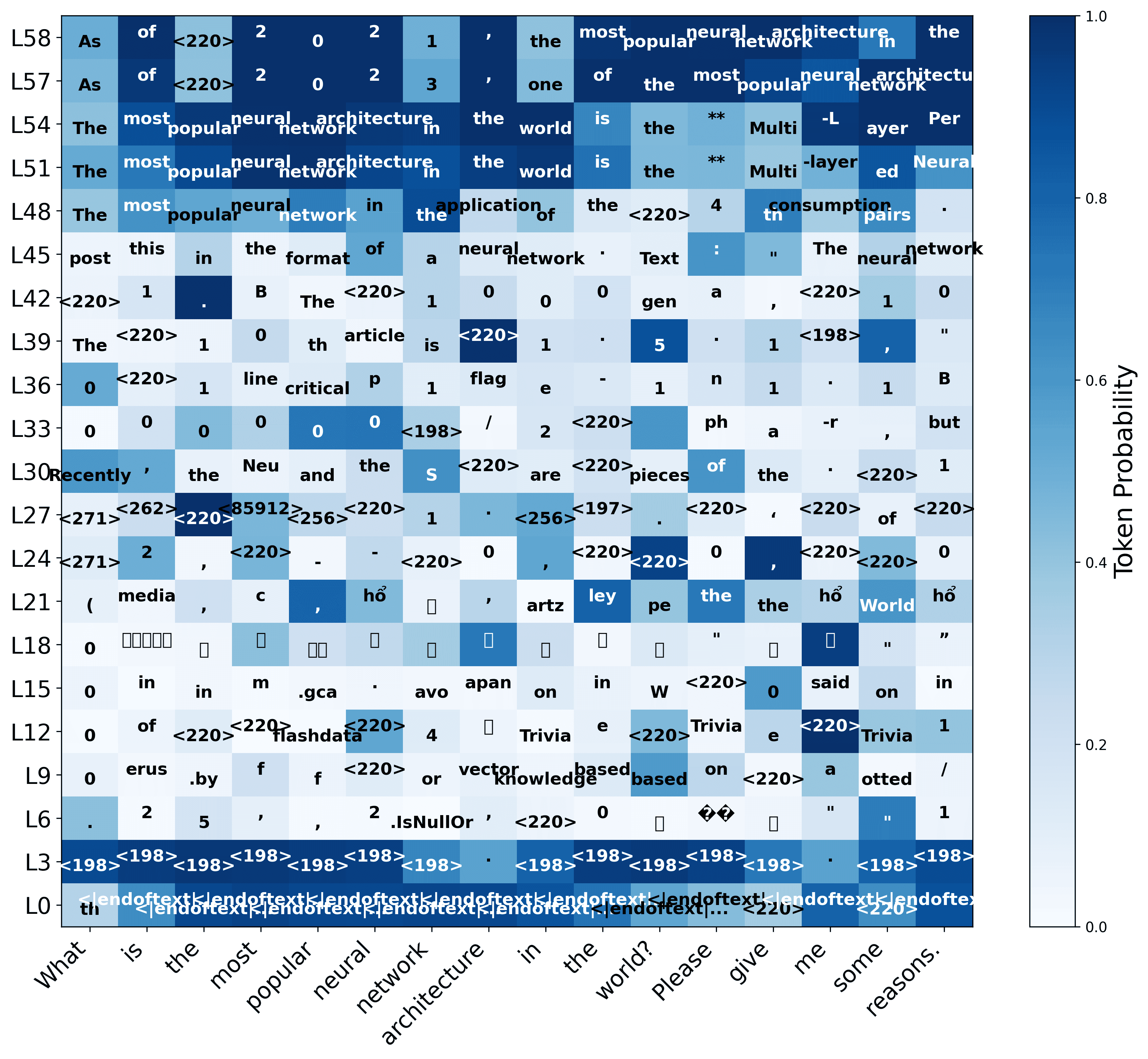}
    \caption{Semantic decoding of Qwen2.5-32B ($m=3$ vs. $m=5$). 
    }
    \label{fig:semantic_decoding_Qwen2.5-32B_m35}
\end{figure*}

\begin{figure*}[t]
    \centering
    \includegraphics[width=0.45\linewidth]{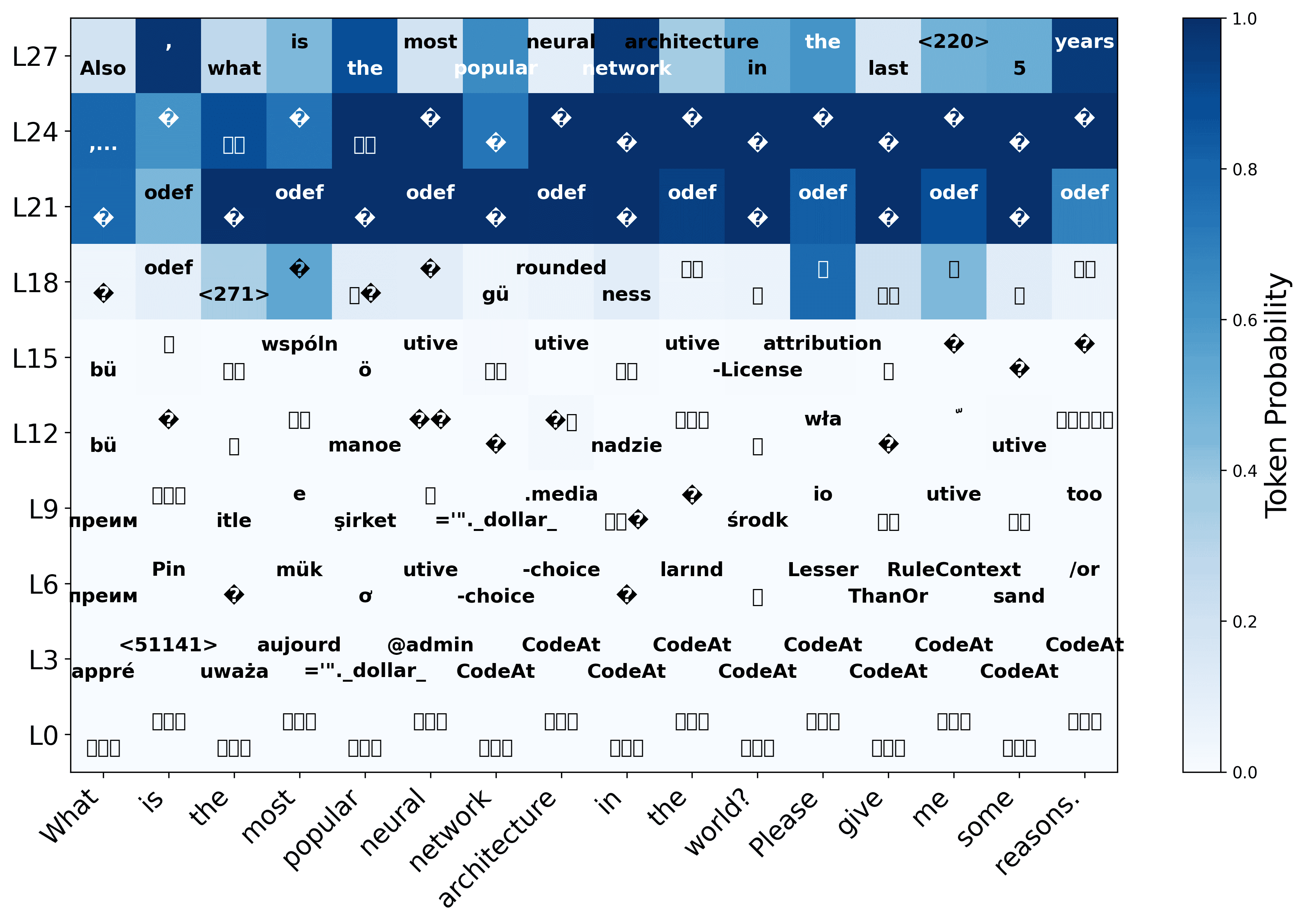}
    \includegraphics[width=0.45\linewidth]{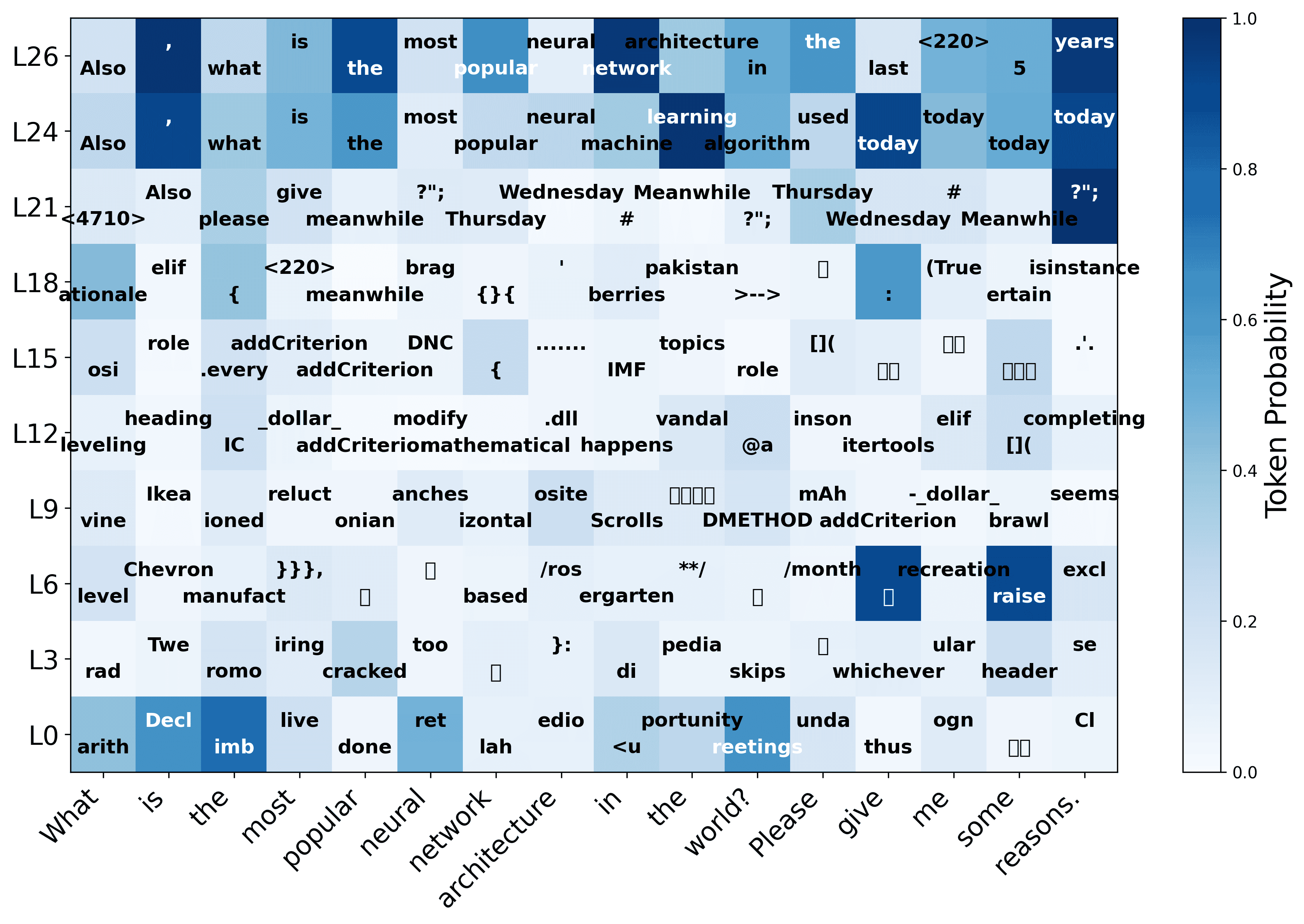}
    \caption{Semantic decoding of Qwen3-0.6B ($m=0$ vs. $m=1$).  
    }
    \label{fig:semantic_decoding_Qwen3-0.6B_m01}
\end{figure*}

\begin{figure*}[t]
    \centering
    \includegraphics[width=0.45\linewidth]{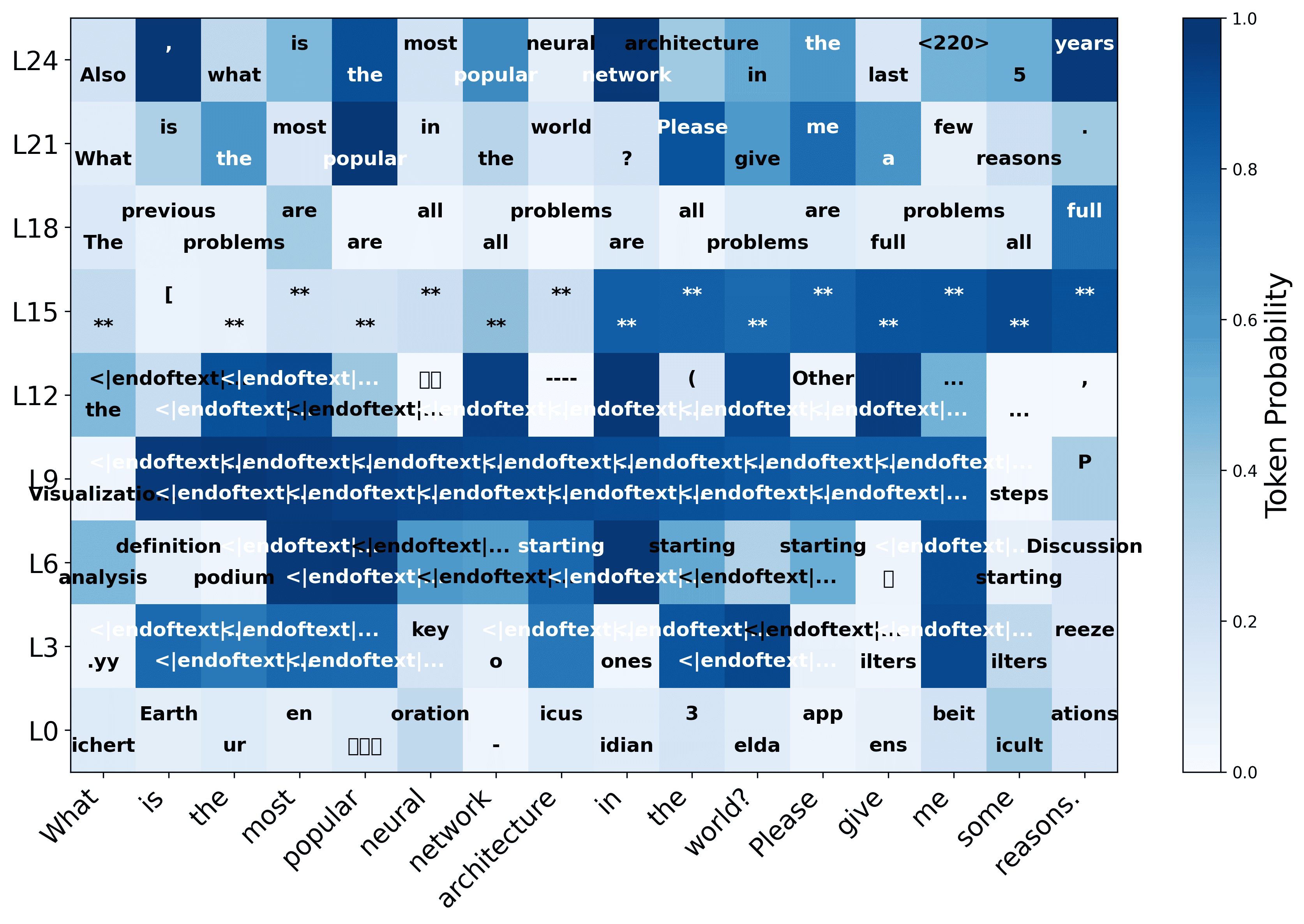}
    \includegraphics[width=0.45\linewidth]{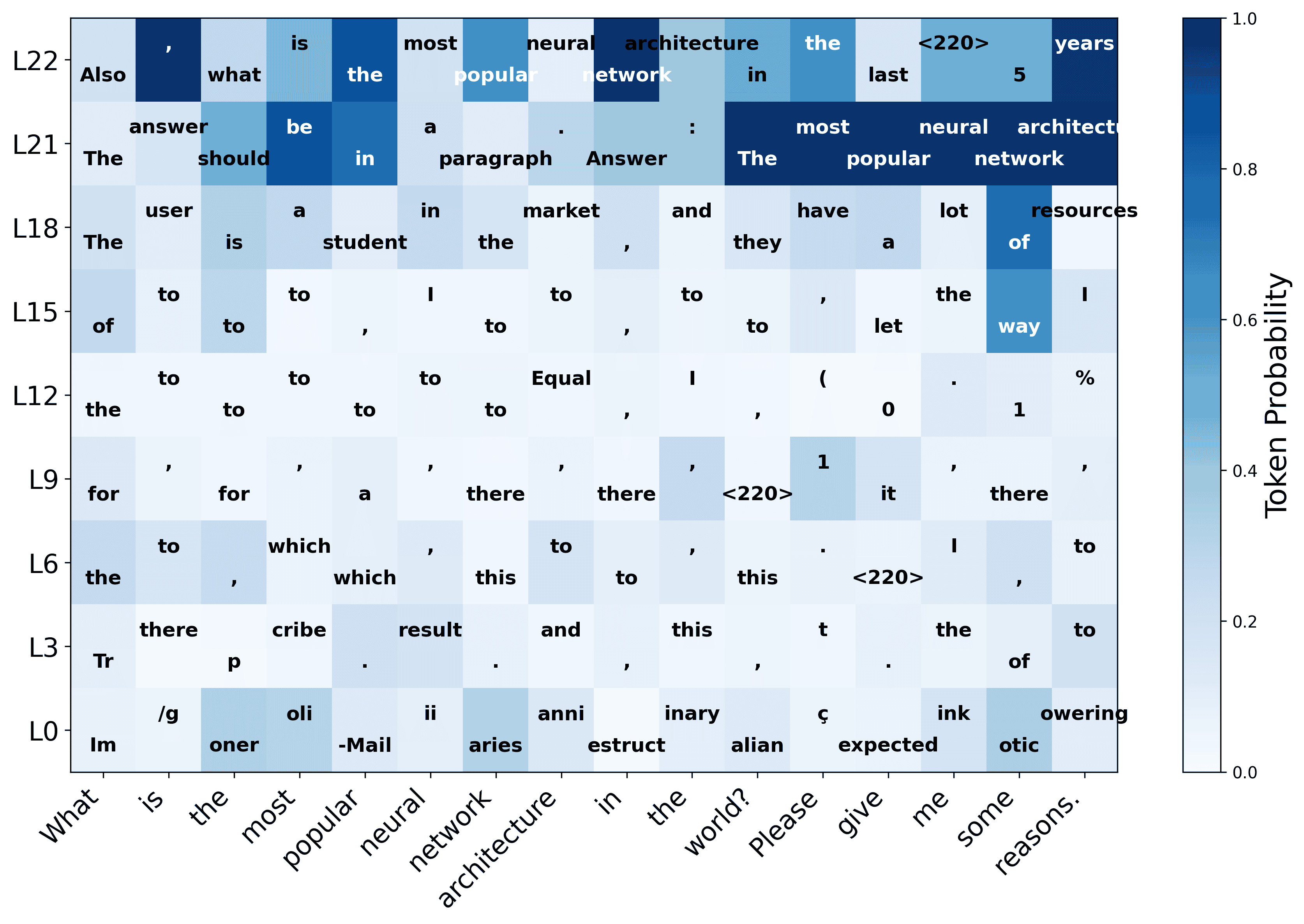}
    \caption{Semantic decoding of Qwen3-0.6B ($m=3$ vs. $m=5$). 
    }
    \label{fig:semantic_decoding_Qwen3-0.6B_m35}
\end{figure*}

\begin{figure*}[t]
    \centering
    \includegraphics[width=0.45\linewidth]{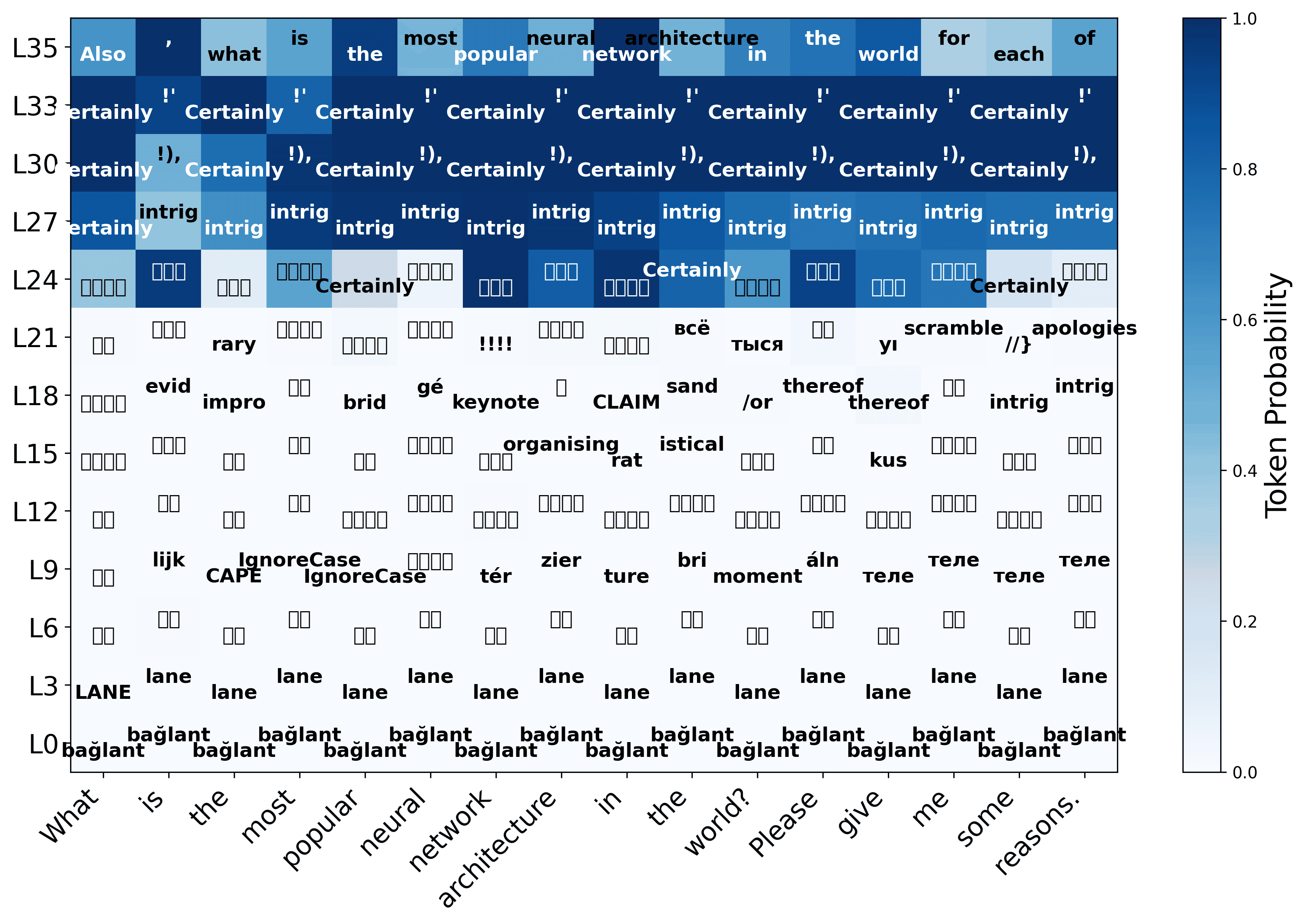}
    \includegraphics[width=0.45\linewidth]{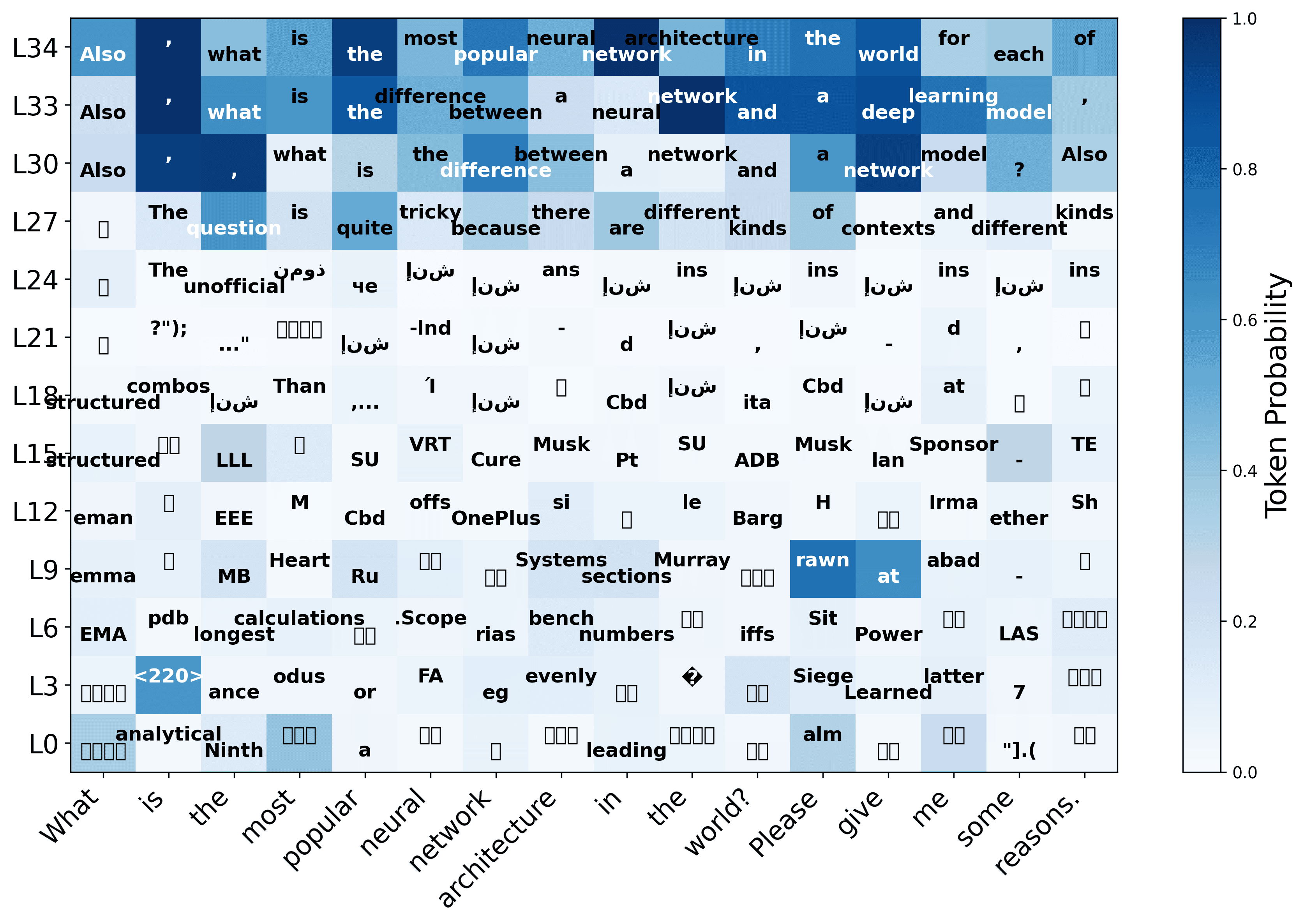}
    \caption{Semantic decoding of Qwen3-4B ($m=0$ vs. $m=1$).  
    }
    \label{fig:semantic_decoding_Qwen3-4B_m01}
\end{figure*}

\begin{figure*}[t]
    \centering
    \includegraphics[width=0.45\linewidth]{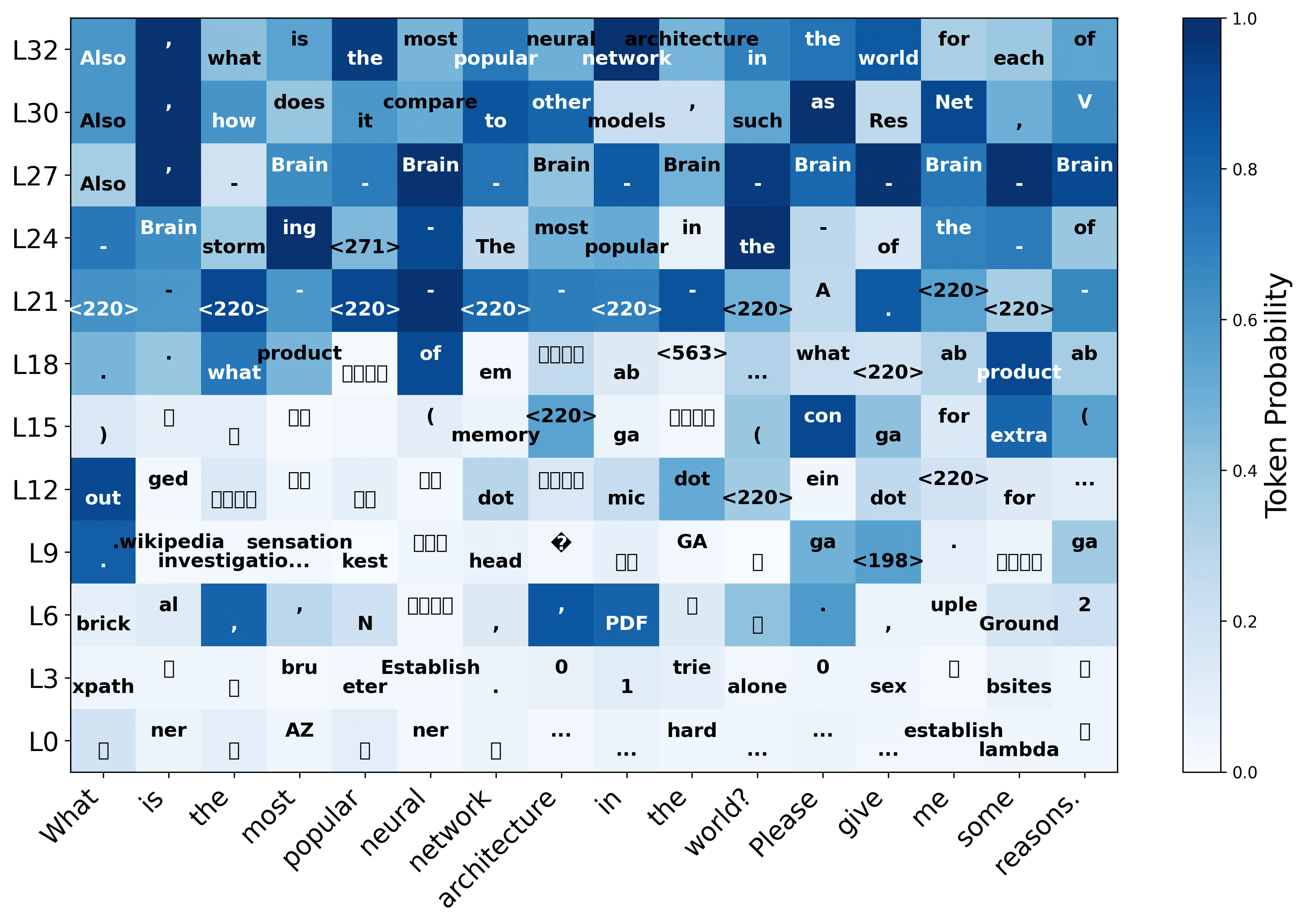}
    \includegraphics[width=0.45\linewidth]{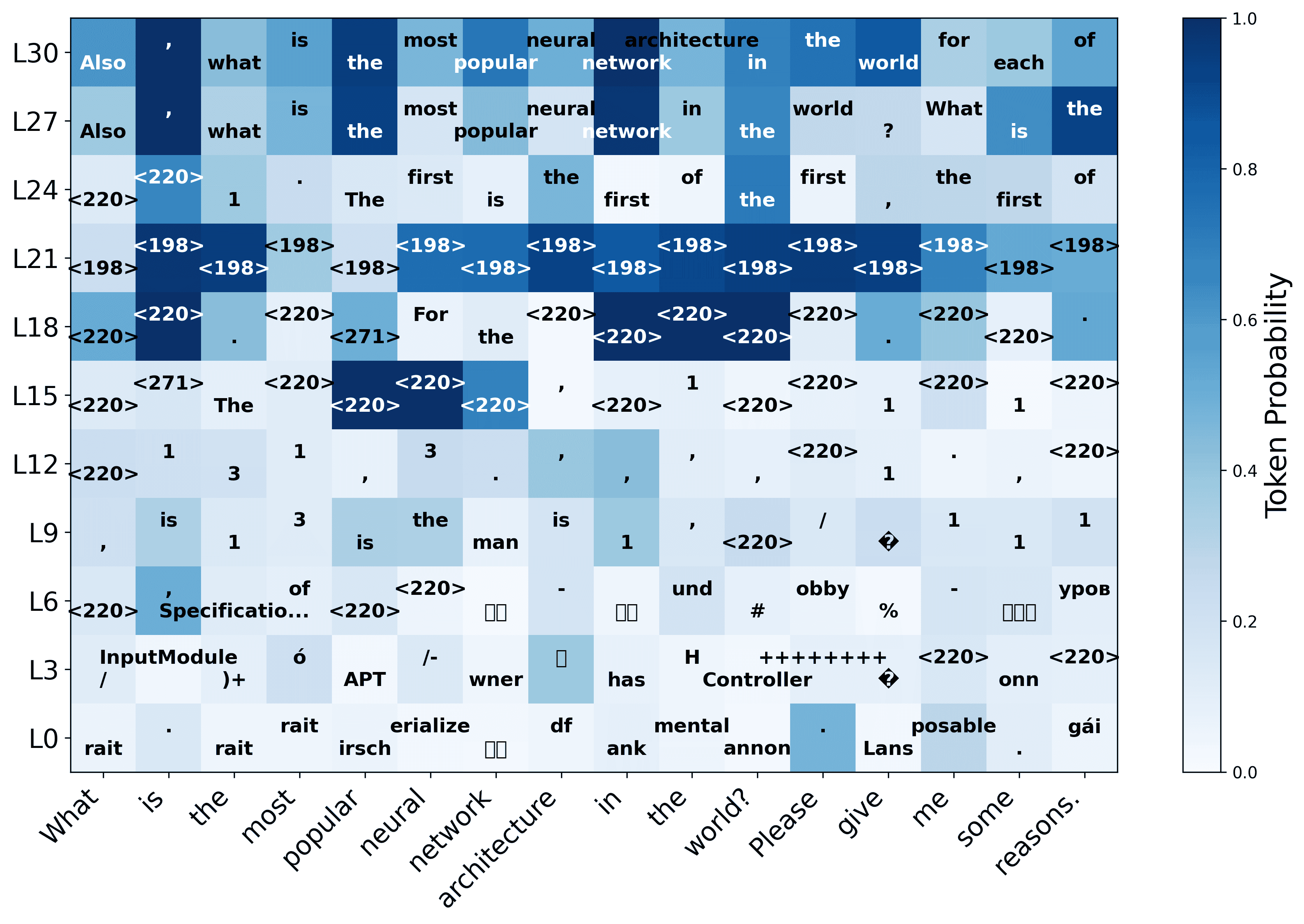}
    \caption{Semantic decoding of Qwen3-4B ($m=3$ vs. $m=5$).  
    }
    \label{fig:semantic_decoding_Qwen3-4B_m35}
\end{figure*}

\clearpage

\section{Examples of outputs after training}
\label{appen:eg_sft}
The following figures illustrates the output behavior of Qwen2.5-7B on the Math domain before and after SFT.
\textbf{Observation.} The base model demonstrates some thinking process, explicitly marking steps (e.g., ``\#\#\# Step 1'') and including natural language explanations. In contrast, the SFT model exhibits a drastic stylistic shift towards high-density reasoning, directly outputting formulas and calculations with minimal logical connectors.
\textbf{Implication.} This compact generation style explains the ``corner case'' observed in our quantitative analysis where cognitive effort ($\Omega$) becomes higher after SFT. Unlike other models that acquire ``blind confidence'' on easy textual patterns, Qwen2.5-7B concentrates its generation on difficult logic and arithmetic tokens. By skipping the low-entropy ``filler'' words, the model sustains a high level of cognitive effort during the generation of the core answer.

\begin{figure}[H]
    \centering
    \begin{tcolorbox}[title=Qwen2.5-7B trained on GSM8K evaluated on Math Level 1-2]
    PROMPT:
    
    Please solve the following math problem, and provide your process as well as the final answer. You should make sure your final answer in the \boxed{} format.
    Problem: What is the value of $(26^2 - 24^2 - 10)^2 - 10^2$?

    CORRECT ANSWER: 8000

    Qwen2.5-7B GENERATION:

    To solve the given expression \((26^2 - 24^2 - 10)^2 - 10^2\), we can break it down into steps for clarity.

\#\#\# Step 1: Simplify inside the parentheses using difference of squares

The expression \(26^2 - 24^2\) can be simplified using the difference of squares formula:
\[a^2 - b^2 = (a - b)(a + b)\]
Here, \(a = 26\) and \(b = 24\), so:
\[26^2 - 24^2 = (26 - 24)(26 + 24) = 2 \cdot 50 = 100\]

\#\#\# Step 2: Substitute back into the original expression

Now substitute \(100\) back into the expression:
\[(100 - 10)^2 - 10^2\]

\#\#\# Step 3: Simplify inside the parentheses

Simplify \(100 - 10\):
\[100 - 10 = 90\]
So the expression becomes:
\[90^2 - 10^2\]

\#\#\# Step 4: Calculate each term

Calculate \(90^2\) and \(10^2\):
\[90^2 = 8100\]
\[10^2 = 100\]

\#\#\# Step 5: Subtract the results

Subtract \(100\) from \(8100\):
\[8100 - 100 = 8000\]

\#\#\# Final Answer:

Thus, the value of the expression \((26^2 - 24^2 - 10)^2 - 10^2\) is \(\boxed{8000}\).

    Qwen2.5-7B-SFT GENERATION:

    We can use the difference of squares to simplify $26^2 - 24^2$:
$26^2 - 24^2 = (26 + 24)(26 - 24) = 50 \cdot 2 = 100$.
Then, we have $(26^2 - 24^2 - 10)^2 - 10^2 = (100 - 10)^2 - 10^2 = 90^2 - 10^2 = 8100 - 100 = \boxed{8000}$.
\#\#\#\# 8000
    \end{tcolorbox}
    \caption{Example of the result of Qwen2.5-7B trained on GSM8K evaluated on dataset Math Level 1-2.}
\end{figure}

\begin{figure*}[t]
    \centering
    \begin{tcolorbox}[title=Qwen2.5-7B trained on GSM8K evaluated on AIME]
    PROMPT:
    Please solve the following math problem, and provide your process as well as the final answer. You should make sure your final answer in the \boxed{} format.
    Problem: Five men and nine women stand equally spaced around a circle in random order. The probability that every man stands diametrically opposite a woman is $\frac{m}{n},$ where $m$ and $n$ are relatively prime positive integers. Find $m+n.$

    CORRECT ANSWER: 191

    Qwen2.5-7B GENERATION:
    To solve this problem, we need to calculate the probability that each of the five men stands diametrically opposite a woman when ten people (five men and five women) are arranged around a circle.

\#\#\# Step 1: Total Arrangements
First, we calculate the total number of ways to arrange 10 people around a circle. Since rotations of the same arrangement are considered identical, we fix one person in place and arrange the remaining 9 people. This gives us:
\[
(10-1)! = 9!
\]

\#\#\# Step 2: Favorable Arrangements
Next, we count the number of favorable arrangements where each man stands diametrically opposite a woman. 

1. **Fixing Men and Women**: We need to pair each man with a woman such that they are diametrically opposite. There are 5 men and 5 women, so we need to find the number of ways to pair them up.
   
   The number of ways to pair 5 men with 5 women is given by the number of permutations of 5 items, which is:
   \[
   5! = 120
   \]

2. **Arranging Pairs Around the Circle**: After pairing, we need to arrange these 5 pairs around the circle. Fixing one pair in place (to account for rotational symmetry), we arrange the remaining 4 pairs. This can be done in:
   \[
   (5-1)! = 4!
   \]
   ways.

3. **Arranging Within Each Pair**: For each pair, there are 2 ways to arrange the man and the woman (man-woman or woman-man). Since there are 5 pairs, the total number of ways to arrange the individuals within the pairs is: $2^5 = 32$. Combining these, the total number of favorable arrangements is:
\[
5! \times 4! \times 2^5 = 120 \times 24 \times 32 = 92160
\]

\#\#\# Step 3: Calculating the Probability
The probability that each man stands diametrically opposite a woman is the ratio of the number of favorable arrangements to the total number of arrangements:
\[
\frac{92160}{9!} = \frac{92160}{362880} = \frac{1}{4}
\]

\#\#\# Step 4: Simplifying the Fraction
The fraction \(\frac{1}{4}\) is already in its simplest form, so \(m = 1\) and \(n = 4\).

\#\#\# Step 5: Finding \(m + n\)
Finally, we find \(m + n\):
\[
m + n = 1 + 4 = 5
\]

Thus, the final answer is:
\[
\boxed{5}
\]

    Qwen2.5-7B-SFT GENERATION:

    The total number of ways for the 14 people to stand around the circle is $13! / 2$ (since rotations and reflections are considered equivalent).
There are 5 men and 9 women, so there are $\binom{9}{5} = 126$ ways to choose which 5 of the 9 women will stand diametrically opposite a man.
For each choice of 5 women, there are $5!$ ways to assign the men to the women, and $5!$ ways to arrange the remaining 4 women around the circle.
Thus, the probability is $\frac{126 \cdot 5! \cdot 5!}{13!/2} = \frac{126 \cdot 2 \cdot 5! \cdot 5!}{13!} = \frac{126 \cdot 2 \cdot 120 \cdot 120}{6227020800} = \frac{302400}{6227020800} = \frac{7}{24624}$.
Since 7 and 24624 are relatively prime, the answer is $7 + 24624 = \boxed{24631}$.
\#\#\#\# 24631

    \end{tcolorbox}
    \caption{Example of the result of Qwen2.5-7B trained on GSM8K evaluated on dataset AIME.}
\end{figure*}

\begin{figure}[H]
    \centering
    \begin{tcolorbox}[title=Example of the result of Qwen2.5-7B trained on MBPP evaluated on dataset CoNaLa]
    PROMPT:
    
Write Python code to solve the following task:

split a string into a list of words and whitespace

    Qwen2.5-7B GENERATION:
    
    Certainly! To split a string into a list of words and whitespace, you can use Python's `re` (regular expression) module, which provides powerful pattern matching capabilities. Here's a Python function that does this:

```python

import re

def split\_string\_into\_words\_and\_whitespace(s):

    \# Use regular expression to find all sequences of alphanumeric characters (words)
    
    \# and all whitespace characters.
    return re.findall(r'\textbackslash w+|\textbackslash s+', s)

\# Example usage

input\_string = "Hello,   world!  This is an example."

result = split\_string\_into\_words\_and\_whitespace(input\_string)

print(result)

```

\#\#\# Explanation:

- `\textbackslash w+`: Matches one or more word characters (alphanumeric characters plus underscore).

- `\textbackslash s+`: Matches one or more whitespace characters (spaces, tabs, newlines, etc.).

- `|`: Acts as an OR operator in regular expressions.

- `re.findall()`: Returns all non-overlapping matches of the pattern in the string as a list of strings.

\#\#\# Output:

For the input string `"Hello,   world!  This is an example."`, the output will be:

```python

['Hello', ' ', 'world', '!', ' ', 'This', ' ', 'is', ' ', 'an', ' ', 'example', '.']

```

This output includes both the words and the whitespace as separate elements in the list.

    Qwen2.5-7B-SFT GENERATION:

    import re

def split\_list(text):

  return (re.findall(r'\textbackslash S+|\textbackslash s+', text))
    
    \end{tcolorbox}
    \caption{Example of the result of Qwen2.5-7B evaluated on dataset CoNaLa.}
\end{figure}

\begin{figure}[H]
    \centering
    \begin{tcolorbox}[title=Prompt on APPS]
    PROMPT: 

    Please solve the following programming problem. Write Python code to solve it.

Joisino is about to compete in the final round of a certain programming competition.
In this contest, there are N problems, numbered 1 through N.
Joisino knows that it takes her $T_i$ seconds to solve problem i($1\leq i\leq $).
Also, there are M kinds of drinks offered to the contestants, numbered 1 through M.
If Joisino takes drink i($1\leq i\leq M$), her brain will be stimulated and the time it takes for her to solve problem $P_i$ will become $X_i$ seconds.
It does not affect the time to solve the other problems.
A contestant is allowed to take exactly one of the drinks before the start of the contest.
For each drink, Joisino wants to know how many seconds it takes her to solve all the problems if she takes that drink.
Here, assume that the time it takes her to solve all the problems is equal to the sum of the time it takes for her to solve individual problems.
Your task is to write a program to calculate it instead of her.

-----Constraints-----

 - All input values are integers.
 
 - $1\leq N\leq 100$
 
 - $1\leq T_i\leq 10^5$
 
 - $1\leq M\leq 100$
 
 - $1\leq P_i\leq N$
 
 - $1\leq X_i\leq 10^5$

-----Input-----

The input is given from Standard Input in the following format:

N

$T_1~ T_2 ~... ~T_N$

M

$P_1 ~X_1$

$P_2 ~X_2$

:

$P_M ~X_M$

-----Output-----

For each drink, calculate how many seconds it takes Joisino to solve all the problems if she takes that drink, and print the results, one per line.

-----Sample Input-----

3

2 1 4

2

1 1

2 3

-----Sample Output-----

6

9

If Joisino takes drink 1, the time it takes her to solve each problem will be 1, 1 and 4 seconds, respectively, totaling 6 seconds.

If Joisino takes drink 2, the time it takes her to solve each problem will be 2, 3 and 4 seconds, respectively, totaling 9 seconds.

Code:
    
    \end{tcolorbox}
    \caption{Example prompt of dataset APPS.}
\end{figure}

\begin{figure}[H]
    \centering
    \begin{tcolorbox}[title=Qwen2.5-7B trained on MBPP evaluated on APPS]
    Qwen2.5-7B GENRERATION: Sure, let's break down the problem and then write the Python code to solve it.

\#\#\# Problem Breakdown

1. **Input Parsing**:

   - First, we need to read the number of problems `N` and the time taken to solve each problem.
   
   - Then, we need to read the number of drinks `M` and the information about each drink (the problem it affects and the new time it takes to solve that problem).

2. **Processing**:
   
   - For each drink, we need to calculate the total time it would take Joisino to solve all the problems if she were to use that drink.
   
   - We need to update the time for the specific problem affected by the drink and keep the original times for the rest of the problems.

3. **Output**:

   - Print the total time for each drink.

\#\#\# Python Code Implementation

Let's implement the solution step-by-step:

```python

\# Read the number of problems

N = int(input())

\# Read the time taken to solve each problem

T = list(map(int, input().split()))

\# Read the number of drinks

M = int(input())

\# Process each drink

for \_ in range(M):

    \# Read the drink details: the problem it affects and the new time
    P, X = map(int, input().split())
    
    \# Calculate the new total time
    
    total\_time = sum(T)
    
    total\_time -= T[P-1]  \# Subtract the original time for the affected problem
    
    total\_time += X       \# Add the new time for the affected problem
    
    \# Print the result
    
    print(total\_time)
```

\#\#\# Explanation

1. **Reading Inputs**:

   - `N` is the number of problems.
   
   - `T` is a list where `T[i]` represents the time to solve problem `i+1`.
   
   - `M` is the number of drinks.
   
   - Each drink is represented by two numbers: `P` (the problem it affects) and `X` (the new time to solve that problem).

2. **Calculating Total Time**:
   
   - For each drink, we first calculate the total time as the sum of all problem-solving times.
   
   - We then subtract the original time for the affected problem and add the new time for that problem.

3. **Output**:
   
   - We print the new total time for each drink.
This code should correctly handle the constraints and produce the expected output.

    Qwen2.5-7B-SFT GENERATION:

    def time(N, T, M, P, X):
    
    res = []
    
    for m in range(M):
    
        tmp = [t for i, t in enumerate(T) if i + 1 != P[m]]
        
        tmp[P[m] - 1] = X[m]
        
        res.append(sum(tmp))
    
    return res
    
    \end{tcolorbox}
    \caption{Example of the result of Qwen2.5-7B trained on MBPP evaluated on dataset APPS.}
\end{figure}

\clearpage

\section{Full Results of Blind Confidence of SFT}
\label{appen:SFT}

In this section, we present the comprehensive confidence trajectory visualizations supporting the analysis of SFT discussed in Section~\ref{sec:SFT}. 
As defined in the main text, \methodname reveals a phenomenon of ``Blind Confidence.'' Visually, this is characterized by the SFT curves (blue) rising significantly earlier and staying higher in the intermediate layers compared to the base model curves (red). This indicates that the SFT process often bypasses the deep cognitive processing (the ``struggle'' phase) present in the base models, leading the model to reach high confidence saturated states prematurely. 

While this pattern is consistent across most Llama3-8B tasks, specific behaviors in Qwen2.5-7B (particularly in Math and Code) involve sharper confidence spikes. As noted in Section~\ref{sec:SFT}, this corresponds to the model generating concise, memorized patterns (short core tokens) rather than reasoning through the problem, resulting in a sharp decrease in the overall specific effort $\hat{\Omega}$.
\vspace{-.1in}
\begin{figure*}[!htbp]
    \centering
    \includegraphics[width=0.4\linewidth]{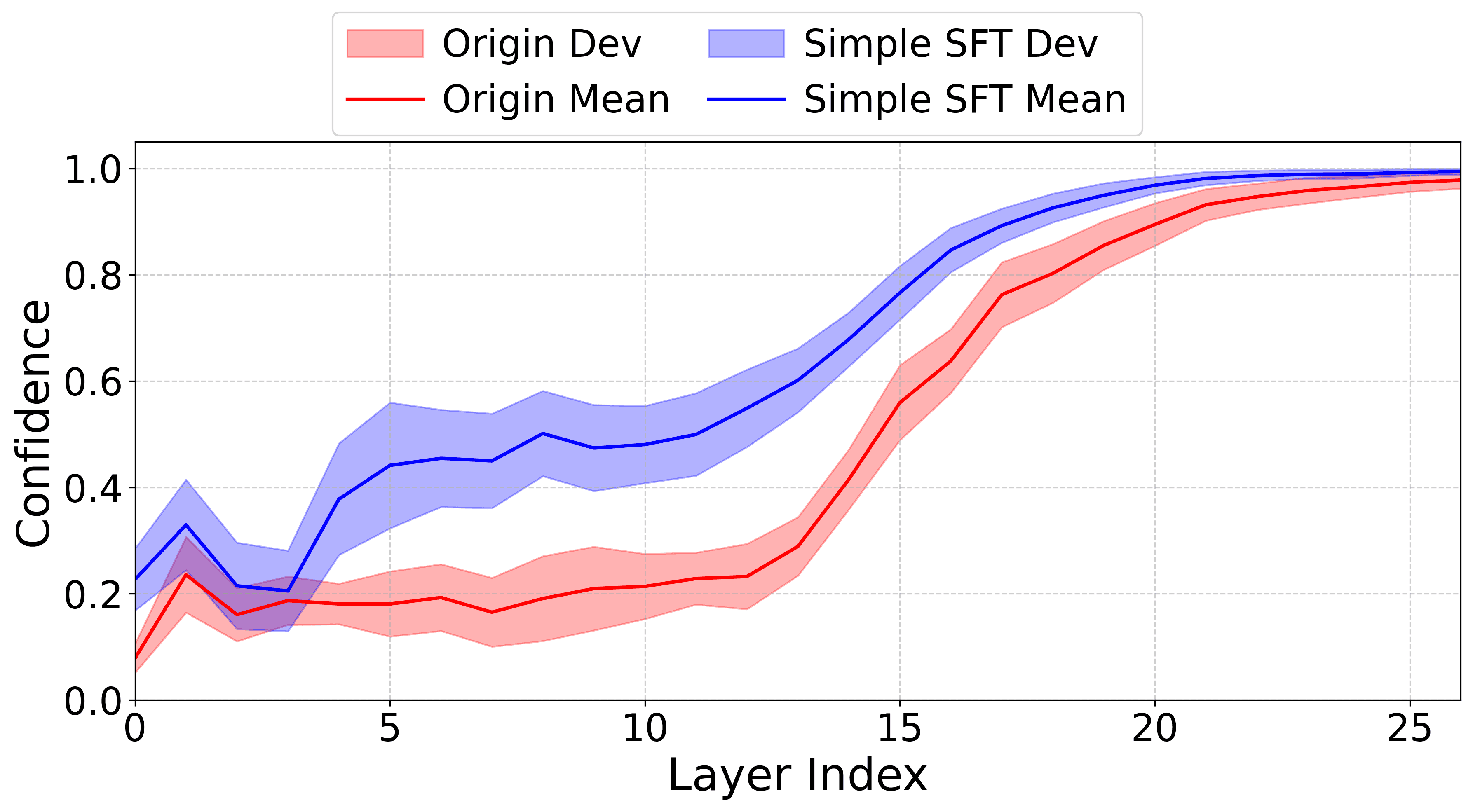}
    \includegraphics[width=0.4\linewidth]{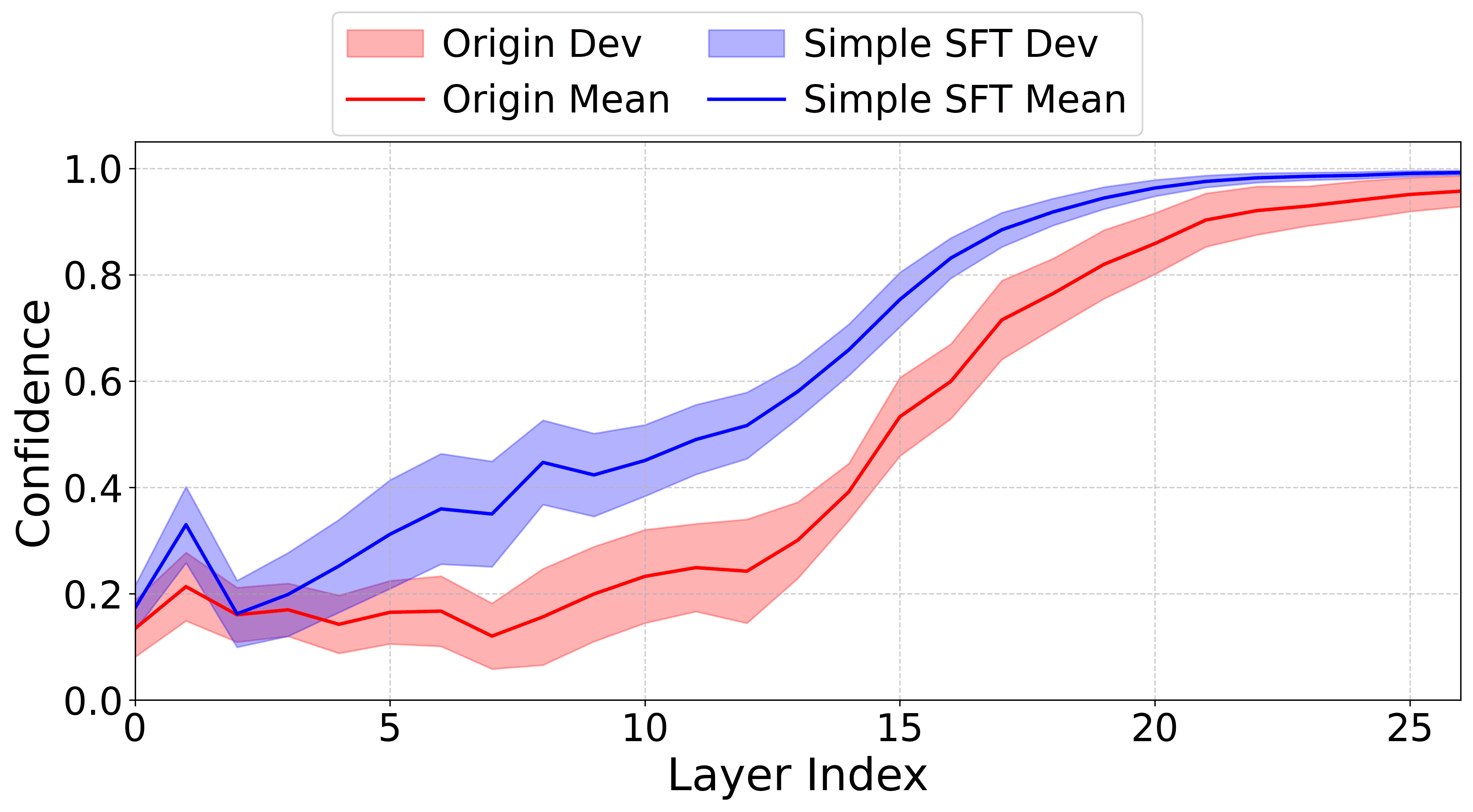}
    \caption{The confidence trajectory of Llama3-8B on the dataset MATH, (left) AIME (right) for the experiment Blind Confidence of SFT.
    }
    \label{fig:SFT_llama3-8B_math}
    \vspace{-.2in}
\end{figure*}

\begin{figure*}[!htbp]
    \centering
    \includegraphics[width=0.4\linewidth]{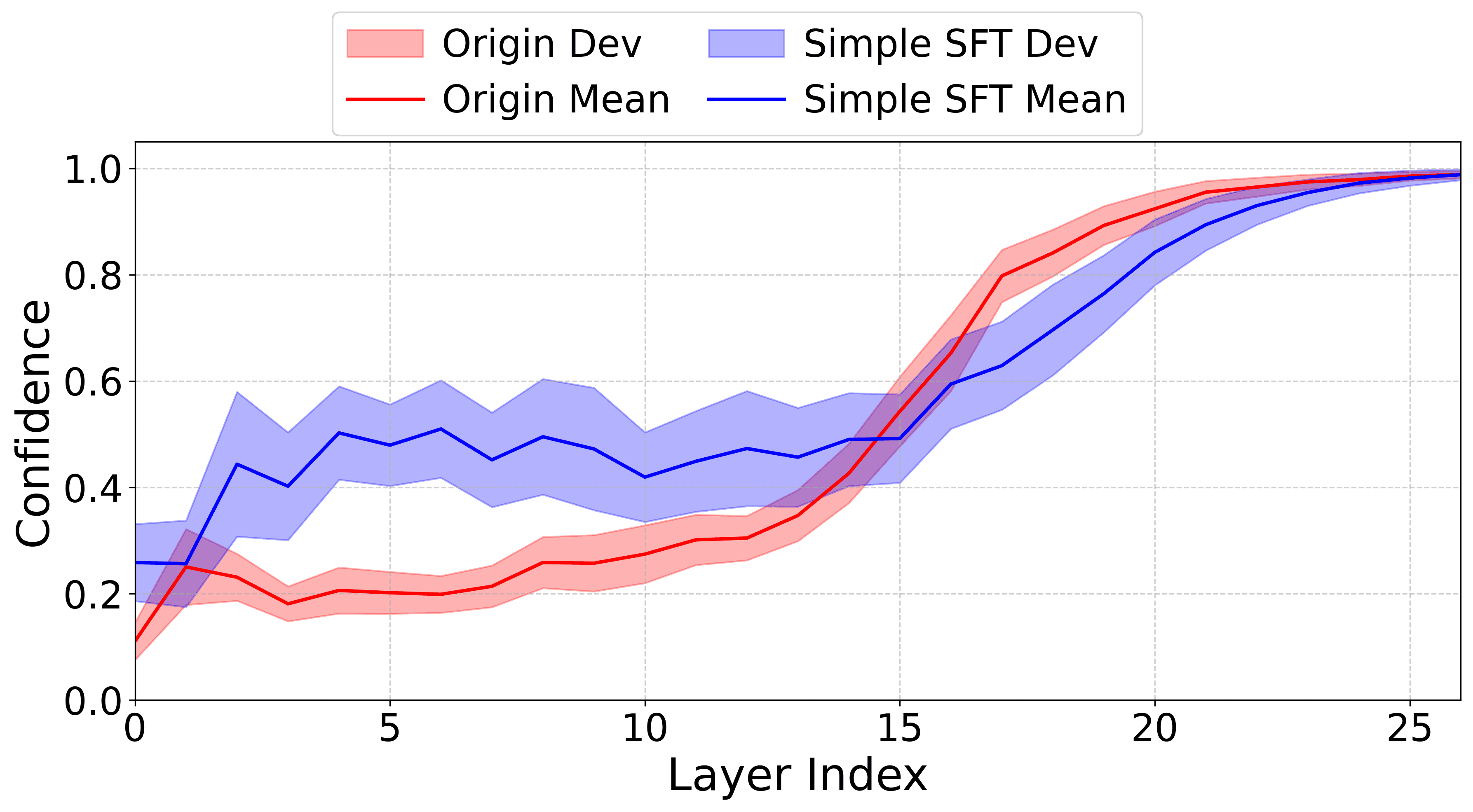}
    \includegraphics[width=0.4\linewidth]{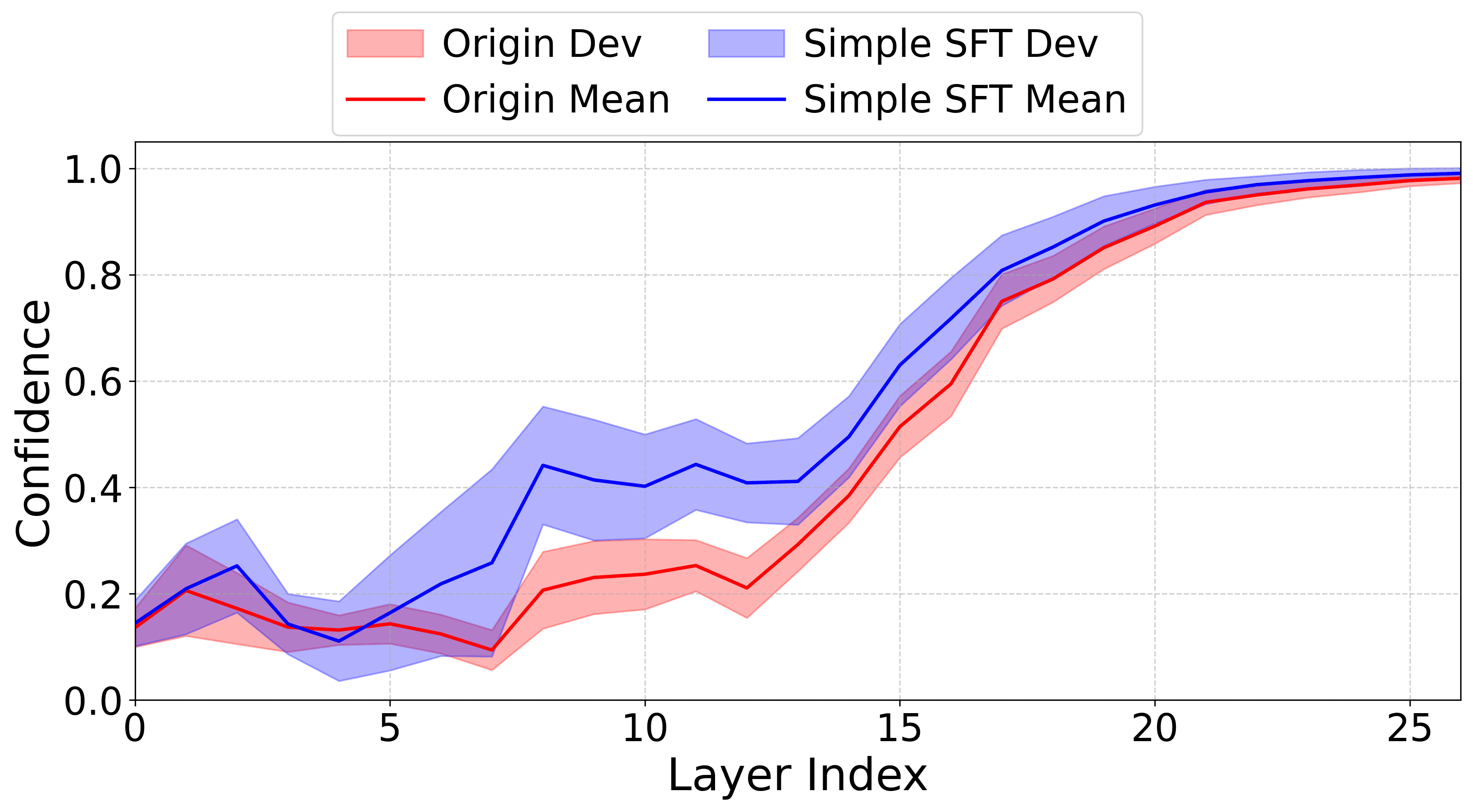}
    \caption{The confidence trajectory of Llama3-8B on the dataset CoNaLa, (left) APPS (right) for the experiment Blind Confidence of SFT.
    }
    \label{fig:SFT_llama3-8B_code}
    \vspace{-.35in}
\end{figure*}

\begin{figure*}[!htbp]
    \centering
    \includegraphics[width=0.4\linewidth]{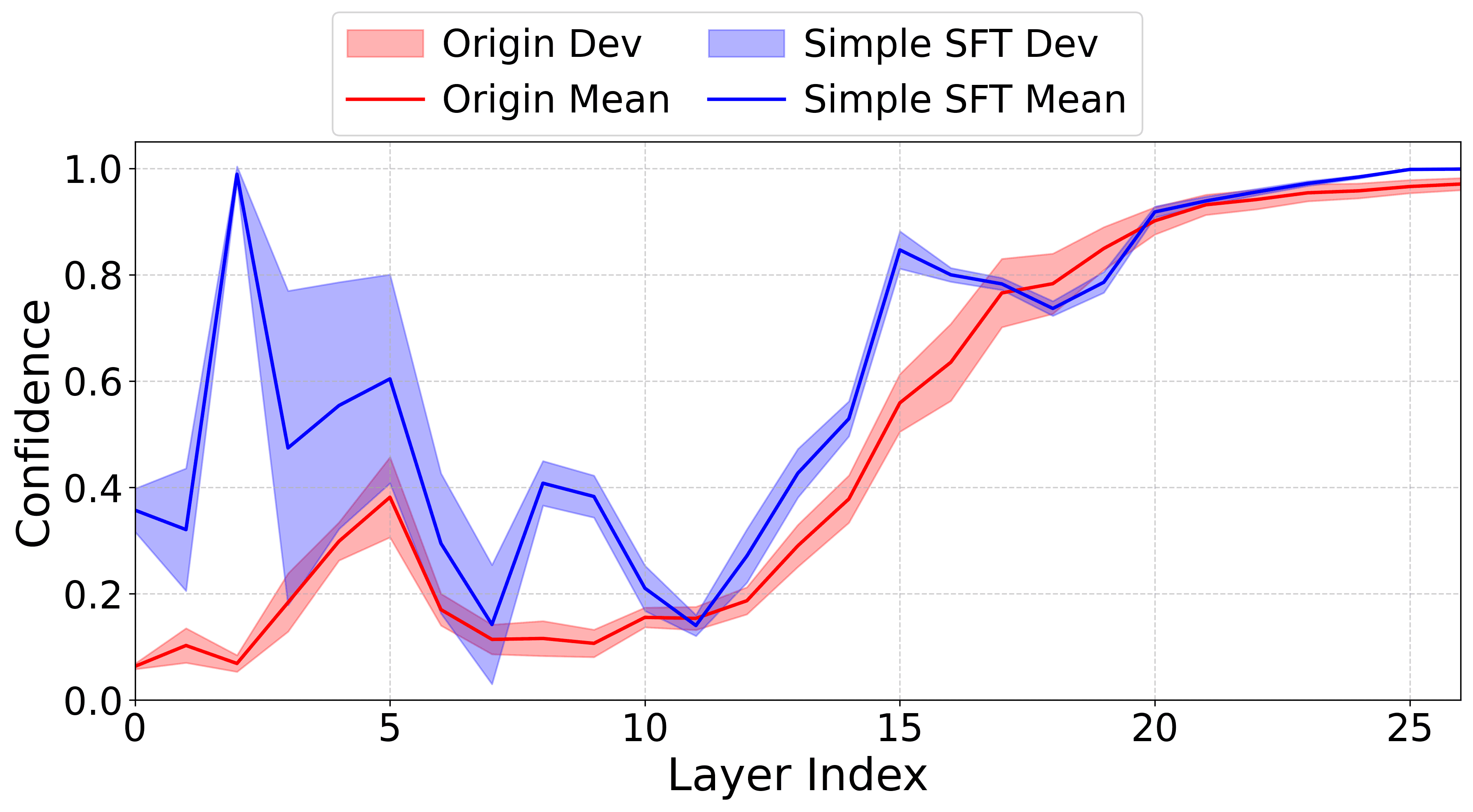}
    \includegraphics[width=0.4\linewidth]{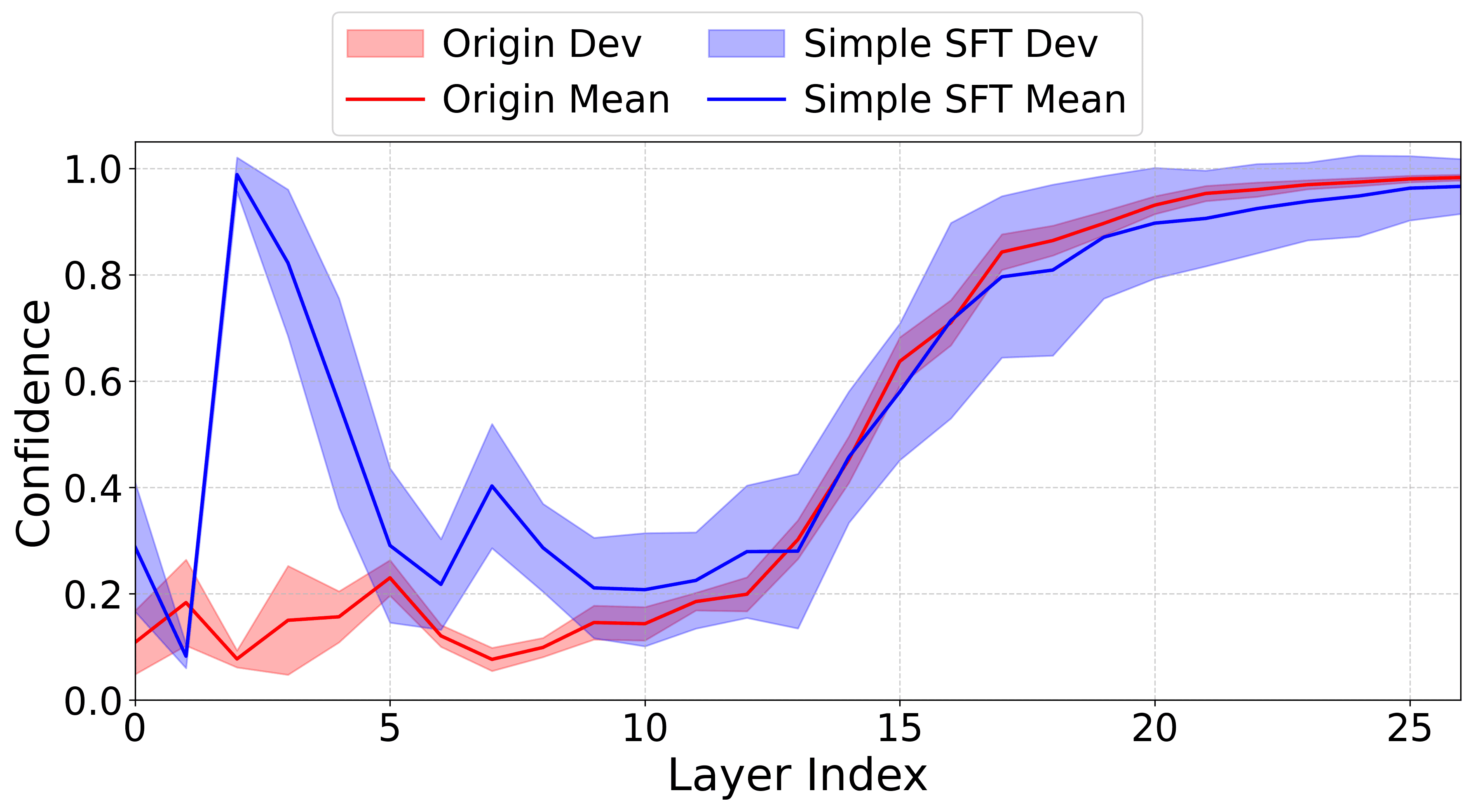}
    \caption{The confidence trajectory of Llama3-8B on the dataset Ruletaker, (left) ZebraLogic (right) for the experiment Blind Confidence of SFT.
    }
    \label{fig:SFT_llama3-8B_logic}
    \vspace{-.35in}
\end{figure*}

\begin{figure*}[!htbp]
    \centering
    \includegraphics[width=0.4\linewidth]{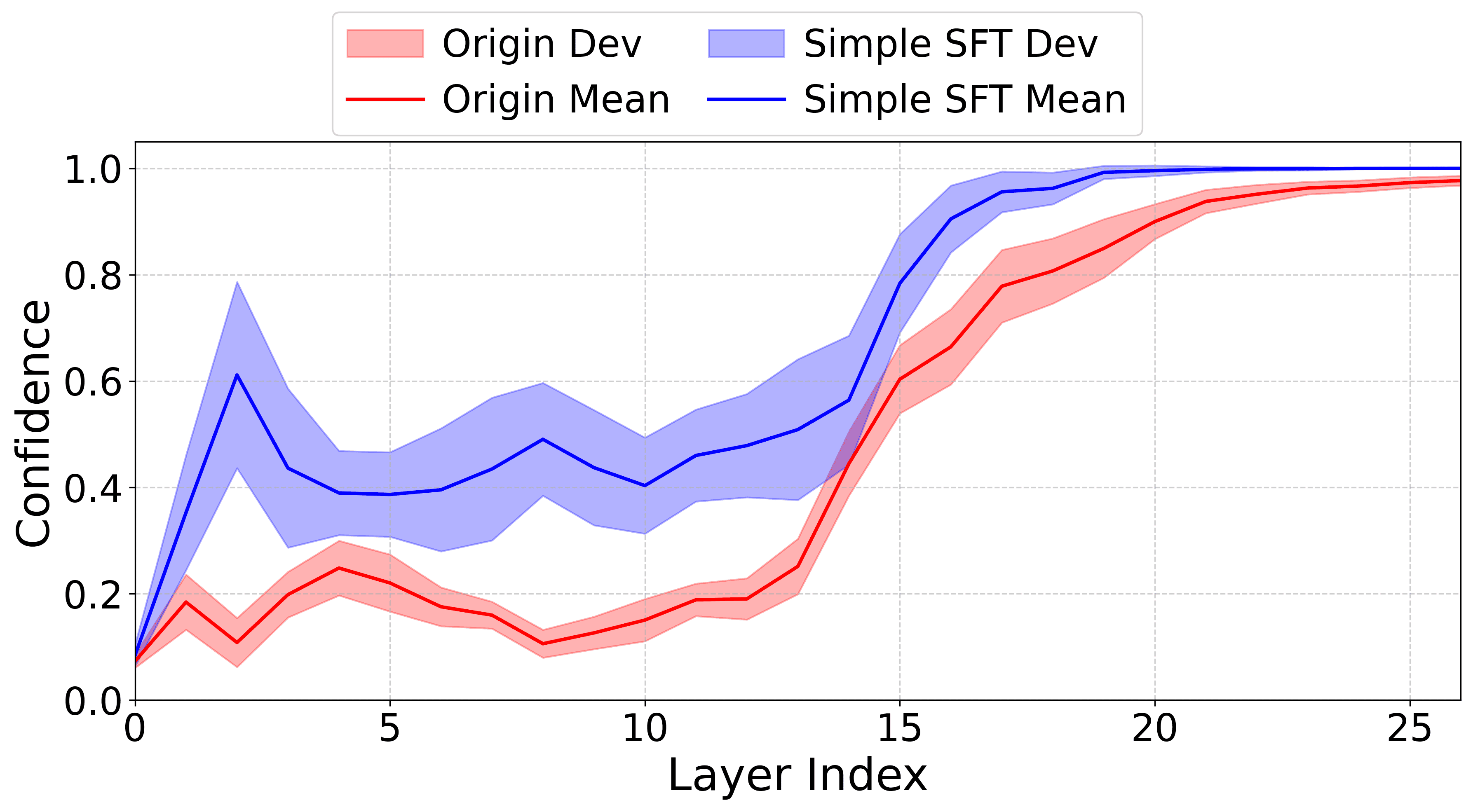}
    \includegraphics[width=0.4\linewidth]{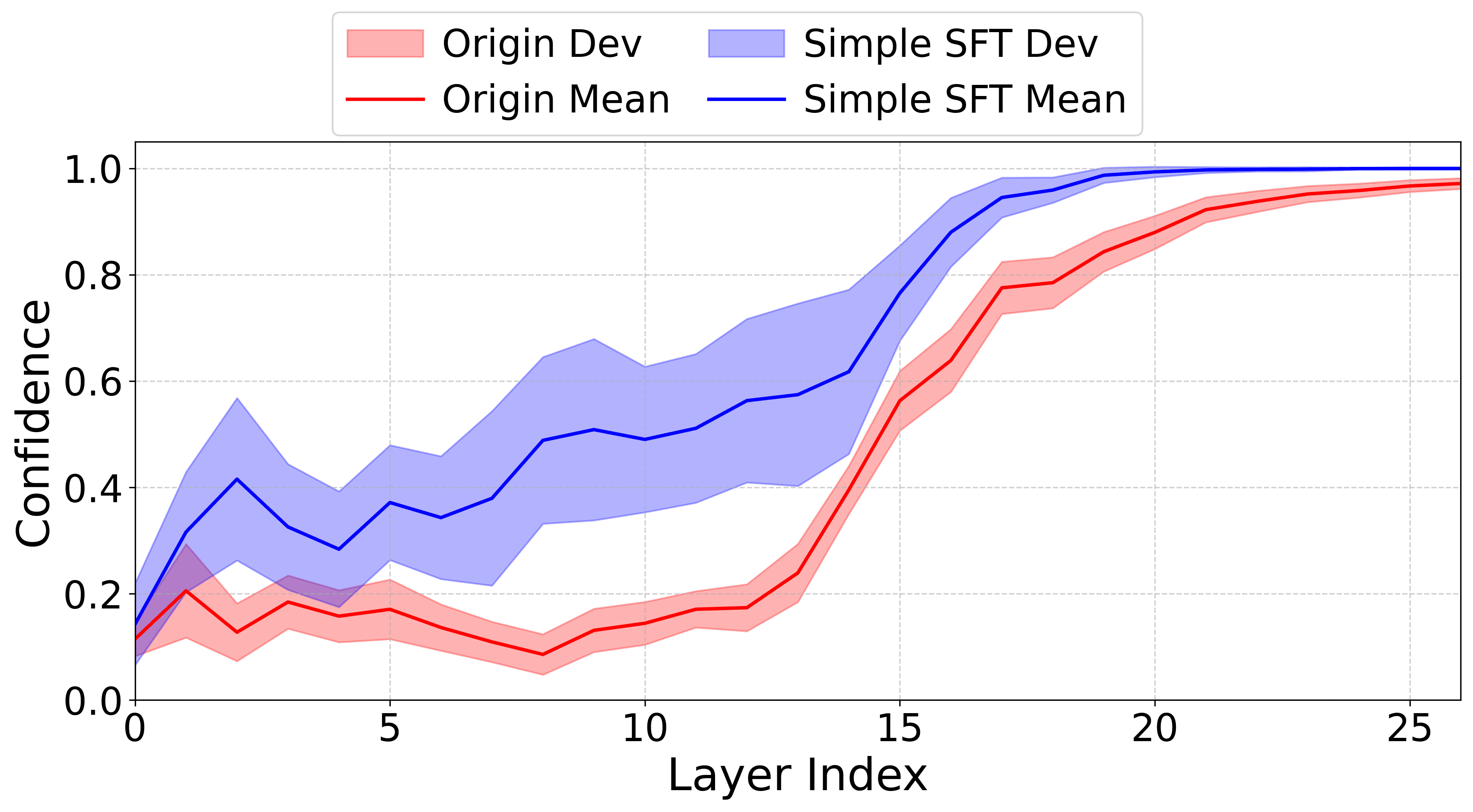}
    \caption{The confidence trajectory of Llama3-8B on the dataset ARC Easy, (left) GPQA (right) for the experiment Blind Confidence of SFT.
    }
    \label{fig:SFT_llama3-8B_general}
    \vspace{-.35in}
\end{figure*}

\begin{figure*}[!htbp]
    \centering
    \includegraphics[width=0.4\linewidth]{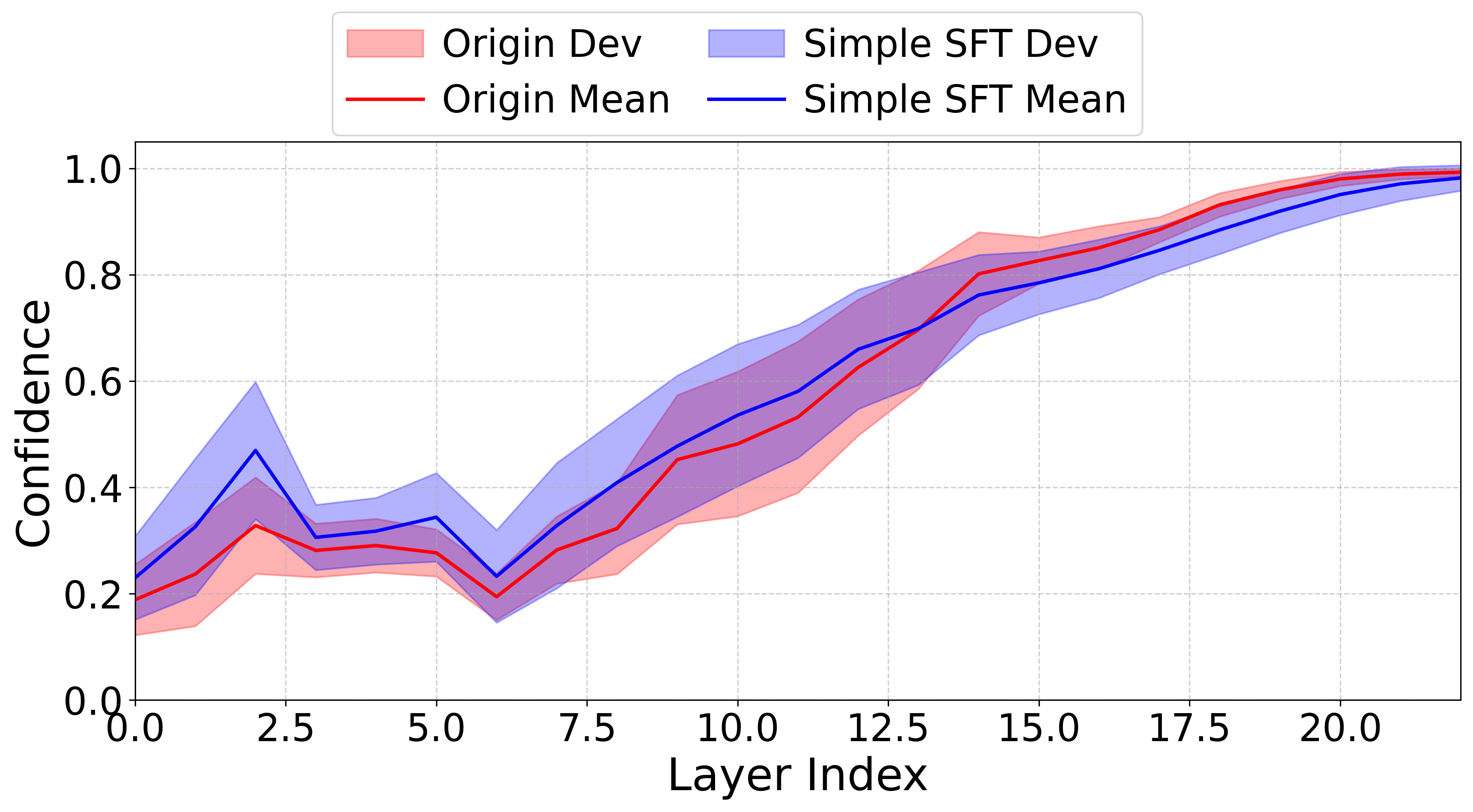}
    \includegraphics[width=0.4\linewidth]{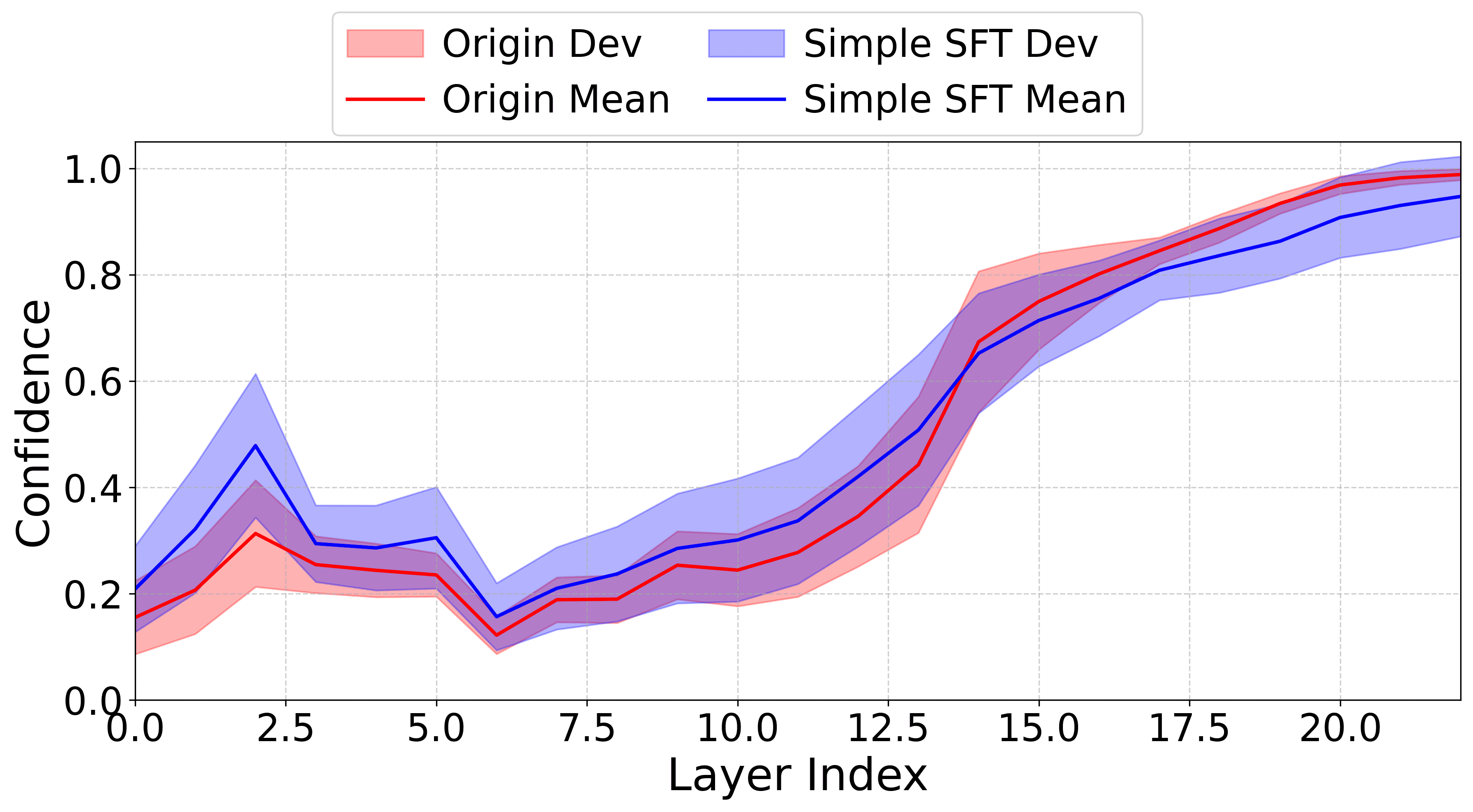}
    \caption{The confidence trajectory of Qwen2.5-7B on the dataset MATH, (left) AIME (right) for the experiment Blind Confidence of SFT.
    }
    \label{fig:SFT_qwen2.5-7B_math}
\end{figure*}

\begin{figure*}[!htbp]
    \centering
    \includegraphics[width=0.4\linewidth]{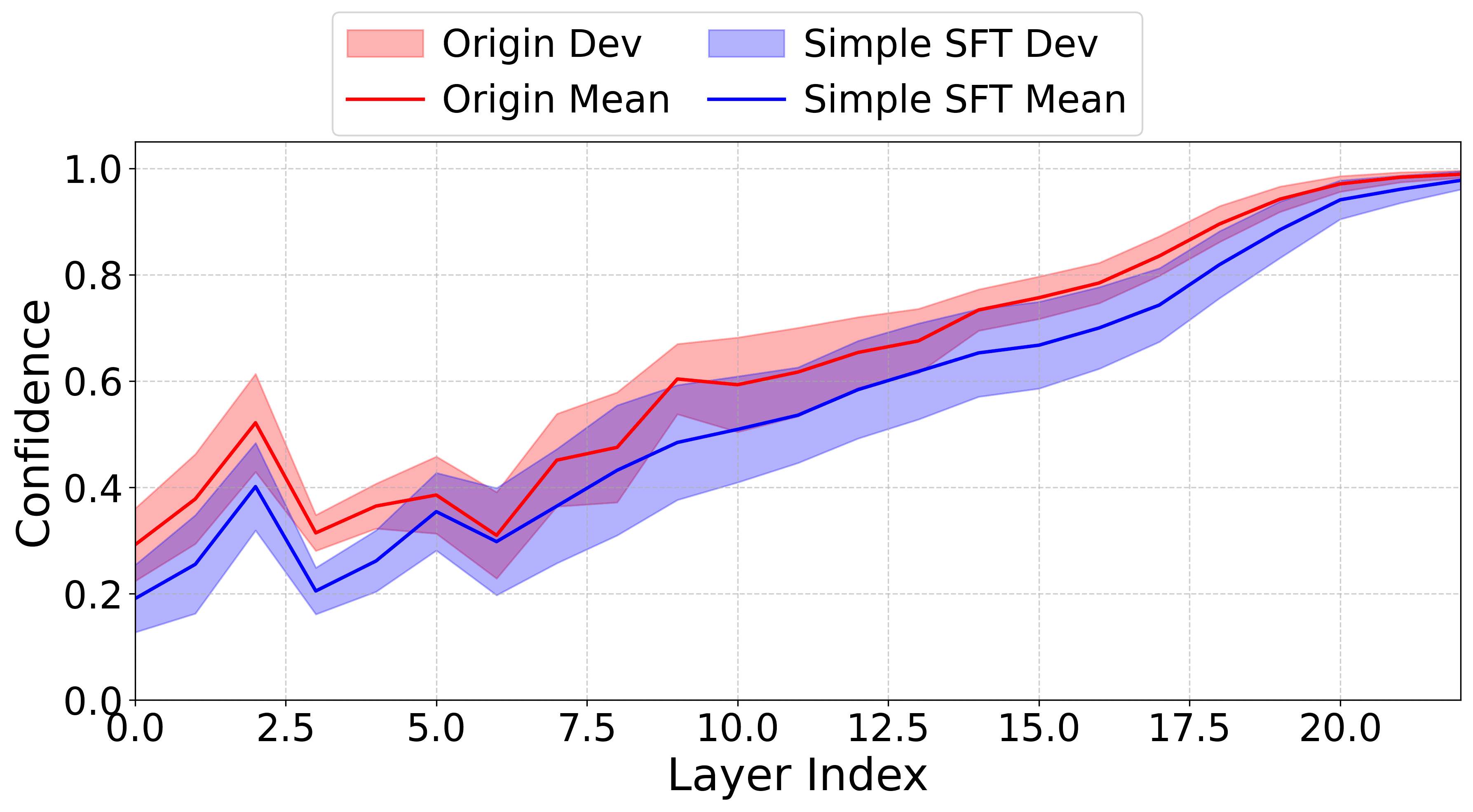}
    \includegraphics[width=0.4\linewidth]{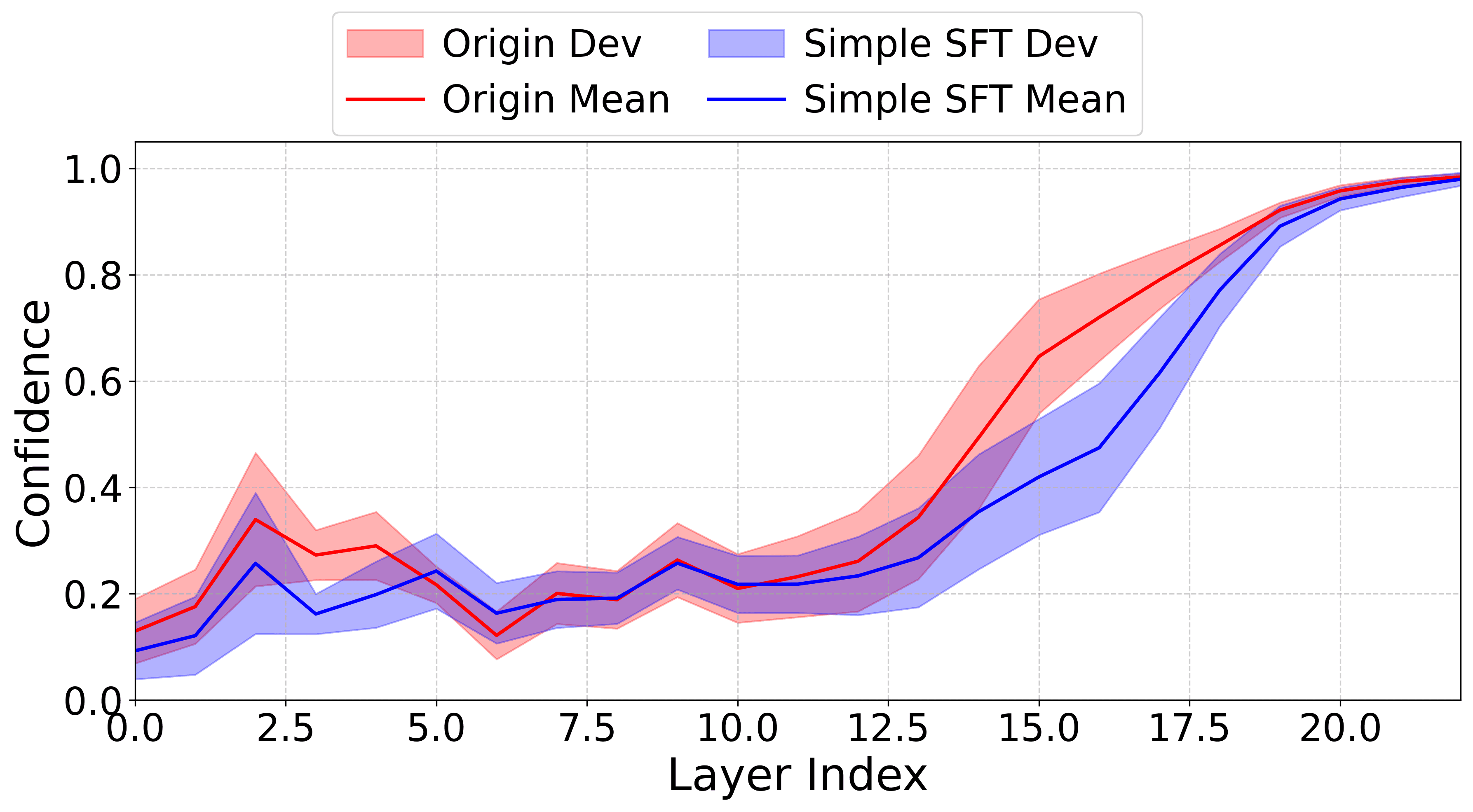}
    \caption{The confidence trajectory of Qwen2.5-7B on the dataset CoNaLa, (left) APPS (right) for the experiment Blind Confidence of SFT.
    }
    \label{fig:SFT_qwen2.5-7B_code}
\end{figure*}

\begin{figure*}[!htbp]
    \centering
    \includegraphics[width=0.4\linewidth]{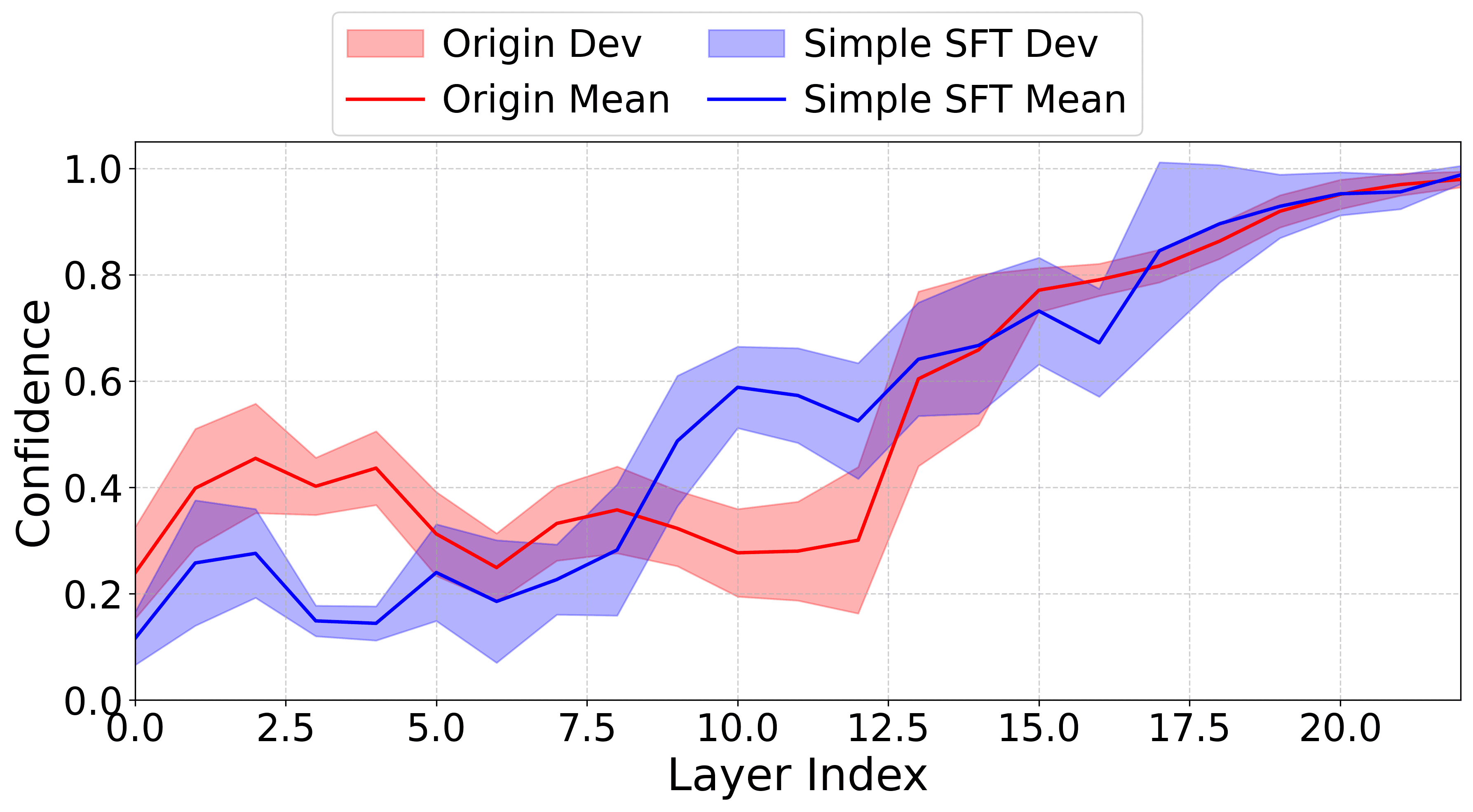}
    \includegraphics[width=0.4\linewidth]{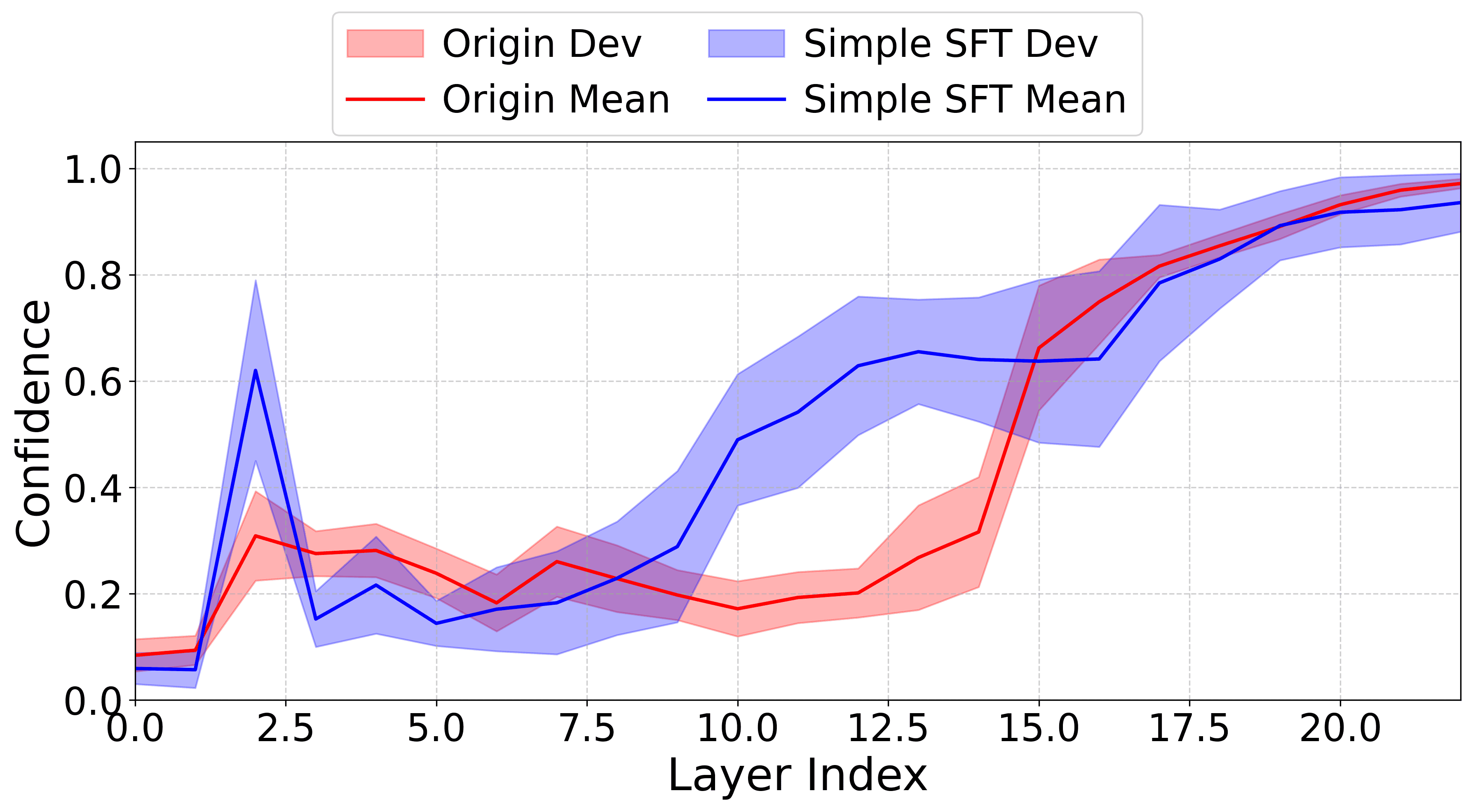}
    \caption{The confidence trajectory of Qwen2.5-7B on the dataset Ruletaker, (left) ZebraLogic (right) for the experiment Blind Confidence of SFT.
    }
    \label{fig:SFT_qwen2.5-7B_logic}
\end{figure*}

\begin{figure*}[!htbp]
    \centering
    \includegraphics[width=0.4\linewidth]{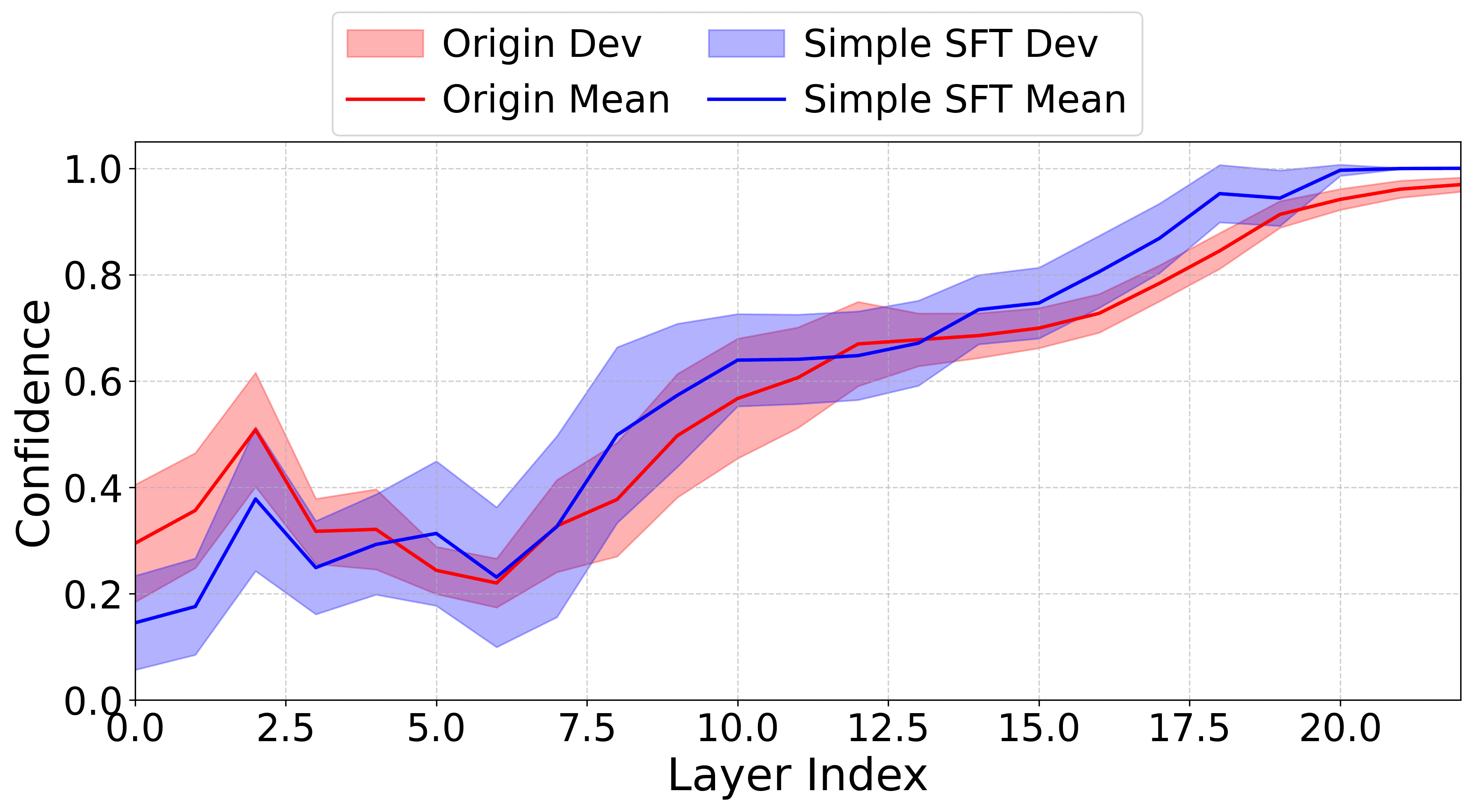}
    \includegraphics[width=0.4\linewidth]{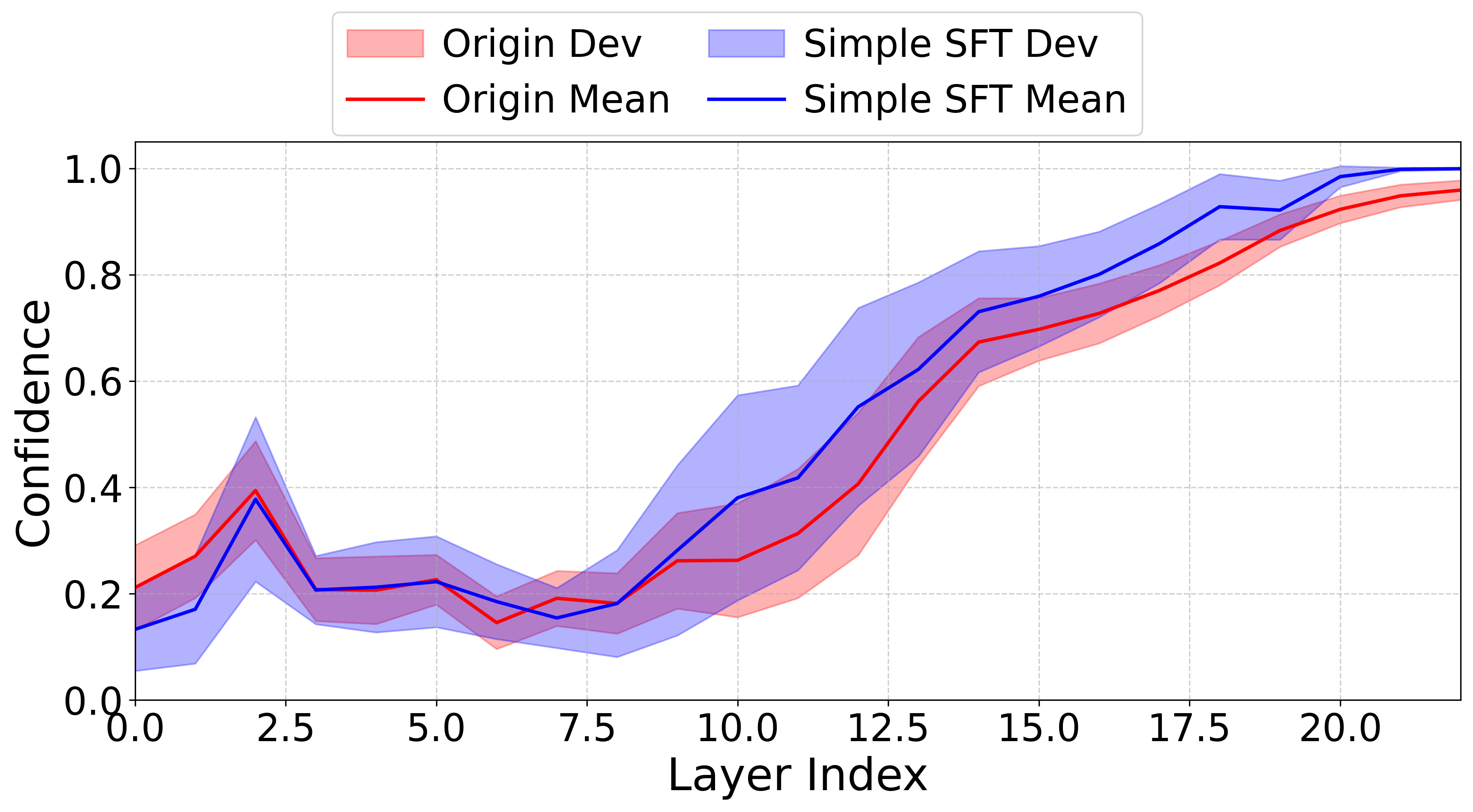}
    \caption{The confidence trajectory of Qwen2.5-7B on the dataset ARC Easy, (left) GPQA (right) for the experiment Blind Confidence of SFT.
    }
    \label{fig:SFT_qwen2.5-7B_general}
\end{figure*}